\newcites{sup}{Supplementary References}
\icmltitlerunning{Optimization with First-Order Surrogate Functions}
\def\x{{\mathbf x}}
\def\z{{\mathbf z}}
\def\1{{\mathbf 1}}
\def\HH{{\mathbf H}}
\def\hattn{{{\hat t}_n}}
\def\barz{{ \bf \bar z}}
\def\barg{{ \bar g}}
\def\hati{{\hat{\imath}}}
\def\w{{\mathbf w}}
\def\E{{\mathbb E}}
\def\PPP{{\mathbb P}}
\def\S{{\mathcal S}}
\def\H{{\mathcal H}}
\def\Real{{\mathbb R}}
\def\argmin{\operatornamewithlimits{arg\,min}}
\def\liminf{\operatornamewithlimits{lim\,inf}}
\def\st{~~\text{s.t.}~~}
\def\defin{\triangleq}
\newcommand{\proofstep}[1]{\noindent{\bfseries #1:}~\newline}
\long\def\symbolfootnote[#1]#2{\begingroup\def\thefootnote{\fnsymbol{footnote}}\footnote[#1]{#2}\endgroup}
\newtheorem{lemma}{Lemma}[section]
\newtheorem{proposition}{Proposition}[section]
\newtheorem{definition}{Definition}[section]
\def\mybullet{~\hspace*{0.5cm}$\bullet$\hspace*{0.12cm}}
\def\begincondeq{\ifthenelse{\isundefined{\supplemental}}{
\begin{displaymath}
}{
\begin{equation}
}}
\def\endcondeq{\ifthenelse{\isundefined{\supplemental}}{
\end{displaymath}
}{
\end{equation}
}}
\def\condvspace{\ifthenelse{\isundefined{\supplemental}}{\vspace*{-0.1cm}}{}}
\def\condvspacesmall{\ifthenelse{\isundefined{\supplemental}}{\vspace*{-0.05cm}}{}}
\newcommand{\myvspace}[1]{\ifthenelse{\isundefined{\supplemental}}{\vspace*{-#1cm}}{}}
\begin{document} 

\twocolumn[
\icmltitle{Optimization with First-Order Surrogate Functions}

\icmlauthor{Julien Mairal}{julien.mairal@inria.fr}
\icmladdress{INRIA LEAR Project-Team, Grenoble, France}

\icmlkeywords{nonconvex and convex optimization, boosting, proximal gradient, incremental gradient, block coordinate descent, conditional gradient}

\vskip 0.3in
]

\begin{abstract} 
In this paper, we study optimization methods consisting of iteratively
minimizing surrogates of an objective function. By proposing several
algorithmic variants and simple convergence analyses, we make two
main contributions.  First, we provide a unified viewpoint for several
first-order optimization techniques such as accelerated proximal gradient,
block coordinate descent, or Frank-Wolfe algorithms.  Second, we introduce a
new incremental scheme that experimentally matches or outperforms
state-of-the-art solvers for large-scale optimization problems typically
arising in machine learning.

\end{abstract} 

\section{Introduction}
The principle of iteratively minimizing a majorizing surrogate of an objective
function is often called \emph{majorization-minimization} \cite{lange2}. Each
iteration drives the objective function downhill, thus giving the hope of finding a
local optimum.  A large number of existing procedures can be interpreted from
this point of view. This is for instance the case of gradient-based or proximal
methods \citep[see][]{nesterov,beck,wright}, EM algorithms \citep[see][]{neal},
DC programming \citep[][]{horst}, boosting~\citep{collins,pietra}, and some
variational Bayes techniques \citep[][]{wainwright2,seeger}. The concept of
``surrogate'' has also been used successfully in the signal processing
literature about sparse optimization \citep{daubechies,gasso} and matrix factorization
\citep{lee2,mairal7}.

In this paper, we are interested in generalizing the
ma\-jo\-ri\-za\-tion-minimization principle. Our goal is both to discover new
algorithms, and to draw connections with existing methods. We focus our study
on ``first-order surrogate functions'', which consist of approximating a
possibly non-smooth objective function up to a smooth error. We present several
schemes exploiting such surrogates, and analyze their convergence properties:
asymptotic stationary point conditions for non-convex problems, and convergence
rates for convex ones.  More precisely, we successively study:

\ifthenelse{\isundefined{\supplemental}}{
\vspace*{-0.1cm}
\hspace*{-0.1cm}{\mybullet}a generic majorization-minimization approach;\\
}{\noindent\hspace*{-0.1cm}{\mybullet}a generic majorization-minimization approach;\\}
{\mybullet}a randomized block coordinate descent algorithm \citep[see][]{tseng,shalev,nesterov6,richtarik};\\
{\mybullet}an accelerated variant for convex problems inspired
by~\citet{nesterov4,beck};\\
{\mybullet}a generalization of the ``Frank-Wolfe'' conditional gradient
method~\citep[see][]{zhang3,harchaoui3,hazan,zhang4};\\
{\mybullet}a new incremental scheme, which we call MISO.\footnote{\emph{Minimization by Incremental Surrogate Optimization}.} 

We present in this work a
unified view for analyzing a large family of algorithms with simple convergence
proofs and strong guarantees. In particular, all the above optimization methods
except Frank-Wolfe have linear convergence rates for minimizing strongly convex
objective functions. This is remarkable for MISO, the new
incremental scheme derived from our framework; to the best of our knowledge,
only two recent incremental algorithms share such a property: the
\emph{stochastic average gradient} method (SAG) of \citet{leroux}, and the
\emph{stochastic dual coordinate ascent} method (SDCA) of \citet{shalev2}.
Our scheme MISO is inspired in part by these two works, but yields different
update rules than SAG or SDCA. 

After we present and analyze the different optimization schemes, we conclude
the paper with numerical experiments focusing on the scheme MISO. We show that
in most cases MISO matches or outperforms cutting-edge solvers for large-scale
$\ell_2$- and $\ell_1$-regularized logistic regression
\citep{bradley2,beck,leroux,fan2,bottou5}.

\section{Basic Optimization Scheme}\label{sec:generic}
Given a convex subset $\Theta$ of $\Real^p$ and a continuous function~$f:\Real^p \to \Real$, we are interested in solving
\begin{displaymath}
   \min_{\theta \in \Theta} f(\theta),
\end{displaymath}
where we assume, to simplify, that $f$ is bounded below. Our goal is 
to study the majorization-minimization scheme presented in
Algorithm~\ref{alg:generic_batch} and its variants. This procedure relies on the concept of
   surrogate functions, which are minimized instead of~$f$ at every
   iteration.\footnote{Note that this concept differs from the
   machine learning terminology, where a ``surrogate'' often denotes a fixed convex
  upper bound of the nonconvex ($0\!-\!1$)-loss.}
  \myvspace{0.15}
\begin{algorithm}[hbtp]
    \caption{Basic Scheme}\label{alg:generic_batch}
    \begin{algorithmic}[1]
    \INPUT $\theta_0 \in \Theta$; $N$ (number of iterations).
    \FOR{ $n=1,\ldots,N$}
    \STATE Compute a surrogate function $g_n$ of $f$ near $\theta_{n-1}$;
    \STATE Update solution:
       $\theta_n \in \argmin_{\theta \in \Theta} g_n(\theta).$ 
    \ENDFOR
    \OUTPUT $\theta_{N}$ (final estimate);
    \end{algorithmic}
\end{algorithm}

\myvspace{0.15}
For this approach to be successful, we intuitively need surrogates that 
approximate well the objective~$f$ and that are easy to minimize.
In this paper, we focus on ``first-order surrogate
functions'' defined below, which will be shown to have ``good'' theoretical
properties.

\begin{definition}[\bfseries First-Order Surrogate]~\label{def:surrogate_batch}\newline
A function $g: \Real^p \to \Real$ is a first-order surrogate of $f$
near~$\kappa$ in~$\Theta$ when the following conditions are satisfied:
\begin{itemize}
\myvspace{0.2}
\setlength\itemindent{15pt}
   \item {\bfseries Majorization}: we have $g(\theta') \geq f(\theta')$ for all
    $\theta'$ in $\argmin_{\theta \in \Theta} g(\theta)$.
    When the more general condition $g \geq f$ holds, we say that $g$ is a \textbf{majorant} function;
   \item {\bfseries Smoothness}: the approximation error $h\defin g-f$ is
   differentiable, and its gradient is $L$-Lipschitz continuous. Moreover, we
   have $h(\kappa)=0$ and $\nabla h(\kappa)=0$. 
\end{itemize}
\myvspace{0.2}
We denote by~$\S_{L}(f,\kappa)$ the set of such surrogates, and by~$\S_{L,\rho}(f,\kappa)$ the subset of $\rho$-strongly convex surrogates.
\end{definition}

First-order surrogates have a few simple properties, which form the building
block of our analyses:
\begin{lemma}[\bfseries Basic Properties - Key Lemma]~\label{lemma:basic}\newline
Let~$g$ be in $\S_{L}(f,\kappa)$ for some $\kappa$ in $\Theta$. Define $h\defin g-f$
 and let $\theta'$ be in $\argmin_{\theta \in
\Theta} g(\theta)$. Then, for all $\theta$ in~$\Theta$,
\begin{itemize}
\setlength\itemindent{15pt}
   \item $|h(\theta)| \leq \frac{L}{2}\|\theta-\kappa\|_2^2$;
   \item $f(\theta') \leq f(\theta) + \frac{L}{2}\|\theta-\kappa\|_2^2$.
\end{itemize}
Assume that $g$ is in $\S_{L,\rho}(f,\kappa)$, then, for all $\theta$ in $\Theta$, 
\begin{itemize}
\setlength\itemindent{15pt}
   \item $f(\theta') + \frac{\rho}{2}\|\theta'-\theta\|_2^2 \leq f(\theta) + \frac{L}{2}\|\theta-\kappa\|_2^2$.
\end{itemize}
\end{lemma}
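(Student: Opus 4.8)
The plan is to establish the three inequalities in order, relying on the two defining properties of a first-order surrogate: the majorization $f(\theta') \le g(\theta')$ at a minimizer, and the smoothness of $h = g-f$, which is $C^1$ with $L$-Lipschitz gradient and satisfies $h(\kappa)=0$, $\nabla h(\kappa)=0$.

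First I would prove the two-sided bound $|h(\theta)| \le \frac{L}{2}\|\theta-\kappa\|_2^2$, which is the workhorse for everything else. This is the classical descent-lemma estimate specialized to a function vanishing together with its gradient at $\kappa$. Concretely, I would write $h(\theta) - h(\kappa) = \int_0^1 \langle \nabla h(\kappa + t(\theta-\kappa)), \theta-\kappa\rangle \, dt$, subtract $\langle \nabla h(\kappa), \theta-\kappa\rangle = 0$, and bound the integrand using Cauchy--Schwarz together with the $L$-Lipschitz continuity of $\nabla h$, giving $\|\nabla h(\kappa + t(\theta-\kappa)) - \nabla h(\kappa)\| \le Lt\|\theta-\kappa\|$. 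Integrating $Lt$ over $[0,1]$ yields $h(\theta) \le \frac{L}{2}\|\theta-\kappa\|_2^2$. Since $-h$ has the same Lipschitz constant and also vanishes with its gradient at $\kappa$, the identical argument applied to $-h$ gives the matching lower bound, and the two combine into the absolute-value statement.

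Next, the second inequality follows by a short chain. Using optimality of $\theta'$ (so $g(\theta') \le g(\theta)$ for every $\theta \in \Theta$), then majorization ($f(\theta') \le g(\theta')$), then the decomposition $g = f + h$ and the bound just proved, I obtain $f(\theta') \le g(\theta') \le g(\theta) = f(\theta) + h(\theta) \le f(\theta) + \frac{L}{2}\|\theta-\kappa\|_2^2$. For the third inequality I first need the sharpened optimality estimate that a $\rho$-strongly convex $g$ minimized at $\theta'$ over the convex set $\Theta$ satisfies $g(\theta) \ge g(\theta') + \frac{\rho}{2}\|\theta-\theta'\|_2^2$ for all $\theta \in \Theta$. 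I would derive this without assuming $g$ differentiable, by writing the strong-convexity inequality along the segment $\theta_t = (1-t)\theta' + t\theta \in \Theta$, using $g(\theta_t) \ge g(\theta')$, dividing by $t$, and letting $t \to 0^+$. Inserting $f(\theta') \le g(\theta')$ on the left and $g(\theta) \le f(\theta) + \frac{L}{2}\|\theta-\kappa\|_2^2$ on the right then yields the claim.

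The only real subtlety---rather than a genuine obstacle---is keeping the differentiability assumptions straight: $h$ is $C^1$ with Lipschitz gradient, but $f$ and hence $g$ need not be smooth, so the strong-convexity step in the third item must avoid first-order conditions stated through $\nabla g$. The segment-and-limit argument above sidesteps this cleanly. Everything else is a direct combination of the descent lemma with the two surrogate properties.
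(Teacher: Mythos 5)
Your proposal is correct and follows essentially the same route as the paper's proof: the quadratic bound $|h(\theta)|\leq \frac{L}{2}\|\theta-\kappa\|_2^2$ obtained from the $L$-Lipschitz gradient of $h$ together with $h(\kappa)=0$ and $\nabla h(\kappa)=0$, the chain $f(\theta')\leq g(\theta')\leq g(\theta)=f(\theta)+h(\theta)$, and the second-order growth property $g(\theta)\geq g(\theta')+\frac{\rho}{2}\|\theta-\theta'\|_2^2$ of the $\rho$-strongly convex $g$ at its constrained minimizer. The only (immaterial) difference is that you re-derive inline the two auxiliary facts the paper invokes from its appendix---the descent-lemma estimate via integration along the segment, and the growth property via the segment-and-limit argument, which neatly avoids the directional-derivative stationarity argument the paper uses and so requires no smoothness of $g$ at all.
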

\proofatend
The first inequality is a direct applications of Lemma~\ref{lemma:upperlipschitz} applied to the function~$h$ at the point $\kappa$ when noticing that $h(\kappa)=0$ and $\nabla h(\kappa)=0$. Then, for all $\theta$ in $\Theta$, we have 
$$f(\theta') \leq g(\theta') \leq g(\theta) = f(\theta)+h(\theta),$$
and we obtain the second inequality from the first one. When $g$ is $\rho$-strongly convex, we can in addition exploit the second-order growth property of $g$ presented in Lemma~\ref{lemma:second}, and obtain
$$f(\theta') + \frac{\rho}{2}\|\theta'-\theta\|_2^2 \leq g(\theta') + \frac{\rho}{2}\|\theta-\theta'\|_2^2 \leq g(\theta)  = f(\theta)+h(\theta),$$
and the third inequality follows from the second one.
\endproofatend
The proof of this lemma is relatively simple but for space limitation reasons, all proofs
in this paper are provided as supplemental material. With Lemma~\ref{lemma:basic} in hand, we now study the
properties of Algorithm~\ref{alg:generic_batch}.  

\subsection{Convergence Analysis}
For general non-convex
problems, proving convergence to a global (or local) minimum is out of reach,
and classical analyses study instead asymptotic stationary point
conditions~\citep[see, e.g.,][]{bertsekas}.
To do so, we make the mild assumption that for all $\theta,\theta'$ in $\Theta$, the
directional derivative $\nabla f(\theta,\theta'-\theta)$ of $f$ at~$\theta$ in the direction
$\theta'-\theta$ exists.
A classical necessary first-order condition~\citep[see][]{borwein}
for~$\theta$ to be a local minimum of $f$ is to have $\nabla
f(\theta,\theta'-\theta)$ non-negative for all~$\theta'$ in~$\Theta$. 
This naturally leads us to consider the following asymptotic condition to assess
the quality of a sequence $(\theta_n)_{n \geq 0}$ for non-convex problems:
\begin{definition}[\bfseries Asymptotic Stationary Point]
A sequence $(\theta_n)_{n \geq 0}$ satisfies an asymptotic stationary point condition if
\begin{displaymath}
  \liminf_{n \to +\infty} \inf_{\theta \in \Theta} \frac{\nabla
  f(\theta_{n},\theta-\theta_n)}{ \|\theta-\theta_n\|_2} \geq 0.
\end{displaymath}
In particular, if $f$ is differentiable on $\Real^p$
and $\Theta= \Real^p$, this condition implies $\lim_{n \to +\infty} \|\nabla
f(\theta_n)\|_2 = 0$.
\end{definition}

Building upon this definition, we now give a first convergence result about Algorithm~\ref{alg:generic_batch}.
\begin{proposition}[\bfseries Non-Convex Analysis]~\label{prop:conv1}\newline
Assume that the surrogates $g_n$ from Algorithm~\ref{alg:generic_batch} are in $\S_L(f,\theta_{n-1})$ and
are majorant or strongly convex.  Then, $\!(f(\theta_n))_{n \geq 0}$ monotonically decreases and~$(\theta_n)_{n\geq 0}$ satisfies an asymptotic stationary point condition.
\end{proposition}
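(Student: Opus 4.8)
The plan is to establish the two assertions in turn, using the Key Lemma~\ref{lemma:basic} throughout. For the monotonicity, I would apply the second inequality of Lemma~\ref{lemma:basic} to $g_n \in \S_L(f,\theta_{n-1})$ with the particular choice $\theta = \theta_{n-1}$, which gives $f(\theta_n) \leq f(\theta_{n-1}) + \frac{L}{2}\|\theta_{n-1}-\theta_{n-1}\|_2^2 = f(\theta_{n-1})$; hence $(f(\theta_n))_{n\geq 0}$ is non-increasing. Since $f$ is bounded below it converges, and telescoping the successive decrements shows that the series $\sum_{n\geq 1}\bigl(f(\theta_{n-1})-f(\theta_n)\bigr)$ is finite, so in particular $f(\theta_{n-1})-f(\theta_n)\to 0$. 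This last fact is the engine for the stationarity part.

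Next I would bound the directional derivative of $f$ at the iterates. Because $\theta_n$ minimizes $g_n$ over the convex set $\Theta$, differentiating $g_n$ along the feasible segment $[\theta_n,\theta]$ gives the first-order optimality condition $\nabla g_n(\theta_n,\theta-\theta_n)\geq 0$ for all $\theta \in \Theta$. Writing $g_n = f + h_n$ with $h_n$ differentiable, the directional derivative splits as $\nabla g_n(\theta_n,\theta-\theta_n) = \nabla f(\theta_n,\theta-\theta_n) + \nabla h_n(\theta_n)^\top(\theta-\theta_n)$, so by Cauchy--Schwarz
\[
  \frac{\nabla f(\theta_n,\theta-\theta_n)}{\|\theta-\theta_n\|_2} \;\geq\; -\|\nabla h_n(\theta_n)\|_2 \qquad \text{for all } \theta \in \Theta .
\]
Taking the infimum over $\theta$, it therefore suffices to prove $\|\nabla h_n(\theta_n)\|_2 \to 0$ to conclude that the liminf of the asymptotic stationarity quantity is non-negative.

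It remains to control $\|\nabla h_n(\theta_n)\|_2$, and this is where the two hypotheses enter and where the only genuine subtlety lies. In the $\rho$-strongly convex case, the third inequality of Lemma~\ref{lemma:basic} with $\theta=\theta_{n-1}$ yields $\frac{\rho}{2}\|\theta_n-\theta_{n-1}\|_2^2 \leq f(\theta_{n-1})-f(\theta_n)\to 0$, so $\|\theta_n-\theta_{n-1}\|_2\to 0$; since $\nabla h_n$ is $L$-Lipschitz and $\nabla h_n(\theta_{n-1})=0$, this forces $\|\nabla h_n(\theta_n)\|_2 \leq L\|\theta_n-\theta_{n-1}\|_2 \to 0$. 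The majorant case must be handled differently, since there the iterates need not become close. Here $h_n \geq 0$ everywhere with $L$-Lipschitz gradient, so the standard descent-lemma argument (minimize the quadratic upper bound of $h_n$ and use $h_n \geq 0$) gives $\|\nabla h_n(\theta_n)\|_2^2 \leq 2L\,h_n(\theta_n)$; combining $h_n(\theta_n)\geq 0$ with $g_n(\theta_n)\leq g_n(\theta_{n-1}) = f(\theta_{n-1})$ shows $h_n(\theta_n) \leq f(\theta_{n-1})-f(\theta_n)\to 0$, whence again $\|\nabla h_n(\theta_n)\|_2 \to 0$.

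In both cases the displayed bound then gives $\liminf_{n} \inf_{\theta\in\Theta} \nabla f(\theta_n,\theta-\theta_n)/\|\theta-\theta_n\|_2 \geq 0$, which is the asymptotic stationary point condition. I expect the main obstacle to be precisely the majorant case: the quadratic-decrease estimate that trivializes the strongly convex analysis is unavailable, so one cannot route through $\|\theta_n-\theta_{n-1}\|_2 \to 0$ and must instead exploit the global non-negativity of $h_n$ to pass from the smallness of the \emph{value} $h_n(\theta_n)$ to the smallness of the \emph{gradient norm} $\|\nabla h_n(\theta_n)\|_2$. The remaining points (existence and additivity of directional derivatives, and the feasible-segment optimality condition) are guaranteed by the standing assumption that $\nabla f(\theta,\theta'-\theta)$ exists and by convexity of $\Theta$.
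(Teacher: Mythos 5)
Your proof is correct and takes essentially the same route as the paper's: monotone decrease via $g_n(\theta_n) \le g_n(\theta_{n-1}) = f(\theta_{n-1})$, the same two-case control of $\|\nabla h_n(\theta_n)\|_2$ (descent lemma on $h_n$ combined with $h_n \ge 0$ in the majorant case; $\|\theta_n - \theta_{n-1}\|_2 \to 0$ via the strong-convexity inequality of Lemma~\ref{lemma:basic} in the other case), followed by the directional-derivative decomposition $\nabla f(\theta_n,\theta-\theta_n) = \nabla g_n(\theta_n,\theta-\theta_n) - \nabla h_n(\theta_n)^\top(\theta-\theta_n)$ and Cauchy--Schwarz. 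The only cosmetic difference is that you deduce $h_n(\theta_n) \to 0$ directly from the vanishing decrements $f(\theta_{n-1}) - f(\theta_n)$, where the paper sums the series $\sum_n h_n(\theta_n) \le f(\theta_0) - f^\star$; these are equivalent.
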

\proofatend
The fact that $(f(\theta_n))_{n \geq 0}$ is non-increasing and convergent because bounded below is clear:
 \begin{displaymath}
    f(\theta_n) \leq g_n(\theta_n) \leq g_n(\theta_{n-1}) = f(\theta_{n-1}),
 \end{displaymath}
 where the first inequality and the last equality come from
 Definition~\ref{def:surrogate_batch}. The second inequality comes from the
 definition of $\theta_n$.
 Denote by $f^\star$ the limit of the sequence $(f(\theta_n))_{n \geq 0}$ and by $h_n\defin g_n-f$ the approximation error
 functions. The latter are differentiable and their gradient are $L$-Lipschitz continuous
 according to the definitions of the surrogate functions. Then,
 \begin{displaymath}
     f(\theta_n) + h_n(\theta_n) = g_n(\theta_n) \leq f(\theta_{n-1}),
 \end{displaymath}
 and thus, by summing over $n$,
 \begin{displaymath}
    \sum_{n=1}^{\infty} h_n(\theta_n) \leq f(\theta_0) - f^\star,
 \end{displaymath}
 and the non-negative sequence $(h_n(\theta_n))_{n \geq 0}$ necessarily converges to zero.

We have then two possibilities (according to the assumptions made in the proposition):
\begin{itemize}
\setlength\itemindent{25pt}
   \item if the $g_n$'s are majorant surrogates, plugging $\theta' = \theta_{n}-\frac{1}{L}\nabla h_n(\theta_{n})$ in Lemma~\ref{lemma:upperlipschitz} yields
   \begin{displaymath}
   h_n(\theta') \leq h_n(\theta_n) - \frac{1}{2L}\|\nabla h_n(\theta_n)\|_2^2,
   \end{displaymath}
   and therefore,
   \begin{displaymath}
   \|\nabla h_n(\theta_n)\|_2^2 \leq 2L (h_n(\theta_n)-h_n(\theta')) \leq 2L h_n(\theta_n) \underset{n \to +\infty}{\longrightarrow} 0,
   \end{displaymath}
   where we use the fact that $h_n(\theta') \geq 0$ because $g_n$ is majorant.
   \item otherwise, the functions $g_n$ are $\rho$-strongly convex and we can exploit some inequalities of Lemma~\ref{lemma:basic}. Notably,
     $$ \frac{\rho}{2}\|\theta_n-\theta_{n-1}\|_2^2  \leq f(\theta_{n-1})-f(\theta_n).$$
     Summing this inequality over $n$ yields that $\|\theta_n-\theta_{n-1}\|_2^2$ necessarily converges to zero, and
     $$
     \|\nabla h_n(\theta_n)\|_2 =    \|\nabla h_n(\theta_n)-\nabla h_n(\theta_{n-1})\|_2 \leq L\|\theta_n-\theta_{n-1}\|_2 \underset{n \to +\infty}{\longrightarrow} 0,
     $$
     since $\nabla h_n(\theta_{n-1})=0$ according to Definition~\ref{def:surrogate_batch}.
     \end{itemize}
     We can now compute directional derivatives of $f$ at a point $\theta_n$ and a direction $\theta-\theta_n$, where $\theta$ is in~$\Theta$:
     \begin{displaymath}
     \nabla f(\theta_{n},\theta-\theta_n) = \nabla g_n(\theta_{n},\theta-\theta_n) - \nabla h_n(\theta_n)^\top (\theta-\theta_n).
     \end{displaymath}
     Note that $\theta_n$ minimizes $g_n$ on $\Theta$ and therefore $\nabla g_n(\theta_{n},\theta-\theta_n) \geq 0$. Therefore, 
     \begin{displaymath}
     \nabla f(\theta_{n},\theta-\theta_n)  \geq - \|\nabla h_n(\theta_n)\|_2 \|\theta-\theta_n\|_2,
     \end{displaymath}
     using Cauchy-Schwarz inequality. Then,
     \begin{displaymath}
     \liminf_{n \to +\infty} \inf_{\theta \in \Theta} \frac{\nabla f(\theta_{n},\theta-\theta_n)}{ \|\theta-\theta_n\|_2} \geq - \lim_{n\to+\infty} \|\nabla h_n(\theta_n)\|_2 = 0.
     \end{displaymath}
\endproofatend

Convergence results for non-convex problems are by nature weak. This is not the case
when~$f$ is convex. In the next proposition, we obtain convergence rates by following a proof technique from~\citet{nesterov} originally designed for proximal gradient methods.
\begin{proposition}[\bfseries Convex Analysis for $\S_L(f,\kappa)$]\label{prop:conv2}
Assume that $f$ is convex and that for some $R >0$,
\begin{equation}
\|\theta-\theta^\star\|_2 \leq R ~~~\text{for all}~ \theta \in \Theta \st f(\theta) \leq f(\theta_0),\label{eq:bounded}
\end{equation}
where $\theta^\star$ is a minimizer of $f$ on $\Theta$. 
When the surrogate~$g_n$ in Algorithm~\ref{alg:generic_batch} are in $\S_{L}(f,\theta_{n-1})$, we have
     $$f(\theta_n)-f^\star \leq \frac{2LR^2}{n+2}~~~\text{for all}~n \geq 1,$$
     where $f^\star \defin f(\theta^\star)$. 
Assume now that $f$ is $\mu$-strongly convex. Regardless of condition~(\ref{eq:bounded}), we have
\begin{displaymath}
f(\theta_n) - f^\star  \leq \beta^n( f(\theta_0)-f^\star)~~\text{for all}~n \geq 1,
\end{displaymath}
where $\beta \defin \frac{L}{\mu}$ if $\mu >2L$ or $\beta \defin \left(1-\frac{\mu}{4L}\right)$ otherwise.
\end{proposition}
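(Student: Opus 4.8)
The plan is to reduce both statements to a single per-iteration inequality and then run the appropriate recursion. Throughout write $r_n \defin f(\theta_n)-f^\star$. The starting point is the second inequality of the Key Lemma (\refLemma{lemma:basic}), which with $\kappa=\theta_{n-1}$ and $\theta'=\theta_n$ reads $f(\theta_n)\le f(\theta)+\frac{L}{2}\|\theta-\theta_{n-1}\|_2^2$ for every $\theta\in\Theta$. Following Nesterov, I would not plug in $\theta^\star$ directly but rather the segment point $\theta=(1-\alpha)\theta_{n-1}+\alpha\theta^\star$ for a free parameter $\alpha\in[0,1]$; this point lies in $\Theta$ by convexity, its squared distance to $\theta_{n-1}$ equals $\alpha^2\|\theta_{n-1}-\theta^\star\|_2^2$, and convexity of $f$ gives $f(\theta)\le(1-\alpha)f(\theta_{n-1})+\alpha f^\star$. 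Combining these yields the master recursion
$$r_n \le (1-\alpha)\,r_{n-1} + \tfrac{L\alpha^2}{2}\,\|\theta_{n-1}-\theta^\star\|_2^2,\qquad \alpha\in[0,1].$$

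For the plain convex case I would control the distance term by $R$. The monotonicity $f(\theta_n)\le f(\theta_{n-1})$ holds for any first-order surrogate—indeed $f(\theta_n)\le g_n(\theta_n)\le g_n(\theta_{n-1})=f(\theta_{n-1})$ using the majorization property at the minimizer, the minimality of $\theta_n$, and $h_n(\theta_{n-1})=0$—so $f(\theta_{n-1})\le f(\theta_0)$ and hence $\|\theta_{n-1}-\theta^\star\|_2\le R$ by~\refEq{eq:bounded}. The recursion then becomes $r_n\le(1-\alpha)r_{n-1}+\frac{L\alpha^2R^2}{2}$. I would prove $r_n\le \frac{2LR^2}{n+2}$ by induction, taking $\alpha=1$ for the base case $n=1$ and $\alpha=\frac{2}{n+2}$ in the inductive step; the step reduces to the elementary inequality $\frac{n}{n+1}+\frac{1}{n+2}\le 1$.

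For the strongly convex case I would instead eliminate the distance term using strong convexity of $f$: since $\theta^\star$ minimizes $f$ over $\Theta$, one has $r_{n-1}\ge\frac{\mu}{2}\|\theta_{n-1}-\theta^\star\|_2^2$, whence $\|\theta_{n-1}-\theta^\star\|_2^2\le \frac{2}{\mu}r_{n-1}$. Substituting into the master recursion gives $r_n\le\phi(\alpha)\,r_{n-1}$ with $\phi(\alpha)\defin 1-\alpha+\frac{L}{\mu}\alpha^2$. It remains to minimize this convex parabola over $\alpha\in[0,1]$: its unconstrained minimizer is $\alpha^\star=\frac{\mu}{2L}$, which is admissible exactly when $\mu\le 2L$, yielding $\phi(\alpha^\star)=1-\frac{\mu}{4L}$; when $\mu>2L$ the parabola is decreasing on $[0,1]$, so the optimum is $\alpha=1$ with $\phi(1)=\frac{L}{\mu}$. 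This reproduces the two definitions of $\beta$, and iterating $r_n\le\beta\,r_{n-1}$ gives $r_n\le\beta^n r_0$.

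The only genuinely delicate point is getting the master recursion right—in particular, realizing that one should test the Key Lemma against the interpolated point rather than against $\theta^\star$, since the quadratic penalty $\frac{L}{2}\|\theta-\theta_{n-1}\|_2^2$ must be driven down through the factor $\alpha^2$ while convexity supplies the linear gain $(1-\alpha)$. Everything afterwards is a routine choice of $\alpha$ (an explicit $n$-dependent schedule in the bounded convex case, a single optimized value in the strongly convex case) together with elementary algebra; the case split $\mu>2L$ versus $\mu\le 2L$ is dictated purely by whether the unconstrained minimizer of $\phi$ falls inside $[0,1]$.
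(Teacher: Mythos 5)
Your proof is correct, and its skeleton coincides with the paper's: the same application of the Key Lemma (\refLemma{lemma:basic}) at the interpolated point $\alpha\theta^\star+(1-\alpha)\theta_{n-1}$, the same monotonicity argument $f(\theta_n)\le g_n(\theta_n)\le g_n(\theta_{n-1})=f(\theta_{n-1})$ to activate the level-set bound~(\ref{eq:bounded}), and an identical strongly convex branch (second-order growth to replace $\|\theta_{n-1}-\theta^\star\|_2^2$ by $\frac{2}{\mu}r_{n-1}$, then minimizing $1-\alpha+\frac{L}{\mu}\alpha^2$ over $[0,1]$ with the same case split at $\mu=2L$). The one place you genuinely diverge is the finish of the sublinear rate: the paper minimizes over $\alpha$ exactly, which forces a case analysis on whether $r_{n-1}\ge LR^2$, and then telescopes the inverses via $r_n^{-1}\ge r_{n-1}^{-1}+\frac{1}{2LR^2}$ (using $(1-x)^{-1}\ge 1+x$), with a further two-case discussion on the size of $r_0$. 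You instead fix the explicit schedule $\alpha_n=\frac{2}{n+2}$ and run a direct induction; since $\alpha=1$ gives $r_1\le\frac{LR^2}{2}\le\frac{2LR^2}{3}$ unconditionally, your base case absorbs the paper's case analysis on $r_0$, and the inductive step indeed reduces to $\frac{n}{n+1}+\frac{1}{n+2}\le 1$, which holds since $n(n+2)+(n+1)<(n+1)(n+2)$. The trade-off: the paper's exact minimization yields the slightly stronger per-step contraction $r_n\le r_{n-1}\bigl(1-\frac{r_{n-1}}{2LR^2}\bigr)$ (and in fact the marginally better denominators $n+3$ or $n+2$ depending on $r_0$), whereas your scheduled-$\alpha$ induction is shorter, avoids the inversion trick entirely, and delivers exactly the claimed bound $\frac{2LR^2}{n+2}$.
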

\proofatend
We separately prove the two parts of the proposition.\\
\proofstep{Non-strongly convex case}
Let us define $h_n \defin g_n - f$ the approximation error function at iteration $n$. From Lemma~\ref{lemma:basic} (with $g\!=\!g_n, \kappa\!=\!\theta_{n-1}, \theta'\!=\!\theta_n$), we have
\begin{displaymath}
f(\theta_n) \leq \min_{\theta \in \Theta} \left[ f(\theta) + \frac{L}{2} \|\theta-\theta_{n-1}\|_2^2\right].
\end{displaymath}
Then, following a similar proof technique as~\citet[][Theorem 4]{nesterov}, we have
\begin{equation}
\begin{split}
f(\theta_n) & \leq \min_{\alpha \in [0,1]}  f(\alpha\theta^\star+(1-\alpha)\theta_{n-1}) + \frac{L\alpha^2}{2} \|\theta^\star-\theta_{n-1}\|_2^2, \\
 & \leq \min_{\alpha \in [0,1]}  \alpha f(\theta^\star) +(1-\alpha)f(\theta_{n-1}) + \frac{L\alpha^2}{2} \|\theta^\star-\theta_{n-1}\|_2^2,
\end{split} \label{eq:tmp_rate2}
\end{equation}
where the minimization over $\Theta$ in the previous equation is replaced by a minimization on the line segment $\alpha\theta^\star+(1-\alpha)\theta_{n-1} : \alpha \in [0,1]$. Then, because the sequence $(f(\theta_n))_{n \geq 0}$ is monotonically decreasing we can use the bounded level set assumption, which yields
   \begin{displaymath}
   f(\theta_n) - f^\star \leq \min_{\alpha \in [0,1]}  (1-\alpha)(f(\theta_{n-1}) - f^\star) + \frac{LR^2\alpha^2}{2}.
   \end{displaymath}
   \begin{itemize}
\setlength\itemindent{25pt}
   \item if $f(\theta_{n-1}) - f^\star \geq LR^2$, then we have the optimal value $\alpha^\star = 1$ and $f(\theta_n) - f^\star \leq \frac{LR^2}{2}$;
   \item otherwise $\alpha^\star =\frac{f(\theta_{n-1}) - f^\star}{LR^2}$. Denoting by $r_{n} \defin f(\theta_n) - f^\star$, we have
   $$r_n \leq r_{n-1}\left(1-\frac{r_{n-1}}{2LR^2}\right).$$
   Thus, $r_n^{-1} \geq r_{n-1}^{-1} \left(1-\frac{r_{n-1}}{2LR^2}\right)^{-1} \geq r_{n-1}^{-1} + \frac{1}{2LR^2}$, where the second inequality comes from the convexity inequality $(1-x)^{-1} \geq 1+x$ for $x \in (0,1)$.
   \end{itemize}
   Then, we have seen that if $r_0 \geq LR^2$, then $r_1 \leq \frac{LR^2}{2}$ and thus $r_n^{-1} \geq  r_1^{-1} + \frac{n-1}{2LR^2} \geq \frac{n+3}{2LR^2}$. Otherwise, we have $r_n^{-1} \geq r_0^{-1} + \frac{n}{2LR^2} \geq \frac{n+2}{2LR^2}$, which is sufficient to conclude.

\proofstep{$\mu$-strongly convex case}
Let us now assume that $f$ is $\mu$-strongly convex, and drop the bounded level
sets assumption.  The proof again follows~\citet{nesterov} for computing the
convergence rate of proximal gradient methods. We start from~(\ref{eq:tmp_rate2}).
   We can then use the second-order growth property of $f$ (Lemma~\ref{lemma:second}) which states that $f(\theta_{n-1}) \geq f^\star + \frac{\mu}{2}\|\theta_{n-1}-\theta^\star\|_2^2$ and obtain
   \begin{displaymath}
   f(\theta_n) - f^\star \leq \left(\min_{\alpha \in [0,1]}  1-\alpha + \frac{L\alpha^2}{\mu}\right)(f(\theta_{n-1})-f^\star).
   \end{displaymath}
   At this point, it is easy to show that 
   \begin{itemize}
\setlength\itemindent{25pt}
   \item if $\mu \geq 2L$, then the previous binomial is minimized for $\alpha^\star = 1$ and 
   \begin{displaymath}
   f(\theta_n) - f^\star \leq \frac{L}{\mu}(f(\theta_{n-1})-f^\star);
   \end{displaymath}
   \item if $\mu \leq 2L$, then we have $\alpha^\star = \frac{\mu}{2L}$ and
   \begin{displaymath}
   f(\theta_n) - f^\star \leq \left(1-\frac{\mu}{4L}\right)(f(\theta_{n-1})-f^\star),
   \end{displaymath}
\end{itemize}
which is sufficient to conclude.

\endproofatend
The result of Proposition~\ref{prop:conv2} is interesting in the sense that it
provides sharp theoretical results without making strong assumption on 
the surrogate functions.  The next proposition shows that
slightly better rates can be obtained when the surrogates are strongly convex.
\begin{proposition}[\bfseries Convex Analysis for $\S_{L,\rho}(f,\kappa)$]\label{prop:conva}
Assume that $f$ is convex and let $\theta^\star$ be a minimizer of~$f$ on~$\Theta$.
When the surrogates~$g_n$ of Algorithm~\ref{alg:generic_batch} are in $\S_{L,\rho}(f,\theta_{n-1})$ with $\rho \geq L$, we have for all $n \geq 1$,
     \begincondeq
     f(\theta_n) - f^\star  \leq  \frac{L\|\theta_{0}-\theta^\star\|_2^2}{2n}.\label{eq:rate1}
     \endcondeq
When $f$ is $\mu$-strongly convex, we have for all $n\geq 1$,
\begin{displaymath}
\left\{ \begin{array}{lcr}
     \|\theta_n-\theta^\star\|_2^2 & \leq &  \left(\frac{L}{\rho+\mu}\right)^{n}\|\theta_0-\theta^\star\|_2^2 \\
     f(\theta_n) - f^\star  & \leq & \left(\frac{L}{\rho+\mu}\right)^{n-1}\frac{L\|\theta_0-\theta^\star\|_2^2}{2}
 \end{array}\right..
\end{displaymath} 
\end{proposition}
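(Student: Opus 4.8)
The plan is to exploit the strong convexity of the surrogates through the third inequality of Lemma~\ref{lemma:basic}, which is the crucial tool that the weaker Proposition~\ref{prop:conv2} did not have at its disposal. Applying that inequality with $g=g_n$, $\kappa=\theta_{n-1}$, $\theta'=\theta_n$, and evaluating at $\theta=\theta^\star$, I obtain
\begin{displaymath}
f(\theta_n) + \frac{\rho}{2}\|\theta_n-\theta^\star\|_2^2 \leq f(\theta^\star) + \frac{L}{2}\|\theta^\star-\theta_{n-1}\|_2^2.
\end{displaymath}
This single relation simultaneously controls the function values and the iterate distances, and will drive both halves of the proof. The first step I would take is to isolate these two consequences: since $f(\theta_n)\geq f^\star$, dropping the function-value gap gives the distance recursion $\frac{\rho}{2}\|\theta_n-\theta^\star\|_2^2 \leq \frac{L}{2}\|\theta_{n-1}-\theta^\star\|_2^2$, i.e. $\|\theta_n-\theta^\star\|_2^2 \leq \frac{L}{\rho}\|\theta_{n-1}-\theta^\star\|_2^2$; and keeping the function-value gap while writing $r_n \defin f(\theta_n)-f^\star$ gives $r_n \leq \frac{L}{2}\|\theta_{n-1}-\theta^\star\|_2^2 - \frac{\rho}{2}\|\theta_n-\theta^\star\|_2^2$.

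For the non-strongly-convex case with $\rho \geq L$, the distance recursion already shows $\|\theta_n-\theta^\star\|_2$ is non-increasing since $L/\rho \leq 1$. The natural move is then to telescope: denoting $d_k \defin \|\theta_k-\theta^\star\|_2^2$, the function-value bound reads $r_n \leq \frac{L}{2}d_{n-1} - \frac{\rho}{2}d_n \leq \frac{L}{2}(d_{n-1}-d_n)$, using $\rho \geq L$. Summing over $k=1,\dots,n$ yields $\sum_{k=1}^n r_k \leq \frac{L}{2}(d_0 - d_n) \leq \frac{L}{2}d_0$. Finally, since $(r_k)$ is non-increasing (the objective values decrease monotonically, as established in Proposition~\ref{prop:conv1}), we have $n\,r_n \leq \sum_{k=1}^n r_k \leq \frac{L}{2}d_0$, which gives exactly the claimed rate $r_n \leq \frac{L\|\theta_0-\theta^\star\|_2^2}{2n}$. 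The monotonicity of $(r_k)$ is the small fact I must remember to invoke here.

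For the $\mu$-strongly convex case I would abandon the telescoping and instead sharpen the distance recursion using the second-order growth property (Lemma~\ref{lemma:second}), which gives $f(\theta_n)-f^\star \geq \frac{\mu}{2}\|\theta_n-\theta^\star\|_2^2$. Substituting this lower bound back into the master inequality instead of merely discarding the function-value gap yields $\frac{\mu}{2}d_n + \frac{\rho}{2}d_n \leq \frac{L}{2}d_{n-1}$, that is $d_n \leq \frac{L}{\rho+\mu}d_{n-1}$. Iterating this contraction from $d_0$ produces the first claimed bound $\|\theta_n-\theta^\star\|_2^2 \leq \left(\frac{L}{\rho+\mu}\right)^n\|\theta_0-\theta^\star\|_2^2$. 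The second bound on the function values then follows by plugging the distance bound at index $n-1$ into the master inequality once more (dropping the non-negative $\frac{\mu}{2}d_n$ term), giving $r_n \leq \frac{L}{2}d_{n-1} \leq \frac{L}{2}\left(\frac{L}{\rho+\mu}\right)^{n-1}d_0$, which is precisely the stated rate. The main obstacle, such as it is, lies in correctly bookkeeping which term of the master inequality to drop at each stage — keeping the function-value gap for the telescoping argument but replacing it by the growth lower bound for the geometric contraction — rather than in any deep technical difficulty; the strong convexity of the surrogate does essentially all the work once Lemma~\ref{lemma:basic} is invoked.
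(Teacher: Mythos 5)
Your proposal is correct and follows essentially the same route as the paper's proof: both start from the third inequality of Lemma~\ref{lemma:basic} evaluated at $\theta=\theta^\star$, telescope using $\rho\geq L$ together with the monotone decrease of $(f(\theta_n))$ for the $O(1/n)$ rate, and combine the same master inequality with the second-order growth property of Lemma~\ref{lemma:second} to get the contraction $\|\theta_n-\theta^\star\|_2^2 \leq \frac{L}{\rho+\mu}\|\theta_{n-1}-\theta^\star\|_2^2$ and the resulting linear rate on function values. The only (harmless) cosmetic difference is that you isolate the intermediate distance recursion $d_n \leq (L/\rho)\,d_{n-1}$ explicitly, which the paper leaves implicit.
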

\proofatend
We separately prove the two parts of the proposition.

\proofstep{Non-strongly convex case}
From Lemma~\ref{lemma:basic} (with $g\!=\!g_n$, $\kappa\!=\!\theta_{n-1}$, $\theta'\!=\!\theta_n$, $\theta\!=\!\theta^\star$), we have 
\begin{equation}
f(\theta_n) - f(\theta^\star) \leq \frac{L}{2}\|\theta_{n-1}-\theta^\star\|_2^2 - \frac{\rho}{2}\|\theta_n-\theta^\star\|_2^2\leq \frac{L}{2}\|\theta_{n-1}-\theta^\star\|_2^2 - \frac{L}{2}\|\theta_n-\theta^\star\|_2^2.\label{eq:rate1_proof}
\end{equation}
By summing this inequality, we have
$$ n(f(\theta_n) - f(\theta^\star)) \leq \sum_{k=1}^n (f(\theta_k) - f(\theta^\star))\leq \frac{L}{2}(\|\theta_{0}-\theta^\star\|_2^2-\|\theta_{n}-\theta^\star\|_2^2) \leq \frac{L\|\theta_{0}-\theta^\star\|_2^2}{2},$$
where the first inequality comes from the fact that $f(\theta_k) \geq f(\theta_n)$ for all $k \leq n$. This is sufficient to prove~(\ref{eq:rate1}). Note that finding telescopic sums to prove convergence rates is a classical technique~\citep[see][]{beck}.

\proofstep{$\mu$-strongly convex case}
Let us now prove the second part of the proposition and assume that $f$ is $\mu$-strongly convex. The strong convexity implies the second-order growth property of Lemma~\ref{lemma:second}: $f(\theta_n)-f^\star \geq \frac{\mu}{2}\|\theta_n-\theta^\star\|_2^2$ for all $n$. Combined with~(\ref{eq:rate1_proof}), this yields
$$
\frac{\mu+\rho}{2}\|\theta_n-\theta^\star\|_2^2 \leq \frac{L}{2}\|\theta_{n-1}-\theta^\star\|_2^2,
$$
and thus
$$
f(\theta_n) - f(\theta^\star) \leq \frac{L}{2}\|\theta_{n-1}-\theta^\star\|_2^2 \leq \left(\frac{L}{\rho+\mu}\right)^{n-1}\frac{L\|\theta_0-\theta^\star\|_2^2}{2}.
$$

\endproofatend
Note that the condition $\rho \! \geq \! L$ is relatively strong;
it can indeed be shown that~$f$ is necessarily 
$(\rho\!-\!L)$-strongly convex if $\rho \!>\! L$, and convex if $\rho\!=\!L$.
The fact that making stronger assumptions yields better convergence rates suggests that
going beyond first-order surrogates could provide even sharper results. This is
confirmed in the next proposition:
\begin{proposition}[\bfseries Second-Order Surrogates]~\label{prop:conv5}\newline
Make similar assumptions as in Proposition~\ref{prop:conv2}, and also assume
that the error functions $h_n\!\defin\! g_n\!-\!f$ are twice differentiable, that
their Hessians $\nabla^2h_n$ are $M$-Lipschitz, and that $\nabla^2h_n(\theta_{n-1})=0$ for all $n$.  Then,$$f(\theta_n)-f^\star \leq
\frac{9MR^3}{2(n+3)^2}~~~\text{for all}~n \geq 1.$$
 If $f$ is $\mu$-strongly convex, the convergence rate is
 superlinear with order $3/2$.
 \end{proposition}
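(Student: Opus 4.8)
The plan is to rerun the argument of Proposition~\ref{prop:conv2}, but with the quadratic error bound of Lemma~\ref{lemma:basic} replaced by a cubic one. Since the error $h_n \defin g_n - f$ now satisfies $h_n(\theta_{n-1})=0$, $\nabla h_n(\theta_{n-1})=0$ \emph{and} $\nabla^2 h_n(\theta_{n-1})=0$, with $\nabla^2 h_n$ being $M$-Lipschitz, Taylor's theorem with the second-order integral remainder gives $|h_n(\theta)| \leq \frac{M}{6}\|\theta-\theta_{n-1}\|_2^3$ for all $\theta$ in $\Theta$. This is the natural second-order analogue of the first inequality of Lemma~\ref{lemma:basic}, and I would isolate it as a ``second-order descent lemma'' proved exactly as Lemma~\ref{lemma:upperlipschitz} is in the first-order case (writing $\phi(t)=h_n(\theta_{n-1}+t(\theta-\theta_{n-1}))$, one has $\phi(0)=\phi'(0)=\phi''(0)=0$ and $\phi''$ is $M\|\theta-\theta_{n-1}\|_2^3$-Lipschitz, whence the factor $\int_0^1 t(1-t)\,dt=\tfrac16$). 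Chaining $f(\theta_n)\leq g_n(\theta_n)\leq g_n(\theta)=f(\theta)+h_n(\theta)$ as in Lemma~\ref{lemma:basic} then yields $f(\theta_n)\leq \min_{\theta\in\Theta}\big[f(\theta)+\frac{M}{6}\|\theta-\theta_{n-1}\|_2^3\big]$.

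From here I would follow the proof of Proposition~\ref{prop:conv2} step for step: restrict the minimization to the segment $\alpha\theta^\star+(1-\alpha)\theta_{n-1}$ with $\alpha\in[0,1]$, invoke convexity of $f$ and the bounded level set condition~(\ref{eq:bounded}) (legitimate because $(f(\theta_n))_{n\geq0}$ decreases, so $\|\theta_{n-1}-\theta^\star\|_2\leq R$), and set $r_n\defin f(\theta_n)-f^\star$ to obtain the scalar recursion $r_n \leq \min_{\alpha\in[0,1]}(1-\alpha)r_{n-1}+\frac{MR^3}{6}\alpha^3$.

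The genuinely new work is solving this recursion to a $1/n^2$ rate. Optimizing the right-hand side over $\alpha$: when $r_{n-1}>MR^3/2$ the constrained optimum is $\alpha=1$, giving $r_n\leq MR^3/6$ (so after at most one iterate we are in the regime $r_{n-1}\leq MR^3/2$); otherwise the interior optimum $\alpha^\star=\sqrt{2r_{n-1}/(MR^3)}\leq 1$ gives $r_n\leq r_{n-1}-c\,r_{n-1}^{3/2}$ with $c\defin\frac23\sqrt{2/(MR^3)}$. Setting $u_n\defin r_n^{-1/2}$ and applying the elementary inequality $(1-x)^{-1/2}\geq 1+\frac{x}{2}$ to $x=c\sqrt{r_{n-1}}\in[0,1)$ converts this multiplicative decrease into the additive recursion $u_n\geq u_{n-1}+\frac{c}{2}$. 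Checking the base case in both branches of the first-step dichotomy gives $u_1\geq 2c=\frac{c}{2}\cdot 4$, and induction (verifying along the way that $r_{n-1}\leq MR^3/2$, hence $\alpha^\star\leq 1$, persists) yields $u_n\geq\frac{c}{2}(n+3)$. Since $(c/2)^{-2}=9MR^3/2$, this is exactly $f(\theta_n)-f^\star\leq\frac{9MR^3}{2(n+3)^2}$.

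The $\mu$-strongly convex case is then easy: taking simply $\alpha=1$ in the cubic recursion gives $r_n\leq\frac{M}{6}\|\theta_{n-1}-\theta^\star\|_2^3$, and the second-order growth property (Lemma~\ref{lemma:second}), $\|\theta_{n-1}-\theta^\star\|_2^2\leq\frac{2}{\mu}r_{n-1}$, produces $r_n\leq\frac{M}{6}(2/\mu)^{3/2}\,r_{n-1}^{3/2}$, which is precisely superlinear convergence of order $3/2$. I expect the main obstacle to lie entirely in the non-strongly-convex recursion: correctly treating the $\alpha^\star>1$ branch, confirming that $\alpha^\star\leq 1$ is self-sustaining along the iterates, and pinning down the base case sharply enough to get the clean denominator $(n+3)^2$ rather than a mere $\O(1/n^2)$ bound.
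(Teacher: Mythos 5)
Your treatment of the main $1/n^2$ rate is correct and is essentially the paper's own proof. Your cubic error bound $|h_n(\theta)|\leq\frac{M}{6}\|\theta-\theta_{n-1}\|_2^3$ is exactly what the paper gets by invoking Lemma~\ref{lemma:uppersecond} on $h_n$ at $\theta_{n-1}$ (your integral-remainder derivation with $\int_0^1 t(1-t)\,dt=\frac16$ is a valid re-proof of that lemma in the special case at hand), and everything downstream coincides: the recursion $r_n\leq\min_{\alpha\in[0,1]}(1-\alpha)r_{n-1}+\frac{MR^3}{6}\alpha^3$, the dichotomy at $r_{n-1}=MR^3/2$ with $r_n\leq MR^3/6$ in the $\alpha^\star=1$ branch, the contraction $r_n\leq r_{n-1}\bigl(1-\sqrt{8r_{n-1}/(9MR^3)}\bigr)$, and the inversion via $(1-x)^{-1/2}\geq 1+x/2$ turning it into $r_n^{-1/2}\geq r_{n-1}^{-1/2}+\sqrt{2/(9MR^3)}$. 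Your base-case bookkeeping ($u_1\geq 2c$ in both branches, hence $u_n\geq\frac{c}{2}(n+3)$) checks out and is marginally tidier than the paper's, which concludes with $(n+4)$ in one branch and $(n+3)$ in the other.

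The strongly convex case, however, has a genuine gap. Setting $\alpha=1$ unconditionally gives $r_n\leq\gamma' r_{n-1}^{3/2}$ with $\gamma'\defin\frac{M}{6}(2/\mu)^{3/2}$, but this inequality by itself is not superlinear convergence: it implies contraction only once $\gamma'\sqrt{r_{n-1}}<1$, and when $\gamma'\sqrt{r_{n-1}}\geq 1$ it is weaker than the trivial monotonicity bound $r_n\leq r_{n-1}$, so it does not show that $r_n\to 0$ (monotone decrease of $f(\theta_n)$ alone does not force convergence to $f^\star$). You also cannot fall back on the first part to drive the iterates into the small-residual basin, because the strongly convex claim, as in Proposition~\ref{prop:conv2}, is meant to hold without the level-set radius $R$, and the paper's proof explicitly drops that assumption here. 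The paper closes the gap by keeping the constrained minimization over $\alpha\in[0,1]$, yielding $r_n\leq\min_{\alpha\in[0,1]}(1-\alpha)r_{n-1}+\frac{\gamma\alpha^3}{3}r_{n-1}^{3/2}$ and splitting into two regimes: when $\gamma\sqrt{r_{n-1}}\geq 1$, the interior minimizer $\alpha^\star=(\gamma\sqrt{r_{n-1}})^{-1/2}\leq 1$ gives $r_n\leq r_{n-1}-\frac{2}{3}\gamma^{-1/2}r_{n-1}^{3/4}$, and since $r_{n-1}\geq\gamma^{-2}$ in this regime the per-step decrease is at least $\frac{2}{3}\gamma^{-2}$, so after finitely many iterations one enters the regime $\gamma\sqrt{r_{n-1}}<1$; there $\alpha^\star=1$ and your bound delivers both $r_n\leq\frac{1}{3}r_{n-1}$ and the order-$3/2$ recursion, after which $\gamma^2 r_n\leq(\gamma^2 r_{n-1})^{3/2}$ stays below one and decays superlinearly. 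Adding this two-regime step (a few lines) would complete your argument.
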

 \proofatend
We separately prove the two parts of the proposition.

\proofstep{Non-strongly convex case}
Following a similar scheme as in Proposition~\ref{prop:conv2} and using Lemma~\ref{lemma:uppersecond} on the approximation error functions $h_n$ instead of Lemma~\ref{lemma:basic}, we have
\begin{equation}
f(\theta_n) \leq \min_{\alpha \in [0,1]}  f(\alpha\theta^\star+(1-\alpha)\theta_{n-1}) + \frac{M\alpha^3}{6} \|\theta^\star-\theta_{n-1}\|_2^3,\label{eq:tmp_rate3}
   \end{equation}
   Then, again following the proof of Proposition~\ref{prop:conv2},
   \begin{displaymath}
   f(\theta_n) - f^\star \leq \min_{\alpha \in [0,1]}  (1-\alpha)(f(\theta_{n-1}) - f^\star) + \frac{\alpha^3 M R^3}{6}.
   \end{displaymath}
   Denoting by $r_{n} \defin f(\theta_n) - f^\star$ and by $\alpha^\star$ the solution of this optimization problem, we have
   \begin{itemize}
\setlength\itemindent{25pt}
   \item if $r_{n-1} \geq MR^3/2$, then $\alpha^\star=1$ and $r_{n} \leq MR^3/6$;
   \item otherwise, $\alpha^\star = \sqrt{2 r_{n-1}/(MR^3)}$, and
   $$ r_n \leq r_{n-1}\left(1- \sqrt{\frac{8r_{n-1}}{9MR^3}} \right).$$
   It follows that $ r_n^{-1/2} \geq r_{n-1}^{-1/2}\left(1- \sqrt{\frac{8r_{n-1}}{9MR^3}} \right)^{-1/2} \geq r_{n-1}^{-1/2} + \sqrt{\frac{2}{9MR^3}}$, where the last inequality comes from the convexity inequality $(1-x)^{-1/2} \geq 1 + x/2$. 
   \end{itemize}
   Then, we have seen that if $r_0 \geq MR^3/2$, then $r_1 \leq MR^3/6$ and thus $r_n^{-1/2} \geq r_1^{-1/2} + (n-1)\sqrt{\frac{2}{9MR^3}} \geq\sqrt{\frac{2}{9MR^3}}(n-1+3\sqrt{3}) \geq \sqrt{\frac{2}{9MR^3}}(n+4)$.
   Otherwise, we have $r_n^{-1/2} \geq r_0^{-1/2} + n\sqrt{\frac{2}{9MR^3}} \geq \sqrt{\frac{2}{9MR^3}}(n+3)$.
   This is sufficient to obtain the first part of the proposition.

\vspace*{0.1cm}
\proofstep{$\mu$-strongly convex case}
Let us now assume that $f$ is $\mu$-strongly convex, and drop the bounded level
sets assumption. Starting again from~(\ref{eq:tmp_rate3}),
   \begin{displaymath}
   f(\theta_n) \leq \min_{\alpha \in [0,1]}  f(\alpha\theta^\star+(1-\alpha)\theta_{n-1}) + \frac{M\alpha^3}{6} \|\theta^\star-\theta_{n-1}\|_2^3,
   \end{displaymath}
   and using the second-order growth property of $f$, we have
   \begin{displaymath}
   r_n \leq \min_{\alpha \in [0,1]}  (1-\alpha)r_{n-1} + \frac{\gamma \alpha^3}{3} r_{n-1}^{3/2}.
   \end{displaymath}
   \begin{itemize}
\setlength\itemindent{25pt}
   \item when $\gamma \sqrt{r_{n-1}} \geq 1$, we have $\alpha^\star= \frac{1}{\sqrt{\gamma \sqrt{r_{n-1}}}}$ and the desired inequality follows;
   \item otherwise, $\alpha^\star=1$ and $r_n \leq \frac{3}{2}r_{n-1}^{3/2} \leq \frac{r_{n-1}}{3}$.
   \end{itemize}
\endproofatend
Consistently with this proposition, similar rates were obtained
by~\citet{nesterov7} for the Newton method with cubic regularization, which 
involve second-order surrogates.
In the next section, we focus again on first-order surrogates, and
present simple mechanisms to build them. The proofs of the different claims are provided in the
supplemental material.
\subsection{Examples of Surrogate Functions}
\label{subsec:surrogates}
\paragraph{Lipschitz Gradient Surrogates.}~\\
When $f$ is differentiable and $\nabla f$ is $L$-Lipschitz, $f$ admits the following majorant surrogate in $\S_{2L,L}(f,\kappa)$:
\begin{displaymath}
    g: \theta \mapsto f(\kappa) + \nabla f(\kappa)^\top (\theta-\kappa) + \frac{L}{2}\|\theta-\kappa\|_2^2.
\end{displaymath}
In addition, when $f$ is convex, $g$ is in $\S_{L,L}(f,\kappa)$, 
and when $f$ is $\mu$-strongly convex, $g$ is in $\S_{L-\mu,L}(f,\kappa)$.
Note also that minimizing $g$ amounts to performing a classical 
classical gradient descent step $\theta' \leftarrow \kappa - \frac{1}{L}\nabla
f(\kappa)$.
 
\paragraph{Proximal Gradient Surrogates.}~\\
Assume that $f$ splits into $f = f_1 + f_2$, where $f_1$ is differentiable with a $L$-Lipschitz gradient. Then,~$f$ admits the following majorant surrogate in $\S_{2L}(f,\kappa)$:
\begin{displaymath}
    g: \theta \mapsto f_1(\kappa) + \nabla f_1(\kappa)^\top (\theta-\kappa) + \frac{L}{2}\|\theta-\kappa\|_2^2 + f_2(\theta).
\end{displaymath}
The approximation error $g-f$ is indeed the same as in the previous paragraph and thus:
\begin{itemize}
   \item when $f_1$ is convex, $g$ is in $\S_{L}(f,\kappa)$. If $f_2$ is also convex, $g$ is in $\S_{L,L}(f,\kappa)$.
   \item when $f_1$ is $\mu$-strongly convex, $g$ is in $\S_{L-\mu}(f,\kappa)$. If $f_2$ is also convex, $g$ is in $\S_{L-\mu,L}(f,\kappa)$.
\end{itemize}
Minimizing~$g$ amounts to performing a proximal gradient step~\citep[see][]{nesterov,beck}. 
 
\paragraph{DC Programming Surrogates.}~\\
Assume that $f = f_1 + f_2$, where $f_2$ is concave and differentiable with a $L_2$-Lipschitz gradient. Then, the following function $g$ is a majorant surrogate in $\S_{L_2}(f,\kappa)$:
\begin{displaymath}
   g: \theta \mapsto f_1(\theta) + f_2(\kappa) + \nabla f_2(\kappa)^\top (\theta-\kappa).
\end{displaymath}
Such a surrogate forms the root of DC- (difference of convex functions)-programming~\citep[see][]{horst}. It is also indirectly used in reweighted-$\ell_1$ algorithms~\citep{candes4} for minimizing on $\Real_+^p$ a cost function of the form $\theta \mapsto f_1(\theta) + \lambda \sum_{i=1}^p \log( \theta_i + \varepsilon)$.

\paragraph{Variational Surrogates.}~\\
Let $f$ be a real-valued function defined on $\Real^{p_1} \times \Real^{p_2}$. Let $\Theta_1 \subseteq \Real^{p_1}$ and $\Theta_2 \subseteq \Real^{p_2}$ be two convex sets. 
Define~$\tilde{f}$ as $\tilde{f}(\theta_1) \defin \min_{\theta_2 \in \Theta_2} f(\theta_1,\theta_2)$ and
assume that 
\begin{itemize}
\item $\theta_1 \mapsto f(\theta_1,\theta_2)$ is differentiable for all~$\theta_2$ in $\Theta_2$;
\item $\theta_2 \mapsto \nabla_{1} f(\theta_1,\theta_2)$ is $L$-Lipschitz for all $\theta_1$ in $\Real^{p_1}$;\footnote{The notation $\nabla_1$ denotes the gradient w.r.t. $\theta_1$.}
\item $\theta_1 \mapsto \nabla_{1} f(\theta_1,\theta_2)$ is $L'$-Lipschitz for all $\theta_2$ in $\Theta_2$;
\item $\theta_2 \mapsto f(\theta_1,\theta_2)$ is $\mu$-strongly convex for all $\theta_1$ in~$\Real^{p_1}$. 
\end{itemize}
Let us fix $\kappa_1$ in $\Theta_1$. Then, the following function is a majorant surrogate in $\S_{2L''}(\tilde{f},\kappa)$ for some $L'' > 0$:
\condvspacesmall
\begin{displaymath}
g: \theta_1 \mapsto f(\theta_1,\kappa_2^\star) ~\text{with}~~ \kappa_2^\star \defin \argmin_{\theta_2 \in \Theta_2} \tilde{f}(\kappa_1,\theta_2).
\condvspacesmall
\end{displaymath}
When $f$ is jointly convex in $\theta_1$ and~$\theta_2$, $\tilde{f}$
is itself convex and we can choose $L''=L'$.
Algorithm~\ref{alg:generic_batch} becomes a block-coordinate descent
procedure with two blocks.

\paragraph{Saddle Point Surrogates.}~\\
Let us make the same assumptions as in the previous paragraph but with the following differences:
\begin{itemize}
\item $\theta_2 \!\mapsto\! f(\theta_1,\theta_2)$ is $\mu$-strongly concave for all $\theta_1$ in~$\Real^{p_1}$; 
\item $\theta_1  \!\mapsto\! f(\theta_1,\theta_2)$ is convex for all $\theta_2$ in $\Theta_2$;
\item $\tilde{f}(\theta_1) \defin \max_{\theta_2 \in \Theta_2} f(\theta_1,\theta_2).$
\end{itemize}
Then, $\tilde{f}$ is convex and the function below is a majorant surrogate in $\S_{2L''}(\tilde{f},\kappa_1)$:
\begin{displaymath}
g: \theta_1 \mapsto f(\theta_1,\kappa_2^\star) + \frac{L''}{2}\|\theta_1-\kappa_1\|_2^2,
\end{displaymath}
where $L''\defin \max(2L^2 /\mu,L')$.
When $\theta_1 \mapsto f(\theta_1,\theta_2)$ is affine, we can instead choose $L''\defin L^2 /\mu$.



\paragraph{Jensen Surrogates.}~\\
Jensen's inequality provides a natural mechanism to obtain surrogates for convex
functions. Following the presentation of \citet{lange2}, we consider a convex function
$f: \Real \mapsto \Real$, a vector $\x$ in~$\Real^p$, and define $\tilde{f}:
\Real^p \to \Real$ as $\tilde{f}(\theta) \defin f(\x^\top \theta)$ for
all $\theta$. Let $\w$ be a weight vector in $\Real_+^p$ such that $\|\w\|_1=1$ and $\w_i \neq 0$ whenever $\x_i \!\neq\! 0$. Then, we define for any $\kappa$ in~$\Real^p$
\begin{displaymath}
   g: \theta \mapsto \sum_{i=1}^p \w_i f \left(\frac{\x_i}{\w_i}( \theta_i-\kappa_i) + \x^\top \kappa\right),
\end{displaymath}
When $f$ is differentiable with an $L$-Lipschitz gradient, and $\w_i \defin |\x_i|^\nu / \|\x\|_\nu^\nu$, then $g$ is in $\S_{L'}(\tilde{f},\kappa)$ with
\begin{itemize}
   \item $L' = L\|\x\|_\infty^2\|\x\|_0$ for $\nu=0$;
   \item $L' = L\|\x\|_\infty\|\x\|_1$ for $\nu=1$;
   \item $L' = L\|\x\|_2^2$ for $\nu=2$.
\end{itemize}
As far as we know, the convergence rates we provide when using such surrogates
are new. We also note that Jensen surrogates have been successfully
used in machine learning. For instance, \citet{pietra} interpret boosting
procedures under this point of view through the concept of \emph{auxiliary
functions}. 

\paragraph{Quadratic Surrogates.}~\\
When $f$ is twice differentiable and admits a matrix~$\HH$ such that $\HH- \nabla^2 f$ is always positive definite, the following 
function is a first-order majorant surrogate:
\begin{displaymath}
    g: \theta \mapsto f(\kappa) + \nabla f(\kappa)^\top (\theta-\kappa) + \frac{1}{2}(\theta-\kappa)^\top\HH(\theta-\kappa).
\end{displaymath}
The Lipschitz constant of $\nabla(g-f)$ is the largest eigenvalue of
$\HH- \nabla^2 f(\theta)$ over $\Theta$.
Such surrogates appear frequently in the statistics and machine learning literature~\citep[][]{bohning,khan}.

We have shown that there are many rules to build first-order surrogates.
Choosing one instead of another mainly depends on how easy it is to build the surrogate (do we need to estimate an a priori unknown Lipschitz constant?), and 
on how cheaply it can be minimized.

\section{Block Coordinate Scheme}\label{sec:bcd}
In this section, we introduce a block coordinate descent extension of
Algorithm~\ref{alg:generic_batch}
under the assumptions that
\condvspace
\begin{itemize}
\item $\Theta$ is separable---that is, it can be written as a Cartesian product $\Theta=\Theta^1 \times \Theta^2 \times \ldots \times \Theta^k$;
\item the surrogates~$g_n$ are 
separable into $k$ components: 
\begin{displaymath}
\condvspacesmall
g_n(\theta) = \sum_{i=1}^k
g_{n}^i(\theta^i)~~\text{for}~~\theta=(\theta^1,\ldots,\theta^k) \in \Theta.
\end{displaymath}
\end{itemize}
We present a randomized procedure in Algorithm~\ref{alg:generic_bcd}
following \citet{tseng,shalev,nesterov6,richtarik}. 

\myvspace{0.10}
\begin{algorithm}[hbtp]
    \caption{Block Coordinate Descent Scheme}\label{alg:generic_bcd}
    \begin{algorithmic}[1]
    \INPUT $\theta_0=(\theta_0^1,\ldots,\theta_0^k) \in \Theta=(\Theta^1 \times \ldots \times \Theta^k)$; $N$.
    \FOR{ $n=1,\ldots,N$}
    \STATE Choose a separable surrogate $g_n$ of $f$ near~$\theta_{n-1}$;
    \STATE Randomly pick up one block $\hati_n$ and update $\theta_{n}^{\hati_n}$:
    \begin{displaymath}
\condvspacesmall
       \theta_n^{\hati_n} \in \argmin_{\theta^{\hati_n} \in \Theta^{\hati_n}} g_n^{\hati_n}(\theta^{\hati_n}). 
    \end{displaymath}
    \ENDFOR
    \OUTPUT $\theta_{N} = (\theta_N^1,\ldots,\theta_N^k)$ (final estimate);
    \end{algorithmic}
\end{algorithm}

\myvspace{0.10}
As before, we first study the convergence for non-convex problems.
The next proposition shows that  
similar guarantees as for Algorithm~\ref{alg:generic_batch} can be obtained.
\begin{proposition}[\bfseries Non-Convex Analysis]~\label{prop:conv14}\newline
Assume that the functions $g_n$ are majorant surrogates in
$\S_L(f,\theta_{n-1})$.  Assume also that
$\theta_0$ is the minimizer of a majorant surrogate
function in $\S_L(f,\theta_{-1})$ for some $\theta_{-1}$ in $\Theta$. Then, the conclusions of Proposition~\ref{prop:conv1} hold with probability one.
\end{proposition}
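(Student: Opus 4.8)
The two statements to establish are that $(f(\theta_n))_{n\geq 0}$ decreases and that $(\theta_n)_{n\geq 0}$ is asymptotically stationary, the latter now only with probability one because of the random block selection. I write $h_n \defin g_n - f$ for the nonnegative error functions and let $\mathcal F_{n-1}$ collect the history up to the draw of $\hati_n$ (including the choice of $g_n$). The monotonicity is in fact deterministic and identical to \refProp{prop:conv1}: by separability of $g_n$ and the definition of $\theta_n^{\hati_n}$ as a block minimizer, $f(\theta_n) \leq g_n(\theta_n) \leq g_n(\theta_{n-1}) = f(\theta_{n-1})$, so $(f(\theta_n))$ converges to some limit $f^\star$ whatever the draws are.

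Next I would carry over, this time pathwise, the scalar estimates of \refProp{prop:conv1}. Summing $h_n(\theta_n) \leq f(\theta_{n-1}) - f(\theta_n)$ yields $\sum_n h_n(\theta_n) \leq f(\theta_0) - f^\star < \infty$, hence $h_n(\theta_n)\to 0$ surely. The key point is that the majorant branch of the proof of \refProp{prop:conv1} never uses that $\theta_n$ minimizes $g_n$: it only invokes $h_n \geq 0$ everywhere and the descent inequality of \refLemma{lemma:upperlipschitz} for $h_n$. Plugging $\theta' = \theta_n - \frac1L \nabla h_n(\theta_n)$ therefore still gives $\|\nabla h_n(\theta_n)\|_2^2 \leq 2L\,h_n(\theta_n) \to 0$ surely, so the error gradient vanishes along the iterates. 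In addition the block update provides an \emph{exact} first-order inequality in the selected block, $\nabla g_n(\theta_n,\theta-\theta_n)\geq 0$ for every $\theta$ differing from $\theta_n$ only on block $\hati_n$, which combined with the previous bound gives $\nabla f(\theta_n,\theta-\theta_n) \geq -\|\nabla h_n(\theta_n)\|_2\,\|\theta-\theta_n\|_2$ along those directions.

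The randomization enters through a conditional-expectation identity. With $\bar\theta_n \defin \argmin_{\theta\in\Theta} g_n(\theta)$ and $D_n \defin g_n(\theta_{n-1}) - g_n(\bar\theta_n) \geq 0$, separability and the uniform choice of $\hati_n$ give
\begin{displaymath}
\E\!\left[\,f(\theta_{n-1}) - f(\theta_n)\mid \mathcal F_{n-1}\right] \;\geq\; \E\!\left[\,g_n(\theta_{n-1}) - g_n(\theta_n)\mid\mathcal F_{n-1}\right] \;=\; \tfrac1k D_n .
\end{displaymath}
Taking expectations and summing the telescoping left-hand side bounds $\sum_n \E[D_n] \leq k\,(f(\theta_0)-f^\star)$, so $D_n \to 0$ almost surely. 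The plan is then to promote the single-block estimate of the previous paragraph to a statement about every block simultaneously: the directional derivative $\nabla f(\theta_n,\theta-\theta_n)$ is additive over the separable blocks, so a Cauchy--Schwarz step across the $k$ blocks (exactly as in the final lines of \refProp{prop:conv1}) reduces the asymptotic stationary point condition to controlling each block, and the a.s.\ vanishing of $D_n$ must be used to supply this control for the blocks that were not selected at step $n$. The hypothesis that $\theta_0$ is itself the minimizer of a surrogate in $\S_L(f,\theta_{-1})$ anchors this argument by ensuring that stationarity is always evaluated at the center of a surrogate, where $\nabla h$ vanishes.

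I expect the main obstacle to be precisely this upgrade. One iteration minimizes a single block, so the exact inequality $\nabla g_n(\theta_n,\cdot)\geq 0$ is available only along block $\hati_n$, whereas the target condition is an infimum over all of $\Theta$ that must hold for \emph{all} large $n$ and not merely along a subsequence. Bridging the gap requires turning the almost-sure decay of the value gap $D_n$ into genuine control of the non-selected blocks at the current iterate --- equivalently, of the iterate's displacement --- which is delicate because the surrogates are only majorant (not strongly convex, so a small $D_n$ need not force $\bar\theta_n$ close to $\theta_{n-1}$) and because the surrogate changes at every step, so a block's exact condition obtained when it was last updated does not transfer verbatim to the current surrogate. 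Handling this coupling, while using that each block is selected infinitely often with probability one, is the technical heart of the proof.
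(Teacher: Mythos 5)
Your proposal sets up correct preliminary estimates (the deterministic monotone decrease, the pathwise bounds $h_n(\theta_n)\to 0$ and $\|\nabla h_n(\theta_n)\|_2^2 \leq 2L\,h_n(\theta_n)$, and the summability of $\E[D_n]$), but it stops exactly at the step that constitutes the proposition's actual difficulty, and the route you sketch for closing it cannot work as stated. A vanishing value gap $D_n = g_n(\theta_{n-1}) - g_n(\bar\theta_n)$ gives no control on directional derivatives of $g_n$ at the current iterate in the non-selected blocks: the surrogates here are only majorant elements of $\S_L(f,\theta_{n-1})$ — they need not be convex, let alone strongly convex — and even for a convex surrogate, near-optimality in value does not bound the normalized directional derivative from below (take $g(x)=|x|$ with iterate $\epsilon>0$: the value gap is $\epsilon$ but the normalized derivative toward the minimizer is $-1/\epsilon$, arbitrarily bad as $\epsilon \to 0$). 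Since the asymptotic stationarity condition divides by $\|\theta-\theta_n\|_2$, the a.s.\ decay of $D_n$ cannot be ``promoted'' to block-wise near-stationarity, and your closing paragraph itself concedes that this bridge is missing. So the proof has a genuine gap, not merely an unpolished ending.

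The paper closes this gap by a construction your argument never introduces: an aggregate surrogate defined recursively by $\barg_n \defin \barg_{n-1} + g_n^{\hati_n} - \barg_{n-1}^{\hati_n}$, initialized with $\barg_{-1}$ equal to the majorant surrogate minimized by $\theta_0$ — this is the true role of the $\theta_{-1}$ hypothesis, which you interpret only loosely. By induction, $\theta_n$ is an \emph{exact global} minimizer of $\barg_n$ over $\Theta$ for every $n$: the selected block is minimized by the update, and every other block component of $\barg_n$ is unchanged from $\barg_{n-1}$ and was already minimized at $\theta_{n-1}^i = \theta_n^i$. This restores the full first-order inequality $\nabla \barg_n(\theta_n,\theta-\theta_n) \geq 0$ in \emph{all} directions, not just along block $\hati_n$, so no promotion step is needed. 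A conditional-expectation telescoping — structurally your $D_n$ computation, but applied to $\E[\barg_n(\theta_n)]$ rather than to the one-step surrogate — shows that $\E[f(\theta_{n-1})-\barg_{n-1}(\theta_{n-1})]$ is the summand of a converging sum, and Beppo-Levi then gives ${\bar h}_n(\theta_n) \defin \barg_n(\theta_n)-f(\theta_n) \to 0$ almost surely; since ${\bar h}_n \geq 0$ with $L$-Lipschitz gradient, $\|\nabla {\bar h}_n(\theta_n)\|_2^2 \leq 2L{\bar h}_n(\theta_n) \to 0$, and the Cauchy--Schwarz step of Proposition~\ref{prop:conv1} concludes. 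Your quantities $h_n(\theta_n)$ and $\nabla h_n(\theta_n)$, although they do vanish, are attached to the wrong function — the one-iteration surrogate says nothing about blocks not touched at step $n$ — which is precisely why the aggregate object is indispensable.
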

\proofatend
We proceed in several steps and adapt the convergence proof of
Proposition~\ref{prop:conv1} to our new setting.

\proofstep{Definition of an approximate surrogate $\barg_n$}
We define recursively the sequence of functions
$(\barg_n)_{n \geq 0}$ as follows:
\begin{displaymath}
   \barg_n \defin \barg_{n-1} + g_n^{\hati_n} - \barg_{n-1}^{\hati_n},
\end{displaymath}
where the surrogate $g_n$ and the index $\hati_n$ are chosen in the algorithm. We
also define $\barg_{-1}$ as a majorant separable surrogate function such that
$\theta_0 \in \argmin_{\theta \in \Theta} \barg_{-1}(\theta)$ (we have assumed in the proposition that such a
surrogate function exists).  Then, it is easy to see that $\barg_n$ is
constructed in such a way that $\theta_n$ is a minimizer of $\barg_n$ over
$\Theta$ for all $n \geq 0$ and that $\barg_n \geq f$.

\proofstep{Almost sure convergence of $(f(\theta_n))_{n \geq 0}$ and consequences}
We have $f(\theta_n) \leq g_n(\theta_n) = \sum_{i=1}^k g_n^k(\theta_n^k) \leq\sum_{i=1}^k g_n^k(\theta_{n-1}^k) =
g_{n}(\theta_{n-1})=f(\theta_{n-1})$ since we have $g_n^{\hati_n}(\theta_n^{\hati_n}) \leq g_n^{\hati_n}(\theta_{n-1}^{\hati_n})$ and $g_n^{i}(\theta_n^{i}) = g_n^{i}(\theta_{n-1}^{i})$ for $i \neq \hati_n$.
Thus, $(f(\theta_n))_{n \geq 0}$ is monotonically decreasing and converges almost surely.
We also have
\begin{displaymath}
\begin{split}
  \E[\barg_{n}(\theta_n) - \barg_{n-1}(\theta_{n-1})] & =    \E[\barg_{n}(\theta_n) - \barg_{n}(\theta_{n-1})] + \E[\barg_n(\theta_{n-1}) - \barg_{n-1}(\theta_{n-1})]  \\
  & =   \E[\barg_{n}(\theta_n) - \barg_{n}(\theta_{n-1})] + \E[g_n^{\hati_n}(\theta_{n-1}^{\hati_n}) - \barg_{n-1}^{\hati_n}(\theta_{n-1}^{\hati_n})] \\
  & =   \E[\barg_{n}(\theta_n) - \barg_{n}(\theta_{n-1})] + \E[\E[g_n^{\hati_n}(\theta_{n-1}^{\hati_n}) - \barg_{n-1}^{\hati_n}(\theta_{n-1}^{\hati_n}) | \theta_{n-1}]] \\
  & =   \E[\barg_{n}(\theta_n) - \barg_{n}(\theta_{n-1})] + \frac{1}{k}\E[g_n(\theta_{n-1}) - \barg_{n-1}(\theta_{n-1})] \\
  & =   \E[\barg_{n}(\theta_n) - \barg_{n}(\theta_{n-1})] + \frac{1}{k}\E[f(\theta_{n-1}) - \barg_{n-1}(\theta_{n-1})].
\end{split}
\end{displaymath}
Note that both terms $\barg_{n}(\theta_n) - \barg_{n}(\theta_{n-1})$ and $f(\theta_{n-1}) - \barg_{n-1}(\theta_{n-1})$ are non-positive with probability one and thus the sequence $(\E[\barg_n(\theta_n)])_{ n \geq 0}$ is non-increasing, bounded below and convergent.
The term $\E[\barg_{n}(\theta_n) - \barg_{n-1}(\theta_{n-1})]$ is therefore the summand of a converging sum, and so are
$\E[\barg_{n}(\theta_n) - \barg_{n}(\theta_{n-1})]$ and $\E[f(\theta_{n-1}) - \barg_{n-1}(\theta_{n-1})]$.
Then, we have by using Beppo-Levi theorem
\begin{displaymath}
   \sum_{n=0}^{+\infty} \E[\barg_n(\theta_n)-f(\theta_n)] = \E\left[ \sum_{n=0}^{+\infty} \barg_{n}(\theta_{n}) - f(\theta_{n})\right] < + \infty.
\end{displaymath}
Thus, the term $\barg_{n}(\theta_{n}) - f(\theta_{n})$ converges almost surely to $0$.

\proofstep{Asymptotic stationary point conditions}
Let us denote by ${\bar h}_n \defin \barg_n-f$ which is differentiable with
$L$-Lipschitz continuous gradient.
Then, for all $\theta$ in $\Theta$,
\begin{displaymath}
\nabla f(\theta_n,\theta-\theta_n) = \nabla \barg_n(\theta_n,\theta-\theta_n) - \nabla {\bar h}_n(\theta_n)^\top(\theta-\theta_n). 
\end{displaymath}
We have $\nabla \barg_n(\theta_n,\theta-\theta_n) \geq 0$ since $\theta_n$ is a minimizer of $\barg_n$, and $\|\nabla {\bar h}_n(\theta_n)\|_2^2 \leq 2L {\bar h}_n(\theta_n)$, following a similar argument as in the proof of Proposition~\ref{prop:conv1}.
Since we have shown that ${\bar h}_n(\theta_n)$ almost surely converges to zero, we conclude using the Cauchy-Schwarz inequality as in the proof of Proposition~\ref{prop:conv1}.

\endproofatend

Under convexity assumptions on $f$, the next two propositions give us expected
convergence rates.
\begin{proposition}[\bfseries Convex Analysis for $\S_L(f,\kappa)$]\label{prop:conv15}
Make the same assumptions as in Proposition~\ref{prop:conv2} and
define $\delta\defin \frac{1}{k}$. When the surrogate functions $g_n$ in Algorithm~\ref{alg:generic_bcd}
are majorant and in $\S_{L}(f,\theta_{n-1})$, the sequence $(f(\theta_n))_{n \geq 0}$ almost surely converges to $f^\star$ and
\begin{displaymath}
\E[f(\theta_n)-f^\star] \leq  \frac{2LR^2}{2+\delta(n-n_0)}~~\text{for all}~n \geq n_0, 
\end{displaymath}
where $n_0\defin \left\lceil \log\left(\frac{2(f(\theta_0)-f^\star)}{LR^2}-1\right) / \log\left(\frac{1}{1-\delta}\right) \right\rceil$ if $f(\theta_0)-f^\star > LR^2$ and $n_0\defin0$ otherwise.
Assume now that $f$ is $\mu$-strongly convex. Then, we have instead an expected linear convergence rate
$$\E[f(\theta_n)-f^\star] \leq ((1-\delta) + \delta\beta)^n(f(\theta_{0})-f^\star),$$
where $\beta \defin \frac{L}{\mu}$ if $\mu >2L$ or $\beta \defin \left(1-\frac{\mu}{4L}\right)$ otherwise.
\end{proposition}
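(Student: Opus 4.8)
The plan is to reduce this randomized analysis to the deterministic recursion already established for the batch scheme in \refProp{prop:conv2}, at the cost of a factor $\delta=1/k$ per iteration. First I would condition on $\theta_{n-1}$ and the chosen separable surrogate $g_n$. Since only block $\hati_n$ is modified and both $\Theta$ and $g_n$ are separable, the surrogate decrease equals the single-block decrease, and $\min_{\theta\in\Theta}g_n(\theta)=\sum_i\min_{\theta^i}g_n^i(\theta^i)$, so averaging over the uniform choice of $\hati_n$ gives $\E[g_n(\theta_{n-1})-g_n(\theta_n)\mid\theta_{n-1}]=\delta\,(g_n(\theta_{n-1})-\min_{\theta\in\Theta}g_n(\theta))$. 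Because $\theta_n$ is \emph{not} a global minimizer of $g_n$, here I must use the full majorant hypothesis $g_n\geq f$ to write $f(\theta_n)\leq g_n(\theta_n)$, together with $g_n(\theta_{n-1})=f(\theta_{n-1})$ (from $h_n(\theta_{n-1})=0$). Combining these yields the conditional estimate $\E[f(\theta_n)\mid\theta_{n-1}]\leq f(\theta_{n-1})-\delta\big(f(\theta_{n-1})-\min_{\theta}g_n(\theta)\big)$.

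Next I would lower-bound the full decrease $f(\theta_{n-1})-\min_\theta g_n(\theta)$ exactly as in \refProp{prop:conv2}: evaluating $g_n$ along the segment $\alpha\theta^\star+(1-\alpha)\theta_{n-1}$, controlling $h_n$ by $\tfrac{L\alpha^2}{2}\|\theta^\star-\theta_{n-1}\|_2^2$ via \refLemma{lemma:basic}, and using convexity of $f$. Writing $r_n\defin f(\theta_n)-f^\star$ and invoking monotonicity of $(f(\theta_n))_{n\ge0}$ (which holds almost surely, as in \refProp{prop:conv14}, so that the level-set assumption~(\ref{eq:bounded}) forces $\|\theta_{n-1}-\theta^\star\|_2\le R$), this produces the conditional one-step recursion $\E[r_n\mid\theta_{n-1}]\leq\min_{\alpha\in[0,1]}\big[(1-\delta\alpha)r_{n-1}+\tfrac{1}{2}\delta LR^2\alpha^2\big]$.

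The crucial observation is that the right-hand side, viewed as a function of $r_{n-1}$, is a pointwise minimum of affine functions and hence concave; taking total expectations and applying Jensen's inequality therefore converts it into a purely deterministic recursion on $\bar r_n\defin\E[r_n]$, namely $\bar r_n\leq\min_{\alpha\in[0,1]}\big[(1-\delta\alpha)\bar r_{n-1}+\tfrac{1}{2}\delta LR^2\alpha^2\big]$. From here everything reduces to calculus already done for the batch case. While $\bar r_{n-1}>LR^2$ the optimal $\alpha$ is $1$, giving $\bar r_n-\tfrac{LR^2}{2}\le(1-\delta)(\bar r_{n-1}-\tfrac{LR^2}{2})$, a geometric contraction that pushes $\bar r_n$ below $LR^2$ after precisely the number of steps defining $n_0$. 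Once $\bar r_{n-1}\le LR^2$ the optimum is $\alpha=\bar r_{n-1}/(LR^2)\le1$, giving $\bar r_n\le\bar r_{n-1}\big(1-\delta\bar r_{n-1}/(2LR^2)\big)$; inverting and using $(1-x)^{-1}\ge1+x$ yields $\bar r_n^{-1}\ge\bar r_{n-1}^{-1}+\delta/(2LR^2)$, which telescopes from $n_0$ to the stated bound $2LR^2/(2+\delta(n-n_0))$. Almost-sure convergence $f(\theta_n)\to f^\star$ then follows because $r_n$ decreases almost surely to some limit $\ell\ge0$ with $\E[\ell]\le\bar r_n\to0$, forcing $\ell=0$ a.s.

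For the $\mu$-strongly convex case I would drop the level-set assumption and instead insert the second-order growth property (\refLemma{lemma:second}), $\|\theta_{n-1}-\theta^\star\|_2^2\le\tfrac{2}{\mu}r_{n-1}$, into the segment bound, obtaining $\min_\theta g_n(\theta)-f^\star\le\big(\min_{\alpha\in[0,1]}1-\alpha+\tfrac{L\alpha^2}{\mu}\big)r_{n-1}=\beta\,r_{n-1}$ with exactly the $\beta$ of \refProp{prop:conv2}. Substituting into the conditional estimate gives $\E[r_n\mid\theta_{n-1}]\le(1-\delta(1-\beta))r_{n-1}=((1-\delta)+\delta\beta)r_{n-1}$, and taking expectations and iterating delivers the claimed linear rate. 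The main obstacle throughout is the passage from the per-iteration conditional inequality to a clean deterministic recursion on $\bar r_n$: recognizing the concavity of the $\alpha$-minimization and invoking Jensen's inequality is the key step that lets the randomized argument piggyback on the deterministic computations of \refProp{prop:conv2}.
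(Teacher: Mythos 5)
Your proposal is correct and follows essentially the same route as the paper's proof: the conditional one-step bound $\E[f(\theta_n)\mid\theta_{n-1}]\leq(1-\delta)f(\theta_{n-1})+\delta\min_\theta g_n(\theta)$ obtained from separability and the majorant property, the segment argument of Proposition~\ref{prop:conv2} with Lemma~\ref{lemma:basic}, the key concavity-plus-Jensen step to pass to a deterministic recursion on $\E[r_n]$, the geometric phase characterizing $n_0$ followed by the inverse telescoping, and the substitution $\|\theta^\star-\theta_{n-1}\|_2^2\leq 2r_{n-1}/\mu$ in the strongly convex case. Your explicit Fatou-type argument identifying the almost-sure limit as $f^\star$ is a slightly more detailed justification than the paper's citation of Proposition~\ref{prop:conv14}, but it does not change the method.
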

\proofatend
The fact that $(f(\theta_n))_{n \geq 0}$ almost surely converges follows the beginning of
Proposition~\ref{prop:conv14}. To show the convergence rates of
$(\E[f(\theta_n)])_{ n \geq 0}$, we adapt the proof of Proposition~\ref{prop:conv2} to out
stochastic block setting.  Let us denote by $\theta_n^\star$ a minimizer of the
surrogate function $g_n$ over $\Theta$.
Since the indices~$\hati_n$ are picked up uniformly at random, we have the following conditional probabilities
$$\PPP( \theta_n^i = \theta_n^{\star i} | \theta_{n-1})=\delta ~~\text{and}~~  \PPP( \theta_n^i = \theta_{n-1}^{i} | \theta_{n-1})=1-\delta.$$
We can then obtain the following inequalities for all $\theta$ in $\Theta$
\begin{equation}
\begin{split}
\E[f(\theta_n)| \theta_{n-1}] & \leq \E[g_n(\theta_n)| \theta_{n-1}] = \sum_{i=1}^k  \E[g_n^i(\theta_n^i)| \theta_{n-1}] 
                               = \sum_{i=1}^k (1-\delta) g_n^i(\theta_{n-1}^i) + \delta g_n^i(\theta_n^{\star i}) \\
                              & = (1-\delta) g_n(\theta_{n-1})  + \delta g_n(\theta_n^{\star}) \\
                              & \leq (1-\delta)f(\theta_{n-1}) + \delta g_n(\theta) \\
                              & \leq (1-\delta)f(\theta_{n-1}) + \delta \left( f(\theta)  + \frac{L}{2}\|\theta-\theta_{n-1}\|_2^2 \right),
\end{split} \label{eq:bcd:tmp}
\end{equation}
where we have used the conditional probabilities computed above and the fact that $|g_n(\theta)-f(\theta)| \leq \frac{L}{2}\|\theta-\theta_{n-1}\|_2^2$ according to Lemma~\ref{lemma:basic}.
Let us now follow the proof of Proposition~\ref{prop:conv2}:
\begin{displaymath}
\E[f(\theta_n)| \theta_{n-1}] \leq (1-\delta)f(\theta_{n-1}) + \delta \left( \min_{\alpha \in [0,1]}  f(\alpha\theta^\star + (1-\alpha)\theta_{n-1})  + \frac{L\alpha^2}{2}\|\theta^\star-\theta_{n-1}\|_2^2 \right).
\end{displaymath}
We can now proceed by considering two different cases.

\proofstep{Case 1: without strong convexity}
To simplify the notation, we now introduce the quantities $r_n \defin f(\theta_n)-f^\star$ and following again the proof of Proposition~\ref{prop:conv2}, we have
$$ \E[r_n|\theta_{n-1}] \leq (1-\delta) r_{n-1} + \delta \left( \min_{\alpha \in [0,1]} (1-\alpha){r_{n-1}} + \frac{LR^2\alpha^2}{2}\right).$$
The term in parenthesis on the right is a concave function of $r_{n-1}$ as a pointwise infimum of concave functions (in fact, pointwise infimum of linear functions).
By taking the expectation and using Jensen inequality, we thus have
$$ \E[r_n] \leq (1-\delta) \E[r_{n-1}] + \delta \left( \min_{\alpha \in [0,1]} (1-\alpha){\E[r_{n-1}]} + \frac{LR^2\alpha^2}{2}\right).$$

By following again the proof of Proposition~\ref{prop:conv2}, we have
\begin{displaymath}
  \E[ r_n] \leq (1-\delta) \E[r_{n-1}] + \delta \left\{ \begin{array}{ll}
      \frac{LR^2}{2} & \text{if}~~ \E[r_{n-1}] > LR^2 \\
      \E[r_{n-1}]\left(1-\frac{\E[r_{n-1}]}{2LR^2}\right) & \text{otherwise}.
      \end{array} \right.
\end{displaymath}
We also notice that the inequality $\E[r_n] \leq (1-\delta)\E[r_{n-1}] +
\delta\frac{LR^2}{2}$ is always true.  This yields after simple calculations $\E[r_n] \leq (1-\delta)^n r_0
+ (1- (1-\delta)^n)\frac{LR^2}{2}$ for all $n \geq 1$. 
We also remark that the definition of $n_0$ in the proposition implies that $(1-\delta)^{n_0} r_0 +
(1-(1-\delta)^{n_0})\frac{LR^2}{2} \leq LR^2$ after some short calculations.
Thus, we have for all $n > n_0$ 
$\E[r_n] \leq \E[r_{n-1}]\left(1 - \frac{\delta\E[r_{n-1}]}{2LR^2}\right)$ and thus
$\E[r_n]^{-1} \geq \E[r_{n_0}]^{-1} + \frac{(n-n_0)\delta}{2LR^2} \geq \frac{2 + (n-n_0)\delta}{2LR^2}$, following similar
derivations as in Proposition~\ref{prop:conv2}.
This is sufficient to conclude.

\proofstep{Case 2: under strong convexity assumptions}
We proceed similarly as in case 1, but upper-bound instead $\|\theta^\star-\theta_{n-1}\|_2^2$ by $2r_{n-1}/\mu$.
This leads us to a similar relation as in the proof of Proposition~\ref{prop:conv2}:
$$
\E[r_n] \leq (1-\delta) \E[r_{n-1}] + \delta \left(\min_{\alpha \in [0,1]}  1-\alpha + \frac{L\alpha^2}{\mu}\right) \E[r_{n-1}],
$$
and following again the proof of Proposition~\ref{prop:conv2}, we have
$$\E[r_n] \leq ((1-\delta) + \delta\beta)\E[r_{n-1}],$$
yielding the desired convergence rate.
\endproofatend

\begin{proposition}[\bfseries Convex Analysis for $\S_{L,\rho}(f,\kappa)$]\label{prop:conv9}
Assume that $f$ is convex.
Define $\delta\defin \frac{1}{k}$. Choose majorant surrogates $g_n$ in $\S_{L,\rho}(f,\theta_{n-1})$ with $\rho \geq
L$, then $(f(\theta_n))_{ n \geq 0}$ almost surely converges to $f^\star$ and we have
\condvspacesmall
   \begin{displaymath} 
     \E[f(\theta_n) - f^\star]  \leq  \frac{C_0}{(1-\delta) + \delta n} ~~\text{for all}~~n \geq 1,
\condvspacesmall
   \end{displaymath}
    with $C_0\! \defin \! (1\! -\!\delta) (f(\theta_0)\!-\!f^\star)\!+\!\frac{(1\!-\!\delta)\rho\!+\!\delta L}{2}\|\theta_{0}\!-\!\theta^\star\|_2^2$.
Assume now that $f$ is $\mu$-strongly convex, then we have an expected linear convergence rate
\condvspacesmall
\begin{displaymath}
\left\{ \begin{array}{lcr}
    \frac{L}{2}\E[ \|\theta^\star-\theta_n\|_2^2] & \leq & C_0\left( (1-\delta) + \delta \frac{L}{\rho+\mu}\right)^n \\
    \E[ f(\theta_n)-f^\star] & \leq & \frac{C_0}{\delta}\left( (1-\delta) + \delta \frac{L}{\rho+\mu}\right)^{n-1}
    \end{array} \right. .
\end{displaymath}
\condvspacesmall
\end{proposition}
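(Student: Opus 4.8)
The plan is to run the same randomization bookkeeping as in Proposition~\ref{prop:conv15}, but to track a single Lyapunov quantity mixing the function gap and the distance to $\theta^\star$. Write $r_n\defin f(\theta_n)-f^\star$ and $D_n\defin\|\theta_n-\theta^\star\|_2^2$, and let $\theta_n^\star$ be the (separable) minimizer of $g_n$ over $\Theta$, so that the updated block obeys $\theta_n^{\hati_n}=\theta_n^{\star\hati_n}$. Because the block index is uniform, conditioning on $\theta_{n-1}$ gives $\E[g_n(\theta_n)\mid\theta_{n-1}]=(1-\delta)g_n(\theta_{n-1})+\delta g_n(\theta_n^\star)$ and likewise $\E[D_n\mid\theta_{n-1}]=(1-\delta)D_{n-1}+\delta\|\theta_n^\star-\theta^\star\|_2^2$. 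First I would bound $g_n(\theta_n^\star)$: since $g_n$ is $\rho$-strongly convex with minimizer $\theta_n^\star$, Lemma~\ref{lemma:second} gives $g_n(\theta_n^\star)\le g_n(\theta^\star)-\frac{\rho}{2}\|\theta^\star-\theta_n^\star\|_2^2$, and $g_n(\theta^\star)=f^\star+h_n(\theta^\star)\le f^\star+\frac{L}{2}D_{n-1}$ by Lemma~\ref{lemma:basic}. Using the majorant property $f(\theta_n)\le g_n(\theta_n)$ and substituting $\delta\|\theta_n^\star-\theta^\star\|_2^2=\E[D_n\mid\theta_{n-1}]-(1-\delta)D_{n-1}$ yields the master inequality
\begin{displaymath}
\E[r_n\mid\theta_{n-1}]+\frac{\rho}{2}\,\E[D_n\mid\theta_{n-1}]\le(1-\delta)r_{n-1}+c\,D_{n-1},\qquad c\defin\frac{(1-\delta)\rho+\delta L}{2}.
\end{displaymath}

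Next I would set $\psi_n\defin(1-\delta)\E[r_n]+c\,\E[D_n]$, so that $\psi_0=C_0$. Taking full expectations above and using $\rho\ge L$ (hence $c\le\rho/2$) gives $\E[r_n]+c\,\E[D_n]\le\psi_{n-1}$, from which I read off two facts: first $\psi_n\le\psi_{n-1}-\delta\E[r_n]$, and second $\psi_n\le(1-\delta)\psi_{n-1}+\frac{\delta L}{2}\E[D_n]$ (the latter by eliminating $\E[r_n]$ through the master inequality and simplifying $c-(1-\delta)\rho/2=\delta L/2$). The non-strongly convex rate then follows by telescoping the first fact: summing $\delta\E[r_n]\le\psi_{n-1}-\psi_n$ over $n=1,\dots,N$ gives $\delta\sum_{n=1}^N\E[r_n]\le C_0-\psi_N$, and bounding $\psi_N\ge(1-\delta)\E[r_N]$ together with the almost-sure monotonicity of $(f(\theta_n))$ (so $\sum_{n=1}^N\E[r_n]\ge N\E[r_N]$) produces $\bigl((1-\delta)+\delta N\bigr)\E[r_N]\le C_0$, exactly the claim. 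Almost-sure convergence $f(\theta_n)\to f^\star$ follows as in Proposition~\ref{prop:conv14}: the values decrease a.s. and their expectation tends to $f^\star$, forcing the a.s. limit to be $f^\star$.

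The strongly convex case is where the two facts pay off. Combining the $\mu$-strong-convexity bound $f(\theta_n^\star)\ge f^\star+\frac{\mu}{2}\|\theta_n^\star-\theta^\star\|_2^2$ with Lemma~\ref{lemma:basic} (third bullet, $\theta=\theta^\star$) gives $\|\theta_n^\star-\theta^\star\|_2^2\le\frac{L}{\rho+\mu}D_{n-1}$, so the distance recursion collapses to a clean contraction $\E[D_n]\le\tau\,\E[D_{n-1}]$ with $\tau\defin(1-\delta)+\delta\frac{L}{\rho+\mu}$; iterating and noting $C_0\ge c\,D_0\ge\frac{L}{2}D_0$ yields $\frac{L}{2}\E[D_n]\le C_0\tau^n$, the first inequality. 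For the function values I would feed $\E[D_n]\le\tau^nD_0$ into $\psi_n\le(1-\delta)\psi_{n-1}+\frac{\delta L}{2}\E[D_n]$ and prove $\psi_n\le C_0\tau^n$ by induction, whose step reduces precisely to $C_0\ge\frac{\tau(\rho+\mu)}{2}D_0$. Then $\E[r_n]\le\psi_{n-1}/\delta\le\frac{C_0}{\delta}\tau^{n-1}$ (using $\delta\E[r_n]\le\psi_{n-1}-\psi_n\le\psi_{n-1}$) gives the second inequality.

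The one genuinely delicate point — and the step I expect to be the main obstacle — is verifying the inductive constant $C_0\ge\frac{\tau(\rho+\mu)}{2}D_0$. It does \emph{not} follow from $C_0\ge c\,D_0$ alone; the trick is to retain the term $(1-\delta)r_0$ in $C_0$ and invoke strong convexity at the starting point, $r_0\ge\frac{\mu}{2}D_0$. Indeed $C_0=(1-\delta)r_0+c\,D_0\ge\bigl((1-\delta)\frac{\mu}{2}+c\bigr)D_0$, and the identity $(1-\delta)\frac{\mu}{2}+c=\frac{(1-\delta)(\rho+\mu)+\delta L}{2}=\frac{\tau(\rho+\mu)}{2}$ makes the required bound hold with equality in the worst (purely quadratic) case. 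Everything else is the routine geometric-series and telescoping-sum bookkeeping already used in Propositions~\ref{prop:conv2}, \ref{prop:conva} and~\ref{prop:conv15}.
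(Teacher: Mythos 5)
Your proof is correct, and it splits cleanly into a half that coincides with the paper's and a half that genuinely diverges. Your master inequality $\E[r_n\mid\theta_{n-1}]+\frac{\rho}{2}\E[D_n\mid\theta_{n-1}]\le(1-\delta)r_{n-1}+\frac{(1-\delta)\rho+\delta L}{2}D_{n-1}$ is exactly the inequality the paper assembles (its Eq.~(\ref{eq:tmp_prop10})), obtained from the same three ingredients: the conditional-expectation identities for the uniformly chosen block, the second-order growth of the $\rho$-strongly convex $g_n$ at its minimizer (Lemma~\ref{lemma:second}), and $|h_n(\theta^\star)|\le\frac{L}{2}D_{n-1}$ from Lemma~\ref{lemma:basic}; your non-strongly-convex telescoping via $\psi_n\le\psi_{n-1}-\delta\E[r_n]$, $\psi_N\ge(1-\delta)\E[r_N]$ and a.s.\ monotonicity is the paper's summation argument repackaged as a Lyapunov decrement, with identical constants. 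The strongly convex case is where you take a different route. The paper keeps the coupled pair $(r_n,\xi_n)$ and injects the growth condition $r_n\ge\mu\xi_n$ with a free weight $\beta\in[0,1]$, choosing $\beta=\frac{(1-\delta)(\rho+\mu)}{(1-\delta)(\rho+\mu)+\delta L}$ so that precisely your $\psi_n$ contracts by $\tau$ in one shot; it then extracts the distance rate from $\mu\xi_n\le r_n$ and the function-value rate by a second induction with constant $C_0/\delta$. You instead decouple: the third bullet of Lemma~\ref{lemma:basic} at $\theta=\theta^\star$ combined with second-order growth of $f$ gives the standalone bound $\|\theta_n^\star-\theta^\star\|_2^2\le\frac{L}{\rho+\mu}D_{n-1}$, hence the clean contraction $\E[D_n]\le\tau\E[D_{n-1}]$ --- the exact block analogue of the batch argument in Proposition~\ref{prop:conva}, which your proof exposes as the $\delta=1$ specialization --- and you then close the induction $\psi_n\le(1-\delta)\psi_{n-1}+\frac{\delta L}{2}\E[D_n]$ by the constant check $C_0\ge\frac{\tau(\rho+\mu)}{2}D_0$, correctly verified through $r_0\ge\frac{\mu}{2}D_0$ and the identity $(1-\delta)\frac{\mu}{2}+\frac{(1-\delta)\rho+\delta L}{2}=\frac{\tau(\rho+\mu)}{2}$; this is the same algebra that makes the paper's $\beta$ work, just surfaced at $n=0$ rather than hidden in the weighted combination. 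What each buys: your route yields a transparent, self-contained geometric decay of the expected squared distance (and a one-line proof of the first displayed inequality via $C_0\ge\frac{L}{2}D_0$), at the cost of having to verify the inductive constant separately; the paper's $\beta$-trick contracts both quantities simultaneously without that verification, at the cost of an unmotivated choice of $\beta$ and a second induction for $r_n$. Both arguments produce the stated constants exactly, and I see no gap in yours.
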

\proofatend
The fact that $(f(\theta_n))_{n \geq 0}$ almost surely converges follows the beginning of Proposition~\ref{prop:conv14}. 
We then separately prove the two remaining parts of the proposition.

\proofstep{Without strong convexity assumptions}
Using the same notation as in the proof of Proposition~\ref{prop:conv15}, we can replace the inequality $g_n(\theta_n^\star) \leq g_n(\theta)$ in Eq.~(\ref{eq:bcd:tmp}) by $g_n(\theta_n^\star) \leq g_n(\theta) - \frac{\rho}{2}\|\theta_n^\star-\theta\|_2^2$ (using Lemma~\ref{lemma:second}), and we obtain
\begin{displaymath}
\E[f(\theta_n)| \theta_{n-1}] \leq (1-\delta)f(\theta_{n-1}) + \delta \left( f^\star  + \frac{L}{2}\|\theta^\star-\theta_{n-1}\|_2^2 -\frac{\rho}{2}\|\theta^\star-\theta_{n}^\star\|_2^2 \right),
\end{displaymath}
We also remark that
 \begin{displaymath}
 \begin{split}
    \E\left[\| \theta^\star-\theta_n \|_2^2 | \theta_{n-1} \right] & = \sum_{i=1}^k \E\left[\| \theta^{\star i}-\theta_n^{i}\|_2^2 | \theta_{n-1} \right] = \sum_{i=1}^k (1-\delta) \| \theta^{\star i}-\theta_{n-1}^{i}\|_2^2 +\delta \| \theta^{\star i}-\theta_n^{\star i}\|_2^2 \\
    & =(1-\delta)\|\theta^\star -\theta_{n-1}\|_2^2 + \delta\|\theta^\star-\theta_n^\star\|_2^2.
    \end{split}
 \end{displaymath}
Combining the two previous inequalities yields
\begin{displaymath}
\E\left[f(\theta_n) + \frac{\rho}{2}\|\theta^\star- \theta_n\|_2^2 ~\Big|~\theta_{n-1} \right] \leq (1-\delta) f(\theta_{n-1}) +\delta f^\star + \frac{(1-\delta) \rho+ \delta L}{2}\|\theta^\star-\theta_{n-1}\|_2^2.
\end{displaymath}
Let us now define $r_n \defin \E[f(\theta_n)-f^\star]$. Taking the expectation in the previous inequality gives
\begin{equation}
\begin{split}
   r_n - (1-\delta) r_{n-1}  & \leq  \frac{\delta L+(1-\delta) \rho}{2}\E[\|\theta^\star-\theta_{n-1}\|_2^2] - \frac{\rho}{2} \E[\|\theta^\star-\theta_n^\star\|_2^2] \\
                        & \leq \frac{\delta L+(1-\delta)\rho}{2}\left(\E[\|\theta^\star-\theta_{n-1}\|_2^2] - \E[\|\theta^\star-\theta_n\|_2^2]\right).
\end{split} \label{eq:tmp_prop10}
\end{equation}
Summing these inequalities and using the fact that $r_n \leq r_{n-1}$ yields
\begin{displaymath}
   n \delta r_n + (1-\delta)(r_n-r_0) \leq \sum_{k=1}^n r_k - (1-\delta) r_{k-1} \leq \frac{\delta L+(1-\delta)\rho}{2}\|\theta^\star-\theta_{0}\|_2^2,
\end{displaymath}
which gives the desired convergence rate.

\proofstep{With strong convexity assumptions}
Assume now that $f$ is $\mu$-strongly convex. To simplify the notation, we
introduce the quantity $\xi_n \defin
\frac{1}{2}\E[\|\theta^\star-\theta_n\|_2^2]$. We can now rewrite the first inequality in~(\ref{eq:tmp_prop10}) as
$$   r_n + \rho \xi_n \leq (1-\delta) r_{n-1} + ((1-\delta) \rho + \delta L) \xi_{n-1}.$$
We are going to exploit two
inequalities. Since we have the second-order growth property $r_n \geq \mu \xi_n$, for all $\beta$ in $[0,1]$,
\begin{displaymath}
  \beta r_n + (\rho + (1-\beta)\mu)\xi_n \leq (1-\delta) r_{n-1} + ((1-\delta) \rho + \delta L) \xi_{n-1}.
\end{displaymath}
By choosing $\beta \defin \frac{(1-\delta)(\rho+\mu)}{(1-\delta)(\rho+\mu)+\delta L}$, it is easy to show that
\begin{displaymath}
  (1-\delta) r_n + ((1-\delta) \rho + \delta L)\xi_n \leq \frac{(1-\delta)(\rho+\mu)+\delta L  }{\rho+\mu} \left( (1-\delta) r_{n-1} + ((1-\delta) \rho + \delta L) \xi_{n-1}\right).
\end{displaymath}
Thus, we have by induction
\begin{displaymath}
  (1-\delta) r_n + ((1-\delta) \rho + \delta L)\xi_n \leq \left(\frac{(1-\delta)(\rho+\mu)+\delta L  }{\rho+\mu}\right)^n \left(  (1-\delta) r_{0} + ((1-\delta) \rho + \delta L) \xi_{0}\right),
\end{displaymath}
and again, since $\mu \xi_n \leq r_n$, we obtain the convergence rate of $(\xi_n)_{n \geq 0}$
\begin{displaymath}
   \xi_n \leq C \left(\frac{ (1-\delta)(\rho+\mu)+\delta L  }{\rho+\mu}\right)^n \leq \alpha^n \left( \frac{(1-\delta) r_0}{L}+\xi_0\right),
\end{displaymath}
where we have defined the quantities $\alpha \defin \frac{ (1-\delta)(\rho+\mu)+ \delta L
}{\rho+\mu}$ and $C\defin \frac{ (1-\delta) r_{0} + ((1-\delta) \rho + \delta L) \xi_{0}}{(1-\delta)(\rho+\mu) + \delta L}$. We now compute the
convergence rate of $(r_n)_{n\geq 0}$ by induction. Suppose that $r_{n-1} \leq
C' \alpha^{n-2}$ for some constant $C'$ and some $n \geq 2$. We have shown in~(\ref{eq:tmp_prop10})
that $r_n \leq (1-\delta) r_{n-1} + L \xi_{n-1}$. By using the induction
hypothesis, we have  $r_n \leq ((1-\delta) C'/\alpha + LC) \alpha^{n-1}$. We
therefore study under which conditions we have both $ ((1-\delta) C'/\alpha + LC)
\leq C'$ and $r_1 \leq C'$, which are sufficient conditions to have by
induction $r_n \leq C' \alpha^{n-1}$ for all $n$.
It is easy to show that the quantity $C' \defin  \frac{(1-\delta) r_0 + ((1-\delta) \rho+\delta L)
\xi_0}{\delta}$ satisfies such conditions.
\endproofatend

\myvspace{0.4}
The quantity $\delta\!=\!1/k$ represents the probability for
a block to be updated during an iteration. 
Note that updating all blocks ($\delta\!=\!1$) gives the same
results as in Section~\ref{sec:generic}. 
Linear convergence for strongly convex objectives with
block coordinate descent is classical since the
works of~\citet{tseng,nesterov6}. Results of the same nature have also 
been obtained by \citet{richtarik} for composite functions.

\section{Frank-Wolfe Scheme}\label{sec:conditional}
In this section, we show how to use surrogates to generalize the
Frank-Wolfe method, an old convex optimization technique that has 
regained some popularity in machine
learning \citep{zhang3,harchaoui3,hazan,zhang4}.  We present this approach in
Algorithm~\ref{alg:conditional}. 

\myvspace{0.3}
\begin{algorithm}[hbtp]
    \caption{Frank-Wolfe Scheme}\label{alg:conditional}
    \begin{algorithmic}[1]
    \INPUT $\theta_0 \in \Theta$; $N$ (number of iterations).
    \FOR{ $n=1,\ldots,N$}
    \STATE Let~$g_n$ be a majorant surrogate in~$\S_{L,L}(f,\theta_{n-1})$.
    \STATE Compute a search direction:
    \condvspace
    \begin{displaymath}
       \nu_n \in \argmin_{\theta \in \Theta} \Big[ g_n(\theta)-\frac{L}{2}\|\theta-\theta_{n-1}\|_2^2\Big]. 
    \condvspace
    \end{displaymath}
    \condvspace
    \STATE Line search:
     $  \alpha^\star \! \defin\!  \displaystyle \argmin_{\alpha \in [0,1]} g_n(\alpha \nu_n  + (1-\alpha) \theta_{n-1})$. 
    \STATE Update solution:
       $\theta_n \defin \alpha^\star\nu_n + (1-\alpha^\star)\theta_{n-1}.$ 
    \ENDFOR
    \OUTPUT $\theta_{N}$ (final estimate);
    \end{algorithmic}
\end{algorithm}

\condvspace
When $f$ is smooth and the ``gradient Lipschitz based surrogates'' from
Section~\ref{subsec:surrogates} are used, Algorithm~\ref{alg:conditional}
becomes the classical Frank-Wolfe method.\footnote{Note that the classical
Frank-Wolfe algorithm performs in fact the line search over the function $f$
and not $g_n$.} Our point of view is however more general since it allows 
for example to use
``proximal gradient surrogates''.
The next proposition gives a convergence rate.

\begin{proposition}[\bfseries Convex Analysis]\myvspace{0.55}~\label{prop:conv11}\newline
Assume that $f$ is convex and that $\Theta$ is bounded. Call $R\defin \max_{\theta_1,\theta_2 \in \Theta} \|\theta_1-\theta_2\|_2$ the diameter of~$\Theta$. Then, the sequence $(f(\theta_n))_{n \geq 0}$ provided by Algorithm~\ref{alg:conditional} converges to the minimum $f^\star$ of $f$ over $\Theta$ and
\condvspacesmall
\begin{displaymath}
f(\theta_n)-f^\star \leq \frac{2LR^2}{n+2} ~~~~\text{for all}~n \geq 1.
\condvspacesmall
\end{displaymath}
\end{proposition}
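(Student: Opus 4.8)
The plan is to reduce everything to the scalar recursion already solved in the non-strongly-convex case of Proposition~\ref{prop:conv2}. The central object is the function $\phi_n(\theta) \defin g_n(\theta) - \frac{L}{2}\|\theta-\theta_{n-1}\|_2^2$, i.e. the objective minimized in the search-direction step. First I would record three facts about it. Since $g_n$ lies in $\S_{L,L}(f,\theta_{n-1})$ it is $L$-strongly convex, so subtracting $\frac{L}{2}\|\theta-\theta_{n-1}\|_2^2$ exactly cancels the strong-convexity modulus and leaves a convex function; hence $\phi_n$ is convex. Writing $h_n \defin g_n-f$, the first inequality of Lemma~\ref{lemma:basic} gives $h_n(\theta) \leq \frac{L}{2}\|\theta-\theta_{n-1}\|_2^2$, so $\phi_n(\theta) = f(\theta) + h_n(\theta) - \frac{L}{2}\|\theta-\theta_{n-1}\|_2^2 \leq f(\theta)$ for every $\theta$; that is, $\phi_n$ is a convex \emph{minorant} of $f$. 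Finally, $h_n(\theta_{n-1})=0$ yields $\phi_n(\theta_{n-1}) = f(\theta_{n-1})$.

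Second, I would exploit that $\nu_n$ minimizes $\phi_n$ over $\Theta$. Since $\theta^\star \in \Theta$ and $\phi_n \leq f$ everywhere, this gives $\phi_n(\nu_n) \leq \phi_n(\theta^\star) \leq f(\theta^\star) = f^\star$. This is the step that plays, in our more general setting, the role of the gradient-linearization lower bound in the classical Frank-Wolfe proof.

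Third, I would combine the line search with the majorant property. For $\alpha \in [0,1]$ set $\theta_\alpha \defin \alpha\nu_n + (1-\alpha)\theta_{n-1}$. Convexity of $\phi_n$ together with the two facts above gives $\phi_n(\theta_\alpha) \leq \alpha\phi_n(\nu_n) + (1-\alpha)\phi_n(\theta_{n-1}) \leq \alpha f^\star + (1-\alpha)f(\theta_{n-1})$, and adding back the quadratic term, $g_n(\theta_\alpha) = \phi_n(\theta_\alpha) + \frac{L\alpha^2}{2}\|\nu_n-\theta_{n-1}\|_2^2 \leq \alpha f^\star + (1-\alpha)f(\theta_{n-1}) + \frac{L\alpha^2 R^2}{2}$, where $\|\nu_n-\theta_{n-1}\|_2 \leq R$ because both points lie in $\Theta$. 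Since $g_n$ is majorant and $\theta_n$ results from the line search over the segment, $f(\theta_n) \leq g_n(\theta_n) = \min_{\alpha\in[0,1]} g_n(\theta_\alpha)$, so subtracting $f^\star$ produces exactly the recursion $f(\theta_n)-f^\star \leq \min_{\alpha\in[0,1]} (1-\alpha)(f(\theta_{n-1})-f^\star) + \frac{L R^2\alpha^2}{2}$ that drives the proof of Proposition~\ref{prop:conv2}.

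Finally, I would invoke verbatim the elementary recursion analysis from the non-strongly-convex part of Proposition~\ref{prop:conv2} (the dichotomy according to whether $f(\theta_{n-1})-f^\star \geq LR^2$, and the resulting inequality $r_n^{-1} \geq r_{n-1}^{-1} + \frac{1}{2LR^2}$ with $r_n \defin f(\theta_n)-f^\star$) to conclude $f(\theta_n)-f^\star \leq \frac{2LR^2}{n+2}$; since this bound vanishes and $f(\theta_n) \geq f^\star$, it also establishes convergence of $f(\theta_n)$ to $f^\star$. The only genuinely new idea, and thus the main obstacle, is the first paragraph: recognizing that $\phi_n$ is simultaneously convex and a minorant of $f$ touching it at $\theta_{n-1}$. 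This two-sided control, coming from $|h_n| \leq \frac{L}{2}\|\cdot-\theta_{n-1}\|_2^2$, is what replaces the explicit gradient linearization of the textbook argument and lets the scheme work for any $\S_{L,L}$ surrogate, such as the proximal-gradient one.
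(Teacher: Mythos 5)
Your proof is correct and is essentially the paper's own argument in a different packaging: the convexity of $\phi_n \defin g_n - \frac{L}{2}\|\cdot-\theta_{n-1}\|_2^2$ along the segment is exactly the $L$-strong-convexity inequality for $g_n$ that the paper invokes in its first display, your chain $\phi_n(\nu_n)\leq\phi_n(\theta^\star)\leq f^\star$ reproduces the paper's combined use of the definition of $\nu_n$ and of Lemma~\ref{lemma:basic} at $\theta^\star$, and both arguments then feed the identical recursion $r_n \leq \min_{\alpha\in[0,1]}(1-\alpha)r_{n-1}+\frac{L R^2\alpha^2}{2}$ into the elementary analysis of Proposition~\ref{prop:conv2}. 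There is no gap; the minorant viewpoint on $\phi_n$ is a clean restatement rather than a genuinely different route.
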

\proofatend
    We have from the strong convexity of $g_n$:
    \begin{displaymath}
       f(\theta_n) \leq g_n(\theta_n) \leq \min_{\alpha \in [0,1]} (1-\alpha) g_n(\theta_{n-1}) + \alpha g_n(\nu_n) - \frac{L}{2}\alpha(1-\alpha)\|\theta_{n-1}-\nu_n\|_2^2,
    \end{displaymath}
    where $\nu_n$ is defined in Algorithm~\ref{alg:conditional}. Let us now consider $\theta^\star$ such that $f(\theta^\star)=f^\star$. Then, we have
    \begin{equation}
    \begin{split}
       f(\theta_n) &\leq \min_{\alpha \in [0,1]} (1-\alpha) f(\theta_{n-1}) + \alpha\left( g_n(\nu_n)-\frac{L}{2}\|\nu_n-\theta_{n-1}\|_2^2\right) + \frac{L}{2}(\alpha - \alpha(1-\alpha))\|\nu_n-\theta_{n-1}\|_2^2.\\
       & \leq \min_{\alpha \in [0,1]} (1-\alpha) f(\theta_{n-1}) + \alpha\left( g_n(\theta^\star)-\frac{L}{2}\|\theta^\star-\theta_{n-1}\|_2^2\right) + \frac{\alpha^2LR^2}{2}\\
       & \leq \min_{\alpha \in [0,1]} (1-\alpha) f(\theta_{n-1}) + \alpha f^\star + \frac{\alpha^2LR^2}{2},
       \end{split} \label{eq:conditional}
    \end{equation}
    where we have first used the equality $g_n(\theta_{n-1})=f(\theta_{n-1})$, then the second inequality exploits $g_n(\nu_n)-\frac{L}{2}\|\nu_n-\theta_{n-1}\|_2^2 \leq g_n(\theta^\star)-\frac{L}{2}\|\theta^\star-\theta_{n-1}\|_2^2$ from the definition of $\nu_n$. Finally, we use the fact that $g_n(\theta^\star) = f^\star + h_n(\theta^\star)$ where $h_n$ is the approximation error function $g_n-f$ with $|h_n(\theta^\star)| \leq \frac{L}{2}\|\theta^\star-\theta_{n-1}\|_2^2$ is ensured by Lemma~\ref{lemma:basic}.
    Minimizing~(\ref{eq:conditional}) with respect to $\alpha$ and denoting by $r_n\defin f(\theta_n)-f^\star$ yields
    \begin{displaymath}
       r_n \leq \left\{ \begin{array}{ll}
       \frac{LR^2}{2} & \text{if}~~ r_{n-1} \leq LR^2 \\
       r_{n-1}\left(1-\frac{r_{n-1}}{2LR^2}\right) & \text{otherwise}
       \end{array}\right..
    \end{displaymath}
    These are the same relations used in the proof of Proposition~\ref{prop:conv2}, leading therefore to the same convergence rate.
\endproofatend

\condvspace
Other extensions of Algorithm~\ref{alg:conditional} can also easily be
designed by using our framework. We present for instance in the supplemental material
a randomized block Frank-Wolfe algorithm, revisiting
the recent
work of~\citet{lacoste}. 
\condvspace

\section{Accelerated Scheme}
A popular scheme for convex optimization is the accelerated proximal
gradient method~\citep{nesterov,beck}. By using surrogate functions, we
exploit similar ideas in Algorithm~\ref{alg:accelerated_batch}. When
using the ``Lipschitz gradient surrogates'' of
Section~\ref{subsec:surrogates}, Algorithm~\ref{alg:accelerated_batch} is
exactly the scheme 2.2.19 of \citet{nesterov4}. When using the
``proximal gradient surrogate'' and when $\mu=0$, it is equivalent to the
FISTA method of~\citet{beck}.
Algorithm~\ref{alg:accelerated_batch} consists of iteratively minimizing a surrogate 
computed at a point $\kappa_{n-1}$ extrapolated from $\theta_{n-1}$
and~$\theta_{n-2}$. It results in better
convergence rates, as shown in the next proposition by adapting
a proof technique of~\citet{nesterov4}.

  \myvspace{0.35}
\ifthenelse{\isundefined{\supplemental}}{
\begin{algorithm}[h!]
}{
\begin{algorithm}[hbtp]
}
    \caption{Accelerated Scheme}\label{alg:accelerated_batch}
    \begin{algorithmic}[1]
    \INPUT $\theta_0 \in \Theta$;  $N$; $\mu$ (strong convexity parameter);
    \STATE Initialization: $\kappa_0 \defin \theta_0$; $a_0=1$;
    \FOR{ $n=1,\ldots,N$}
    \STATE Choose a surrogate $g_n$ in $\S_{L,L+\mu}(f,\kappa_{n-1})$;
    \STATE Update solution:
       $\theta_n \defin \argmin_{\theta \in \Theta} g_n(\theta)$;
    \STATE Compute $a_n \geq 0$ such that:
    $$ \textstyle a_n^2 = (1-a_n)a_{n-1}^2 + \frac{\mu}{L+\mu} a_n;$$ 
    \STATE Set $\beta_n\defin \frac{a_{n-1}(1-a_{n-1})}{a_{n-1}^2 + a_n}$ and update $\kappa$:
    \begin{displaymath}
        \kappa_n \defin \theta_n + \beta_n(\theta_n-\theta_{n-1});
    \end{displaymath}
    \ENDFOR
    \OUTPUT $\theta_{N}$ (final estimate);
    \end{algorithmic}
\end{algorithm}

\begin{proposition}[\bfseries Convex Analysis]\myvspace{0.15}~\label{prop:accelerated}\newline
Assume that $f$ is convex. When $\mu=0$, the sequence $(\theta_n)_{n \geq 0}$ provided by Algorithm~\ref{alg:accelerated_batch} satisfies for all $n \geq 1$,
\condvspacesmall
\begin{displaymath}
f(\theta_n) - f^\star  \leq \frac{2L\|\theta_0-\theta^\star\|_2^2}{(n+2)^2}. 
\condvspacesmall
\end{displaymath}
When $f$ is $\mu$-strongly convex, we have instead a linear convergence rate:
for $n\geq 1$,
\condvspacesmall
\begin{displaymath}
f(\theta_n) - f^\star  \leq \left( 1- \sqrt{\frac{\mu}{L+\mu}}\right)^{n-1}\frac{L\|\theta_0-\theta^\star\|_2^2}{2}.
\end{displaymath}
\myvspace{0.4}
\end{proposition}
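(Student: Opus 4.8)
The plan is to adapt Nesterov's estimate-sequence technique, using the key inequality of Lemma~\ref{lemma:basic} in place of the usual gradient lower bound. Since $g_n \in \S_{L,L+\mu}(f,\kappa_{n-1})$ and $\theta_n$ minimizes $g_n$, Lemma~\ref{lemma:basic} (with $\rho=L+\mu$, $\kappa=\kappa_{n-1}$, $\theta'=\theta_n$) gives, for every $\theta$ in $\Theta$,
\[ f(\theta_n) + \tfrac{L+\mu}{2}\|\theta_n-\theta\|_2^2 \leq f(\theta) + \tfrac{L}{2}\|\theta-\kappa_{n-1}\|_2^2. \]
Reading this as $m_n(\theta)\leq f(\theta)$ with $m_n(\theta)\defin f(\theta_n)+\tfrac{L+\mu}{2}\|\theta_n-\theta\|_2^2-\tfrac{L}{2}\|\theta-\kappa_{n-1}\|_2^2$, I note that the quadratic coefficient collapses to $\mu/2$, so $m_n$ is a $\mu$-strongly convex quadratic minorant of $f$. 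This is the surrogate-based substitute for Nesterov's linear-plus-quadratic model built from $\nabla f(\kappa_{n-1})$, and it is the only place where the surrogate structure enters.

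I would then build the estimate sequence $\phi_n\defin(1-a_n)\phi_{n-1}+a_n m_n$, with $\phi_0(\theta)\defin f(\theta_0)+\tfrac{L+\mu}{2}\|\theta-\theta_0\|_2^2$ and $\lambda_n\defin\prod_{k=1}^n(1-a_k)$, and maintain two invariants by induction. The first, $\phi_n(\theta)\leq(1-\lambda_n)f(\theta)+\lambda_n\phi_0(\theta)$, is immediate from $m_n\leq f$ together with $(1-a_n)(1-\lambda_{n-1})+a_n=1-\lambda_n$. Writing $\phi_n(\theta)=\phi_n^\star+\tfrac{\gamma_n}{2}\|\theta-v_n\|_2^2$, the curvature obeys $\gamma_n=(1-a_n)\gamma_{n-1}+a_n\mu$; the crucial observation is that the choice $\gamma_n=(L+\mu)a_n^2$ (so $\gamma_0=L+\mu$) turns this identity into exactly the recursion $a_n^2=(1-a_n)a_{n-1}^2+\tfrac{\mu}{L+\mu}a_n$ used in Algorithm~\ref{alg:accelerated_batch}. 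The second invariant is $f(\theta_n)\leq\phi_n^\star$; combined with the first it yields $f(\theta_n)-f^\star\leq\lambda_n(\phi_0(\theta^\star)-f^\star)$.

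To finish I would control $\lambda_n$ in each regime. When $\mu=0$ the recursion reads $a_n^2=(1-a_n)a_{n-1}^2$, so $\lambda_n=a_n^2$, and from $\tfrac{1}{a_n}-\tfrac{1}{a_{n-1}}=\tfrac{1-\sqrt{1-a_n}}{a_n}\geq\tfrac12$ one gets $a_n\leq\tfrac{2}{n+2}$, hence $\lambda_n\leq\tfrac{4}{(n+2)^2}$ and the $O(1/n^2)$ bound. When $\mu>0$ I would show by induction that $a_n\geq\sqrt{\mu/(L+\mu)}$ for all $n$: if $a_{n-1}^2\geq\mu/(L+\mu)$, then $a_n^2=(1-a_n)a_{n-1}^2+\tfrac{\mu}{L+\mu}a_n\geq\mu/(L+\mu)$; since $a_0=1$ this propagates, so each factor satisfies $1-a_n\leq1-\sqrt{\mu/(L+\mu)}$ and $\lambda_n\leq(1-\sqrt{\mu/(L+\mu)})^n$, giving the linear rate.

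The main obstacle is the second invariant, $f(\theta_n)\leq\phi_n^\star$. Its inductive step requires the surrogate descent inequality $f(\theta_n)\leq f(\kappa_{n-1})-\tfrac{L+\mu}{2}\|\theta_n-\kappa_{n-1}\|_2^2$ (the displayed inequality at $\theta=\kappa_{n-1}$), followed by completing the square in $\phi_n=(1-a_n)\phi_{n-1}+a_n m_n$ to evaluate $\phi_n^\star$ and matching the resulting cross terms. This is precisely the computation that forces the center update for $v_n$ and the extrapolation weight $\beta_n=\tfrac{a_{n-1}(1-a_{n-1})}{a_{n-1}^2+a_n}$, and it also requires expressing $\kappa_{n-1}$ as a genuine convex combination of $\theta_{n-1}$ and $v_{n-1}$ so that it stays in $\Theta$. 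I further expect the exact leading constants—the factor $\tfrac{L}{2}\|\theta_0-\theta^\star\|_2^2$ and the exponent $n-1$, i.e.\ the absence of a spurious $f(\theta_0)-f^\star$ term—to require either a careful initialization of $\phi_0$ or, equivalently, the FISTA-style potential-function telescoping of~\citet{beck}; the asymptotic rates are unaffected by this choice.
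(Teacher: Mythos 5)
Your proposal is correct and follows essentially the same route as the paper's proof: Nesterov's estimate-sequence argument driven by the single inequality $f(\theta_n)+\frac{L+\mu}{2}\|\theta_n-\theta\|_2^2 \leq f(\theta)+\frac{L}{2}\|\theta-\kappa_{n-1}\|_2^2$ from Lemma~\ref{lemma:basic}, the identification $\gamma_n=(L+\mu)a_n^2$ with the algorithm's recursion for $a_n$, the invariant that $\kappa_n$ is the prescribed combination of $\theta_n$ and $v_n$ (the paper's fourth induction condition, which produces $\beta_n$), and the same bounds $a_n\leq 2/(n+2)$ and $a_n\geq\sqrt{\mu/(L+\mu)}$ for the two regimes. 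The one point you deferred---the exact constant $\frac{L}{2}\|\theta_0-\theta^\star\|_2^2$ and exponent $n-1$ without a spurious $f(\theta_0)-f^\star$ term---is handled in the paper exactly as you anticipated, by careful initialization: the induction starts at $n=1$ with $\barg_1(\theta)\defin f(\theta_1)+\frac{L+\mu}{2}\|\theta-\theta_1\|_2^2$, which satisfies $\barg_1(\theta)\leq f(\theta)+\frac{L}{2}\|\theta-\theta_0\|_2^2$ by Lemma~\ref{lemma:basic} since $\kappa_0=\theta_0$, giving $A_1=L$.
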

\proofatend
We follow the proof techniques introduced by~\citet{nesterov4} using the so
called ``estimate sequences'', and more precisely we adapt the proof of~\citet[][Theorem 2.2.8]{nesterov4}
to deal with our surrogate functions.

\proofstep{Preliminaries}
We rely heavily on Lemma~\ref{lemma:basic}, which we recall and expand here. Let us define $\rho \defin L+\mu$. Then, for all $\theta$ in $\Theta$,
\begin{equation}
   \begin{split}
   f(\theta_n) & \leq f(\theta) + \frac{L}{2}\|\theta-\kappa_{n-1}\|_2^2 - \frac{\rho}{2}\|\theta-\theta_n\|_2^2 \\
   & = f(\theta) + \frac{L}{2}\|\theta-\kappa_{n-1}\|_2^2 - \frac{\rho}{2}\|\theta-\kappa_{n-1}+\kappa_{n-1}-\theta_n\|_2^2 \\
   & = f(\theta) - \frac{\mu}{2}\|\theta-\kappa_{n-1}\|_2^2 - \frac{\rho}{2}\|\theta_n-\kappa_{n-1}\|_2^2 + \rho (\theta-\kappa_{n-1})^\top(\theta_n - \kappa_{n-1}).
   \end{split} \label{eq:tmp_accelerated1}
\end{equation}
To simplify the notation in the sequel, we introduce the quantity~$\xi_n \defin \theta_n-\kappa_{n-1}$, which~\citet{nesterov4} calls ``gradient mapping'', up to a multiplicative constant. Then,~(\ref{eq:tmp_accelerated1}) can be rewritten 
\begin{equation}
   f(\theta_n) \leq f(\theta) - \frac{\mu}{2}\|\theta-\kappa_{n-1}\|_2^2 - \frac{\rho}{2}\|\xi_n\|_2^2 + \rho (\theta-\kappa_{n-1})^\top \xi_n.\label{eq:tmp_accelerated2}
\end{equation}
\proofstep{Definition of the estimate sequence by induction}
Keeping in mind this key quantity, let us now proceed by induction to prove the main result. The recursion hypothesis $\H_{n}$ for $n\geq 1$ is the existence of a function $\barg_{n}: \Real^p \to \Real$ such that 
\begin{equation}
   \left\{  \begin{array}{ll}
       \barg_{n}(\theta) & = \barg_{n}^\star + \frac{\gamma_{n}}{2}\|\theta-v_{n}\|_2^2 \\
       \barg_{n}(\theta) & \leq f(\theta) + \frac{A_{n}}{2}\|\theta-\theta_0\|_2^2 ~~~ \forall \theta \in \Theta \\
       f(\theta_{n}) & \leq  \barg_{n}^\star \\
       (\rho a_{n} + \gamma_{n}) \kappa_{n} & = \rho a_{n} \theta_{n} + \gamma_{n} v_{n}
       \end{array} \right. , \tag{$\H_{n}$}
\end{equation}
for some $v_{n}$ and some values $A_{n},\gamma_{n}$ recursively defined as follows:
$A_{k}=A_{k-1}(1-a_{k-1})$ and $\gamma_k=(1-a_{k-1})\gamma_{k-1}+\mu a_{k-1}$ for all $k \geq 2$, and 
$A_1=L$, $\gamma_1=\rho$. We recall that the scalars $a_k$ are also defined in the
algorithm.
The functions $\barg_{n}$ which we are going to
recursively define are related to the ``estimate sequences'' introduced by~\citet{nesterov4}.
Along with the quantity $A_{n}$, they indeed reflect the convergence rate of
the algorithm, since $\H_{n}$ implies that $f(\theta_{n})-f^\star \leq
\frac{A_{n}}{2}\|\theta^\star-\theta_0\|_2^2$.

\proofstep{Initialization of the induction for $n=1$}
Let us first initialize the induction, by showing that $\H_1$ is true. We remark that $A_1=L$ and $\gamma_1=\rho$ are chosen such that
We can thus define 
\begin{displaymath}
   \barg_1(\theta) = f(\theta_1) + \frac{\gamma_1}{2}\|\theta-\theta_1\|_2^2.
\end{displaymath}
In other words, we define $v_1\defin\theta_1$ and $\barg_1^\star\defin
f(\theta_1)$, and we obviously have the first and third conditions of $\H_1$.
The second one is simply an application of Lemma~\ref{lemma:basic}, when
noticing that $\kappa_0=\theta_0$.
The last condition is also satisfied because $\kappa_1=\theta_1=v_1$ (since $\beta_1=0$ in the algorithm).

\proofstep{Induction argument}
Since we have shown that $\H_1$ is true, we now assume $\H_{n-1}$ for $n\geq 2$
and show $\H_n$. We define~$\barg_n: \Real^p \to \Real$ such that for all $\theta$ in $\Real^p$ 
\begin{equation}
\barg_n(\theta)  = (1-a_{n-1})\barg_{n-1}(\theta) + a_{n-1} \left( f(\theta_n) + \frac{\mu}{2}\|\theta-\kappa_{n-1}\|_2^2 + \frac{\rho}{2}\|\xi_n\|_2^2 - \rho (\theta-\kappa_{n-1})^\top \xi_n\right). \label{eq:tmp_accelerated3}
  \end{equation}
Because of~(\ref{eq:tmp_accelerated2}), the term between parenthesis on the right is smaller than $f(\theta)$ and thus, we have $\barg_n(\theta) \leq f(\theta) + (1-a_{n-1})\frac{A_{n-1}}{2}\|\theta-\theta_0\|_2^2=f(\theta) + \frac{A_{n}}{2}\|\theta-\theta_0\|_2^2$, by definition of $A_n$. Thus, the second condition of $\H_n$ is true. The function $\barg_n$ is moreover quadratic and the first condition is easy to check (using appropriate values for $\barg_n^\star$ and $v_n$).
Let us now check the third condition, namely that $f(\theta_n) \leq \min_{\theta \in \Theta} \barg_n(\theta)$. 
We first remark that
\begin{displaymath}
\begin{split}
   \barg_{n-1}(\theta) & = \barg_{n-1}^\star + \frac{\gamma_{n-1}}{2}\|\theta-v_{n-1}\|_2^2 \\
                       & \geq f(\theta_{n-1}) + \frac{\gamma_{n-1}}{2}\|\theta-v_{n-1}\|_2^2 \\
                       & \geq f(\theta_{n}) + \frac{\rho}{2}\|\xi_n\|_2^2 - \rho (\theta_{n-1}-\kappa_{n-1})^\top\xi_n + \frac{\gamma_{n-1}}{2}\|\theta-v_{n-1}\|_2^2, 
\end{split}
\end{displaymath}
The first inequality comes from the induction hypothesis $\H_{n-1}$ and the second inequality comes from~(\ref{eq:tmp_accelerated2}).
Then, we can combine this inequality with~(\ref{eq:tmp_accelerated3}).  
\begin{equation}
   \barg_n(\theta) \geq f(\theta_n) + \frac{\rho}{2}\|\xi_n\|_2^2 - (1-a_n)\rho(\theta_{n-1}-\kappa_{n-1})^\top\xi_n + B(\theta),\label{eq:tmp_accelerated4}
\end{equation}
where
\begin{displaymath}
B(\theta) \defin \frac{(1-a_{n-1})\gamma_{n-1}}{2}\|\theta-v_{n-1}\|_2^2 + \frac{a_{n-1}\mu}{2}\|\theta-\kappa_{n-1}\|_2^2 - \rho a_{n-1} (\theta-\kappa_{n-1})^\top \xi_n.
\end{displaymath}
Note that $B(\theta)$ is also the part of $\barg_n$ dependent on~$\theta$, and such that $v_n = \argmin_{\theta \in \Real^p} B(\theta)$.
Minimizing $B(\theta)$ yields
\begin{equation}
  v_n = \frac{1}{\gamma_{n}}\left( (1-a_{n-1})\gamma_{n-1} v_{n-1} + a_{n-1} \mu \kappa_{n-1}\right) + \frac{\rho a_{n-1}}{\gamma_n} \xi_n, \label{eq:tmp_accelerated5}
\end{equation}
where we recall that $\gamma_n = (1-a_{n-1})\gamma_{n-1} + \mu a_{n-1}$.
Moreover, we have the convexity inequality
\begin{displaymath}
\begin{split}
   B(\theta) & = \frac{\gamma_n}{2}\left(\frac{(1-a_{n-1})\gamma_{n-1}}{\gamma_n}\|\theta-v_{n-1}\|_2^2 + \frac{a_{n-1}\mu}{\gamma_n}\|\theta-\kappa_{n-1}\|_2^2\right) - \rho a_{n-1} (\theta-\kappa_{n-1})^\top \xi_n \\
   & \geq \frac{\gamma_n}{2}\left\|\theta- \left(\frac{(1-a_{n-1})\gamma_{n-1}}{\gamma_n}v_{n-1} + \frac{a_{n-1}\mu}{\gamma_n}\kappa_{n-1}\right)\right\|_2^2 - \rho a_{n-1} (\theta-\kappa_{n-1})^\top \xi_n.
   \end{split}
\end{displaymath}
and thus, using the closed form of $v_n$ computed in~(\ref{eq:tmp_accelerated5}), we have
\begin{displaymath}
\begin{split}
   B(v_n) & \geq \frac{\gamma_n}{2}\left\| \frac{\rho a_{n-1}}{\gamma_n} \xi_n \right\|_2^2-  \frac{\rho a_{n-1}(1-a_{n-1})\gamma_{n-1}}{\gamma_{n}}\left( v_{n-1}-\kappa_{n-1}\right)^\top \xi_n  - \frac{\rho^2 a_{n-1}^2}{\gamma_n} \|\xi_n\|_2^2 \\
          & = -\frac{\rho^2 a_{n-1}^2}{2\gamma_n}\|\xi_n\|_2^2 -  \frac{\rho a_{n-1}(1-a_{n-1})\gamma_{n-1}}{\gamma_{n}}\left( v_{n-1}-\kappa_{n-1}\right)^\top \xi_n.
\end{split}
\end{displaymath}
We can now obtain the following lower-bound on $\barg_n^\star \defin \min_{\theta \in \Real^p} \barg_n(\theta)$, plugging the value of $B(v_n)$ into~(\ref{eq:tmp_accelerated4}),
\begin{displaymath}
   \barg_n^\star \geq f(\theta_n) + \left(\frac{\rho}{2} - \frac{\rho^2 a_{n-1}^2}{2\gamma_n}\right)\|\xi_n\|_2^2 - (1-a_{n-1})\rho\left( \theta_{n-1}-\kappa_{n-1}  + \frac{a_{n-1}\gamma_{n-1}}{\gamma_n}(v_{n-1}-\kappa_{n-1})   \right)^\top\xi_n.
\end{displaymath}
Given the definitions of $\gamma_n$ and $a_n$, and the fact that
$\rho a_0^2=\gamma_1$, we also obviously have the relation $\rho
a_{n-1}^2=\gamma_n$ for all $n \geq 0$. This cancels the factor in front of
$\|\xi_n\|_2^2$. It is also easy to show that the fourth condition of $\H_{n-1}$ implies
$\theta_{n-1}-\kappa_{n-1}  + \frac{a_{n-1}\gamma_{n-1}}{\gamma_n}(v_{n-1}-\kappa_{n-1})=0$.

Since we have shown the three first conditions of $\H_n$, it remains to show
the last one, namely that $(\rho a_{n} + \gamma_{n}) \kappa_{n} = \rho a_{n} \theta_{n} + \gamma_{n} v_{n}$.
We first remark that~(\ref{eq:tmp_accelerated5}) can be rewritten
\begin{displaymath}
    \gamma_n v_n  = (1-a_{n-1})\gamma_{n-1} v_{n-1} + a_{n-1} (\mu-\rho) \kappa_{n-1} + \rho a_{n-1} \theta_n.
\end{displaymath}
Combining with the fourth condition of $\H_{n-1}$, we have
\begin{displaymath}
\begin{split}
    \gamma_n v_n  & = (1-a_{n-1})\left(  (\rho a_{n-1} + \gamma_{n-1}) \kappa_{n-1}  -\rho a_{n-1} \theta_{n-1} \right) + a_{n-1} (\mu-\rho) \kappa_{n-1} + \rho a_{n-1} \theta_n \\ 
                  & = -(1-a_{n-1})a_{n-1}\rho \theta_{n-1} + \rho a_{n-1} \theta_n \\
                  & = \gamma_n\left( \theta_{n-1} + \frac{1}{a_{n-1}}(\theta_{n}-\theta_{n-1})\right),
\end{split}
\end{displaymath}
where we use the relation $\rho a_{n-1}^2 = \gamma_n$ and the recursive relation between $\gamma_n$ and $\gamma_{n-1}$ to remove the terms depending on $\kappa_{n-1}$ in the first equation. Now that we have a simple form describing~$v_n$, we can finally show,
\begin{displaymath}
    \frac{\rho a_{n} \theta_n + \gamma_n v_n}{\rho a_{n} + \gamma_n} = \theta_n + \frac{\gamma_n(1/a_{n-1} - 1)}{\rho a_{n}+\gamma_n} (\theta_n-\theta_{n-1}).
\end{displaymath}
And some simple computation shows that the right part of this equation is equal to $\kappa_n$.
In other words, the factor in front of $(\theta_n-\theta_{n-1})$ is equal to~$\beta_n$, 
and the last condition of $\H_n$ is satisfied.

\proofstep{Obtaining the convergence rate}
Since $\H_n$ is true for all $n \geq 1$, we have $f(\theta_{n})-f^\star \leq
\frac{A_{n}}{2}\|\theta^\star-\theta_0\|_2^2$ and thus it remains to compute
the convergence rate of the sequence $A_n$ to prove the main result.
We follow here the proof of~\citet[][Lemma 2.2.4]{nesterov4}.
Let us first look at the case $\mu=0$. It is easy to show by induction that for all $n\geq 1$, we have
$A_n=La_{n-1}^2$. Moreover 
\begin{displaymath}
  \frac{1}{a_n}- \frac{1}{a_{n-1}} = \frac{a_{n-1}-a_n}{a_{n-1}a_n} =  \frac{a_{n-1}^2-a_n^2}{a_{n-1}a_n(a_{n-1}+a_n)}  =  \frac{a_{n-1}^2a_n}{a_{n-1}a_n(a_{n-1}+a_n)}  =  \frac{a_{n-1}}{a_{n-1}+a_n}  \geq \frac{1}{2}
\end{displaymath}
where we use the relation $a_n^2=(1-a_n)a_{n-1}^2$ and and the fact that $a_n
\leq a_{n-1}$ for all $n\geq 1$.
Thus, we have 
\begin{displaymath}
   \frac{1}{a_n} - \frac{1}{a_0} = \frac{1}{a_n} - 1 \geq \frac{n}{2},
\end{displaymath}
and $a_n \leq 2/(n+2)$. Since $A_n=L a_{n-1}^2$, this gives us the desired convergence rate.

When $\mu > 0$, we have the relation $a_n^2=(1-a_n)a_{n-1}^2-\frac{\mu}{\rho}a_n$. It is then easy to show by induction 
that for all $n\geq 0$, we have $a_n \geq \sqrt{\frac{\mu}{\rho}}$. Thus, $A_n \leq \left(1-\sqrt{\frac{\mu}{\rho}}\right)^{n-1}A_1$. Since $A_1=L$, we have obtain the second convergence rate.
\endproofatend

\section{Incremental Scheme}\label{sec:incremental}
This section is devoted to objective
functions~$f$ that split into many components:
\condvspace
\begin{equation}
f(\theta) = \frac{1}{T}\sum_{t=1}^T f^t(\theta). \label{eq:def_incremental}
\end{equation}
The most classical method exploiting such a structure when $f$ is smooth is probably the stochastic gradient descent (SGD)
and its variants~\citep[see][]{bottou5}. It consists of drawing at
iteration~$n$ an index~${\hat t}_n$ and 
updating the solution as $\theta_n \!\leftarrow\! \theta_{n-1} \!- \eta_n \nabla  f^{{\hat t}_n}(\theta_{n-1})$
with a scalar~$\eta_n$.
Another popular algorithm is the
\emph{stochastic mirror descent}~\citep[see][]{juditsky} for general non-smooth
convex problems, a setting we do not consider in this paper since non-smooth functions
do not always admit first-order surrogates.

Recently, it was shown by \citet{shalev2} and \citet{leroux} that
linear convergence rates could be obtained for strongly convex functions~$f^t$.
The SAG 
algorithm of \citet{leroux} for smooth unconstrained
optimization is an approximate gradient descent strategy, where an estimate of~$\nabla f$ is incrementally updated at each iteration. The work of
\citet{shalev2} for composite optimization is a dual coordinate ascent method called SDCA
which performs incremental updates in the primal~(\ref{eq:def_incremental}).
Unlike SGD, both SAG and SDCA require storing information
about past iterates.

In a different context, incremental EM algorithms have been proposed
by~\citet{neal}, where surrogates of a log-likelihood are incrementally updated.
By using similar ideas, we present in
Algorithm~\ref{alg:generic_incremental} a scheme for
solving~(\ref{eq:def_incremental}), which we call MISO.  
In the next propositions, we study its convergence properties. 

\myvspace{0.3}
\ifthenelse{\isundefined{\supplemental}}{
\begin{algorithm}[h!]
}{
\begin{algorithm}[hbtp]
}
    \caption{Incremental Scheme MISO}\label{alg:generic_incremental}
    \begin{algorithmic}[1]
    \INPUT $\theta_0 \in \Theta$; $N$ (number of iterations).
    \STATE Choose surrogates $g_0^t$ of $f^t$ near $\theta_0$ for all $t$;
    \FOR{ $n=1,\ldots,N$}
    \STATE Randomly pick up one index $\hat{t}_n$ and choose a surrogate $g_n^{\hat{t}_n}$ of $f^{\hat{t}_n}$ near $\theta_{n-1}$. Set $g^t_n \defin g^t_{n-1}$ for $t \neq \hat{t}_n$;
    \STATE Update solution:
$       \theta_n \in {\displaystyle \argmin_{\theta \in \Theta}} \frac{1}{T} \sum_{t=1}^T g_n^t(\theta)$. 
    \ENDFOR
    \OUTPUT $\theta_{N}$ (final estimate);
    \end{algorithmic}
\end{algorithm}

\begin{proposition}[\bfseries Non-Convex Analysis]~\label{prop:conv13}\newline
Assume that the surrogates $g_n^{{\hat t}_n}$ from
Algorithm~\ref{alg:generic_incremental} are majorant and are in $\S_{L}(f^{{\hat
t}_n},\theta_{n-1})$.
 Then, the conclusions of Proposition~\ref{prop:conv1} hold with probability one.
\end{proposition}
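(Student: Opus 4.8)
The plan is to reduce the analysis to that of Proposition~\ref{prop:conv1} by introducing the aggregate surrogate $\barg_n \defin \frac{1}{T}\sum_{t=1}^T g_n^t$, which is exactly the function minimized at Step~4 of Algorithm~\ref{alg:generic_incremental}, so that $\theta_n \in \argmin_{\theta \in \Theta}\barg_n(\theta)$. Since each $g_n^t$ is a majorant of $f^t$, the average $\barg_n$ majorizes $f$, and its error $\bar h_n \defin \barg_n - f = \frac{1}{T}\sum_{t} (g_n^t - f^t)$ is nonnegative, differentiable, and has an $L$-Lipschitz gradient (an average of $L$-Lipschitz gradients is $L$-Lipschitz). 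The quantity playing the role of $f(\theta_n)$ in Proposition~\ref{prop:conv1} is $\barg_n(\theta_n)$ rather than $f(\theta_n)$ itself; this is the first point that needs care, since, unlike the batch and block-coordinate schemes, MISO does not refresh a single surrogate centered at the current iterate, and $f(\theta_n)$ need not decrease monotonically.

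First I would establish monotonicity of $(\barg_n(\theta_n))_{n\geq 1}$. The key observation is that refreshing the picked component can only lower the aggregate value at $\theta_{n-1}$: because $g_n^{\hat{t}_n} \in \S_L(f^{\hat{t}_n},\theta_{n-1})$ we have $g_n^{\hat{t}_n}(\theta_{n-1}) = f^{\hat{t}_n}(\theta_{n-1}) \leq g_{n-1}^{\hat{t}_n}(\theta_{n-1})$, the last inequality following from the majorant property of the stale surrogate. Combined with the fact that all other components are unchanged and with optimality of $\theta_n$, this gives
$$\barg_n(\theta_n) \leq \barg_n(\theta_{n-1}) \leq \barg_{n-1}(\theta_{n-1}),$$
so $(\barg_n(\theta_n))$ is non-increasing; it is bounded below since $\barg_n \geq f$ and $f$ is bounded below, hence convergent almost surely (with $\barg_0(\theta_0)=f(\theta_0)$ as starting value, all initial surrogates being centered at $\theta_0$).

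Next I would quantify the expected decrease to extract $\bar h_n(\theta_n)\to 0$, mirroring the computation in Proposition~\ref{prop:conv14}. Conditioning on the past and using that $\hat{t}_n$ is uniform over $\{1,\dots,T\}$, only the selected coordinate changes and its refreshed value equals $f^t(\theta_{n-1})$, so that
$$\E\big[\barg_n(\theta_{n-1}) - \barg_{n-1}(\theta_{n-1})\,\big|\,\theta_{n-1}\big] = \tfrac{1}{T}\big(f(\theta_{n-1}) - \barg_{n-1}(\theta_{n-1})\big) = -\tfrac{1}{T}\bar h_{n-1}(\theta_{n-1}).$$
Writing $\barg_n(\theta_n) - \barg_{n-1}(\theta_{n-1})$ as the sum of the non-positive minimization gain $\barg_n(\theta_n) - \barg_n(\theta_{n-1})$ and this term, and using that $\E[\barg_n(\theta_n)]$ converges, the telescoping sum of expected decrements is finite; hence $\sum_n \E[\bar h_n(\theta_n)] < +\infty$. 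By the Beppo--Levi theorem $\E[\sum_n \bar h_n(\theta_n)] < +\infty$, so $\bar h_n(\theta_n) \to 0$ almost surely, which in particular forces $f(\theta_n)$ to converge to the same limit as $\barg_n(\theta_n)$.

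Finally, the asymptotic stationary point condition follows as in Proposition~\ref{prop:conv1}. Since $\theta_n$ minimizes $\barg_n = f + \bar h_n$ over $\Theta$, for every $\theta \in \Theta$ the directional derivative satisfies $\nabla\barg_n(\theta_n,\theta-\theta_n) \geq 0$, whence $\nabla f(\theta_n,\theta-\theta_n) \geq -\|\nabla\bar h_n(\theta_n)\|_2\,\|\theta-\theta_n\|_2$ by Cauchy--Schwarz. Applying to the nonnegative function $\bar h_n$ at $\theta_n$ the same descent-lemma argument as in Proposition~\ref{prop:conv1} (plugging $\theta' = \theta_n - \frac{1}{L}\nabla\bar h_n(\theta_n)$) yields $\|\nabla\bar h_n(\theta_n)\|_2^2 \leq 2L\,\bar h_n(\theta_n)$, which tends to $0$ almost surely by the previous step. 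Dividing by $\|\theta-\theta_n\|_2$ and taking $\liminf$ gives the claim with probability one. I expect the main obstacle to be the second step: correctly setting up the conditional-expectation identity for the random index and passing rigorously from summability in expectation to the almost-sure statement, together with the conceptual point that it is $\barg_n(\theta_n)$, and not $f(\theta_n)$, that decreases monotonically.
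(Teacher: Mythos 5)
Your proposal is correct and takes essentially the same route as the paper's proof: the aggregate surrogate $\barg_n \defin \frac{1}{T}\sum_{t=1}^T g_n^t$, monotone decrease of $\barg_n(\theta_n)$ via $g_n^{{\hat t}_n}(\theta_{n-1}) = f^{{\hat t}_n}(\theta_{n-1}) \leq g_{n-1}^{{\hat t}_n}(\theta_{n-1})$, the conditional-expectation identity for the uniform index combined with telescoping and the Beppo--Levi theorem to get $\barg_n(\theta_n)-f(\theta_n) \to 0$ almost surely, and the descent-lemma bound $\|\nabla \bar{h}_n(\theta_n)\|_2^2 \leq 2L\bar{h}_n(\theta_n)$ with Cauchy--Schwarz for the asymptotic stationarity condition. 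The only cosmetic difference is that you condition on $\theta_{n-1}$ with the index ${\hat t}_n$ while the paper conditions on $\theta_n$ with ${\hat t}_{n+1}$; these are the same computation.
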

\proofatend
The proof is very similar to the one of Proposition~\ref{prop:conv14}.
We proceed in several steps.

\proofstep{Almost sure convergence of $f(\theta_n)$}
Let us denote by $\barg_n \defin \frac{1}{T}\sum_{t=1}^T g_n^t$. We have the
following recursion relation
\begin{displaymath}
   \barg_n = \barg_{n-1} + g_n^{{\hat t}_n} - g_{n-1}^{{\hat t}_n}, 
\end{displaymath}
where the surrogates and the index ${\hat t}_n$ are chosen in the algorithm.
This allows us to obtain the following inequalities, which hold with probability one
\begin{displaymath}
\begin{split}
   \barg_{n}(\theta_n) & \leq \barg_n(\theta_{n-1}) =  \barg_{n-1}(\theta_{n-1}) + g_n^{{\hat t}_n}(\theta_{n-1}) - g_{n-1}^{{\hat t}_n}(\theta_{n-1})  \\
 & = \barg_{n-1}(\theta_{n-1}) + f^{{\hat t}_n}(\theta_{n-1}) - g_{n-1}^{{\hat t}_n}(\theta_{n-1}) \leq \barg_{n-1}(\theta_{n-1}).
 \end{split}
\end{displaymath}
The first inequality is true by definition of $\theta_n$ and the second one because $\barg_{n-1}^{{\hat t}_n}$ is a majorant surrogate of $f^{{\hat t}_n}$. 
The sequence $(\barg_n(\theta_n))_{ n \geq 0}$ is thus monotonically decreasing, bounded below with probability one and thus converges almost surely.
Note now that the previous inequalities imply
\begin{equation}
 \E[\barg_{n}(\theta_n)] - \E[\barg_{n-1}(\theta_{n-1})] \leq   \E[f^{{\hat t}_n}(\theta_{n-1}) - g_{n-1}^{{\hat t}_n}(\theta_{n-1})]. \label{eq:incr:tmp}
\end{equation}
The non-positive term $\E[\barg_{n}(\theta_n)] - \E[\barg_{n-1}(\theta_{n-1})]$ is the summand of a converging sum. Thus, the non-positive terms $\E[f^{{\hat t}_n}(\theta_{n-1}) - g_{n-1}^{{\hat t}_n}(\theta_{n-1})]$ is also the summand of a converging sum and we have
\begin{displaymath}
\begin{split}
\E\left[ \sum_{n=0}^{+\infty} g_{n}^{{\hat t}_{n+1}}(\theta_{n}) - f^{{\hat t}_{n+1}}(\theta_{n})\right] & = \sum_{n=0}^{+\infty} \E[g_{n}^{{\hat t}_{n+1}}(\theta_{n}) - f^{{\hat t}_{n+1}}(\theta_{n})]  \\
   & = \sum_{n=0}^{+\infty} \E[\E[g_{n}^{{\hat t}_{n+1}}(\theta_{n}) - f^{{\hat t}_{n+1}}(\theta_{n})| \theta_{n}]] \\
   & = \sum_{n=0}^{+\infty} \E[\barg_{n}(\theta_{n}) - f(\theta_{n})] \\
   & =\E\left[\sum_{n=0}^{+\infty} \barg_{n}(\theta_{n}) - f(\theta_{n})\right] < +\infty, \\
\end{split}
\end{displaymath}
where we use two times Beppo-L\'evy theorem to exchange the expectation and the sum signs in front of non-negative quantities.
As a result, the term $\barg_{n}(\theta_{n}) - f(\theta_{n})$ converges almost surely to $0$, implying the almost sure convergence of $f(\theta_n)$.

\proofstep{Asymptotic stationary point conditions}
Let us denote by ${\bar h}_n \defin \barg_n-f$ which is differentiable with
$L$-Lipschitz continuous gradient.
Then, for all $\theta$ in $\Theta$,
\begin{displaymath}
\nabla f(\theta_n,\theta-\theta_n) = \nabla \barg_n(\theta_n,\theta-\theta_n) - \nabla {\bar h}_n(\theta_n)^\top(\theta-\theta_n). 
\end{displaymath}
We have $\nabla \barg_n(\theta_n,\theta-\theta_n) \geq 0$ by definition of $\theta_n$, and $\|\nabla {\bar h}_n(\theta_n)\|_2^2 \leq 2L {\bar h}_n(\theta_n)$, following a similar argument as in the proof of Proposition~\ref{prop:conv1}.
Since we have shown that ${\bar h}_n(\theta_n)$ almost surely converges to zero, we conclude as in the proof of Proposition~\ref{prop:conv14}.
\endproofatend

\begin{proposition}[\bfseries Convex Analysis]~\label{prop:conv16}\newline
Assume that $f$ is convex.  
Define $f^\star \defin \min_{\theta \in \Theta} f(\theta)$ and $\delta \! \defin \! \frac{1}{T}$. When the surrogates~$g_n^t$ in 
Algorithm~\ref{alg:generic_incremental} are majorant and in $\S_{L,\rho}(f^t,\theta_{n-1})$ with $\rho \! \geq \! L$, we have
\begincondeq
\E[ f(\theta_n)-f^\star] \leq \frac{L\|\theta^\star-\theta_0\|_2^2}{2\delta n}  ~~~\text{for all}~n\geq 1. \label{eq:incr:rate}
\endcondeq
Assume now that $f$ is $\mu$-strongly convex. For all $n \! \geq \!1$, 
\begincondeq
\left\{ \begin{array}{lcr}
    \!\!\E[ \|\theta^\star \! -\! \theta_n\|_2^2]\!\! & \!\! \leq \!\! & \!\! \left((1\!-\!\delta)\!+\!\delta \frac{L}{\rho+\mu}\right)^n \|\theta^\star-\theta_0\|_2^2 \\
    \!\!\E[ f(\theta_n) \! - \! f^\star] \!\! & \!\! \leq \!\! & \!\! \left((1\!-\!\delta) \!+\!  \delta \frac{ L  }{\rho+\mu}\right)^{n-1}\frac{L\|\theta^\star-\theta_0\|_2^2}{2}
    \end{array} \right. \!.  \label{eq:incr:ratemu}
\endcondeq

\end{proposition}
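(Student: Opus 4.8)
The plan is to follow the strategy of Proposition~\ref{prop:conv9}, but to reason through the \emph{averaged surrogate} $\barg_n \defin \frac{1}{T}\sum_{t=1}^T g_n^t$ rather than through $f$ directly. As in the proof of Proposition~\ref{prop:conv13}, this $\barg_n$ is majorant ($\barg_n \geq f$), is $\rho$-strongly convex as an average of $\rho$-strongly convex functions, and admits $\theta_n$ as its minimizer over $\Theta$. I would track two nonnegative potentials, $u_n \defin \barg_n(\theta^\star)-f^\star$ and the surrogate gap $G_n \defin \barg_n(\theta_n)-f^\star \geq f(\theta_n)-f^\star$. Applying the second-order growth property (Lemma~\ref{lemma:second}) of $\barg_n$ at $\theta^\star$ gives the pointwise key inequality $u_n \geq G_n + \frac{\rho}{2}\|\theta^\star-\theta_n\|_2^2$, and in particular $u_n \geq (f(\theta_n)-f^\star) + \frac{\rho}{2}\|\theta^\star-\theta_n\|_2^2$.

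Next I would derive the one-step recursion on $u_n$. Writing $\barg_n = \barg_{n-1} + \frac{1}{T}(g_n^{\hat t_n}-g_{n-1}^{\hat t_n})$ and taking the expectation over the uniformly drawn index $\hat t_n$ conditioned on $\theta_{n-1}$, the stale terms average back to $\barg_{n-1}(\theta^\star)$ while the fresh-surrogate terms average to $\frac{1}{T}\sum_t g_n^t(\theta^\star)$, each bounded by $f^t(\theta^\star)+\frac{L}{2}\|\theta^\star-\theta_{n-1}\|_2^2$ via the first inequality of Lemma~\ref{lemma:basic}. This yields
\begin{displaymath}
\E[u_n \mid \theta_{n-1}] \leq (1-\delta)\,u_{n-1} + \delta L \cdot \tfrac{1}{2}\|\theta^\star-\theta_{n-1}\|_2^2 .
\end{displaymath}
The same lemma applied to the $g_0^t$ gives $u_0 \leq \frac{L}{2}\|\theta^\star-\theta_0\|_2^2$, hence $\E[u_1] \leq \frac{L}{2}\|\theta^\star-\theta_0\|_2^2$, which serves as the base case in both regimes.

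For the merely convex case I would combine the recursion with $u_{n-1} \geq G_{n-1} + \frac{\rho}{2}\|\theta^\star-\theta_{n-1}\|_2^2$ to obtain, thanks to $\rho \geq L$, the telescoping inequality $\E[u_{n-1}]-\E[u_n] \geq \delta\,\E[G_{n-1}]$ for $n \geq 2$; summing over the first $n$ iterations gives $\delta\sum_{k=1}^{n}\E[G_k] \leq \E[u_1] \leq \frac{L}{2}\|\theta^\star-\theta_0\|_2^2$. The main obstacle, specific to MISO, appears here: unlike the block-coordinate scheme of Proposition~\ref{prop:conv9}, where Proposition~\ref{prop:conv14} supplies monotonicity of $f(\theta_n)$, the sequence $f(\theta_n)$ is \emph{not} monotone, so I cannot extract the last term from a sum of the $f(\theta_k)-f^\star$. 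The remedy is to extract it from the surrogate gaps $G_k$ instead, which \emph{are} non-increasing almost surely since $\barg_n(\theta_n)\leq\barg_n(\theta_{n-1})\leq\barg_{n-1}(\theta_{n-1})$ (this chain is exactly what was shown in Proposition~\ref{prop:conv13}). Monotonicity of $\E[G_k]$ then gives $\delta n\,\E[G_n] \leq \delta\sum_{k=1}^n \E[G_k]$, and since $f(\theta_n)-f^\star \leq G_n$ this proves~(\ref{eq:incr:rate}).

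For the $\mu$-strongly convex case I would instead invoke the growth property of $f$, $f(\theta_n)-f^\star \geq \frac{\mu}{2}\|\theta^\star-\theta_n\|_2^2$ (Lemma~\ref{lemma:second}), which together with $u_n \geq (f(\theta_n)-f^\star)+\frac{\rho}{2}\|\theta^\star-\theta_n\|_2^2$ gives $\frac{\rho+\mu}{2}\|\theta^\star-\theta_n\|_2^2 \leq u_n$. Substituting this bound (at index $n-1$) into the one-step recursion turns it into the geometric contraction $\E[u_n] \leq \big((1-\delta)+\delta\frac{L}{\rho+\mu}\big)\E[u_{n-1}]$ for $n \geq 2$. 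Unrolling from the base case yields $\E[f(\theta_n)-f^\star] \leq \E[u_n] \leq \big((1-\delta)+\delta\frac{L}{\rho+\mu}\big)^{n-1}\frac{L}{2}\|\theta^\star-\theta_0\|_2^2$, the second line of~(\ref{eq:incr:ratemu}); the distance bound then follows from $\frac{\rho+\mu}{2}\|\theta^\star-\theta_n\|_2^2 \leq u_n$ together with the elementary inequality $\frac{L}{\rho+\mu} \leq (1-\delta)+\delta\frac{L}{\rho+\mu}$ (valid because $\rho \geq L$ forces $\frac{L}{\rho+\mu}\leq 1$). The almost sure convergence of $f(\theta_n)$ to $f^\star$ is inherited from Proposition~\ref{prop:conv13}.
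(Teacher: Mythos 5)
Your proof is correct, but it takes a genuinely different route from the paper's. The paper never introduces your potential $u_n \defin \barg_n(\theta^\star)-f^\star$: it instead tracks the anchor points $\kappa_n^t$ of the $T$ stored surrogates, defining $A_n \defin \E\big[\frac{1}{2T}\sum_{t=1}^T\|\theta^\star-\kappa_n^t\|_2^2\big]$ and $\xi_n \defin \frac{1}{2}\E[\|\theta^\star-\theta_n\|_2^2]$, derives the exact update identity $A_n = \delta\xi_n+(1-\delta)A_{n-1}$ from the conditional law of the anchors, and combines it with the incremental extension of Lemma~\ref{lemma:basic}, namely $f(\theta_n) \leq f(\theta) + \frac{1}{T}\sum_{t=1}^T\big(\frac{L}{2}\|\theta-\kappa_{n-1}^t\|_2^2-\frac{\rho}{2}\|\theta-\theta_n\|_2^2\big)$, to obtain $\E[f(\theta_n)-f^\star]\leq\frac{L}{\delta}(A_{n-1}-A_n)$ in the convex case and the induction $A_n\leq\beta^n\xi_0$ in the strongly convex case. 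Your $u_n$ satisfies $u_n\leq L A_n$ pointwise, so the two potentials are close cousins; your one-step recursion $\E[u_n\mid\theta_{n-1}]\leq(1-\delta)u_{n-1}+\frac{\delta L}{2}\|\theta^\star-\theta_{n-1}\|_2^2$ plays the role of the paper's pair of relations, both arguments rest on the same two ingredients (Lemma~\ref{lemma:basic} and the second-order growth of Lemma~\ref{lemma:second}), and the constants come out identical in both regimes. What your version buys: a single scalar Lyapunov function, no bookkeeping of the $T$ anchor distances, and a bound through the surrogate gap $G_n\geq f(\theta_n)-f^\star$, which is a quantity one can actually monitor when running MISO. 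What the paper's version buys: an exact (not merely one-sided) recursion on $A_n$, which makes the strongly convex induction completely mechanical. One small correction to your commentary: the obstacle you flag in the convex case is not actually present on the paper's route. While $f(\theta_n)$ is indeed not almost surely monotone under MISO, the paper proves monotonicity \emph{in expectation} — $\E[f(\theta_n)]\leq\E[f(\theta_{n-1})]$, via the identity $\E[g_{n-1}^{\hattn}(\theta_{n-1})]=\E[\barg_{n-1}(\theta_{n-1})]$ — and that suffices to extract the last term from the telescoped sum. Your substitute, the almost surely non-increasing surrogate values $G_n$ obtained from the chain $\barg_n(\theta_n)\leq\barg_n(\theta_{n-1})\leq\barg_{n-1}(\theta_{n-1})$, is equally valid (and you correctly restrict the telescoping inequality to $n\geq 2$, where $\theta_{n-1}$ minimizes $\barg_{n-1}$, handling $n=1$ through the base case $\E[u_1]\leq\frac{L}{2}\|\theta^\star-\theta_0\|_2^2$).
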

\proofatend
The almost sure convergence of $f(\theta_n)$ was shown in Proposition~\ref{prop:conv13}.
We now prove the proposition in several steps and start with some preliminaries.

\proofstep{Preliminaries}
Let us denote by $\kappa_{n-1}^t$ the point in $\Theta$ such that $g_n^t$ is in $\S_{L,\rho}(f^t,\kappa_{n-1}^t)$. 
We remark that such points are drawn according to the following conditional probability distribution:
$$\PPP(\kappa_{n-1}^t = \theta_{n-1} | \theta_{n-1}) = \delta ~~\text{and}~~ \PPP(\kappa_{n-1}^t = \kappa_{n-2}^t | \theta_{n-1})=1-\delta,$$ 
where $\delta \defin 1/T$. Thus we have for all $t$ in $\{1,\ldots,T\}$ and all $n \geq 1$,
\begin{equation}
   \E[\|\theta^\star-\kappa_{n-1}^t\|_2^2]=   \E[\E[\|\theta^\star-\kappa_{n-1}^t \|_2^2| \theta_{n-1}]] =  \delta\E[\|\theta^\star-\theta_{n-1}\|_2^2] + (1-\delta) \E[\|\theta^\star-\kappa_{n-2}^t\|_2^2].\label{eq:incr:tmp2}
\end{equation}
The other relation we need is an extension of Lemma~\ref{lemma:basic} to the incremental setting. For all $\theta$ in~$\Theta$, we have
\begin{equation}
f(\theta_n) \leq f(\theta) + \frac{1}{T}\sum_{t=1}^T \left( \frac{L}{2}\|\theta-\kappa_{n-1}^t\|_2^2 -\frac{\rho}{2}\|\theta-\theta_n\|_2^2  \right). \label{eq:incr:tmp4}
\end{equation}
The proof of this relation is similar to that of Lemma~\ref{lemma:basic}, exploiting to $\rho$-strong convexity of ${\bar g}_n$.  We
can now study the first part of the proposition.

\proofstep{Monotonic decrease of $\E[f(\theta_n)]$}
Note that $\E[g_{n-1}^{{\hat t}_n}(\theta_{n-1})]= \E[\E[g_{n-1}^{{\hat t}_n}(\theta_{n-1})| \theta_{n-1}]] = \E[\barg_{n-1}(\theta_{n-1})]$.
Applying this relation to Eq.~(\ref{eq:incr:tmp}), we have
\begin{displaymath}
   \E[f(\theta_n)] \leq \E[\barg_{n}(\theta_n)] \leq  \E[f^{{\hat t}_n}(\theta_{n-1})]= \E[\E[f^{{\hat t}_n}(\theta_{n-1})|\theta_{n-1}]] =  \E[f(\theta_{n-1})],
\end{displaymath}
where the first inequality comes from the fact that $f \leq \barg_n$ (see proof of Proposition~\ref{prop:conv13}).

\proofstep{Non-strongly convex case ($\rho=L$); convergence rate}
Denote by $A_n \defin \E[\frac{1}{2T}\sum_{t=1}^T \|\theta^\star-\kappa_n^t\|_2^2]$ and by $\xi_n \defin \frac{1}{2}\E[\|\theta^\star-\theta_n\|_2^2]$. Then, we have from~(\ref{eq:incr:tmp4}) and by taking the expectation
\begin{displaymath}
    \E[f(\theta_n)-f^\star] \leq LA_{n-1} - L \xi_n.
\end{displaymath}
It follows from~(\ref{eq:incr:tmp2}) that $A_n = \delta\xi_{n} + (1-\delta) A_{n-1}$ and thus
\begin{displaymath}
    \E[f(\theta_n)-f^\star] \leq \frac{L}{\delta}(A_{n-1} - A_n).
\end{displaymath}
By summing the above inequalities, and using the fact that $\E[f(\theta_n)-f^\star]$ is monotonically decreasing, we obtain that
\begin{displaymath}
    n\E[f(\theta_n)-f^\star] \leq \frac{L A_0}{\delta},
\end{displaymath}
leading to the convergence rate of Eq.~(\ref{eq:incr:rate}), since $A_0 = \frac{1}{2}\|\theta^\star-\theta_0\|_2^2$.

\proofstep{$\mu$-strongly convex case}
Suppose now that $f$ is $\mu$-strongly convex.
We will prove the proposition by induction. Assume that for some $n \geq 1$, we have $A_{n-1} \leq \beta^{n-1}\xi_0$ with $\beta\defin \frac{ (1-\delta)(\rho+\mu)+\delta L}{\rho+\mu}$.
We have from~(\ref{eq:incr:tmp4}) and the second-order growth condition of Lemma~\ref{lemma:second}
\begin{displaymath}
  \mu \xi_n \leq \E[f(\theta_n) - f^\star] \leq L A_{n-1} - \rho\xi_n,
\end{displaymath}
which is true for all $n \geq 1$. Combining the previous inequality, Eq.~(\ref{eq:incr:tmp2}), and the induction hypothesis, we have
\begin{displaymath}
  A_n = \delta \xi_n + (1-\delta) A_{n-1} \leq \left( \frac{\delta L}{\mu+\rho} + (1-\delta) \right) \beta^{n-1} \xi_0 = \beta^{n}\xi_0.
\end{displaymath}
Since we have $A_0=\xi_0$, the induction hypothesis is true for all $n \geq 0$. Since we have from~(\ref{eq:incr:tmp4}) $\xi_n \leq \frac{L}{\rho+\mu}A_{n-1}$, and $\E[f(\theta_n) - f^\star] \leq L A_{n-1}$, we finally have shown the desired convergence rate~(\ref{eq:incr:ratemu}).
\endproofatend

Interestingly, the proof and the convergence rates of Proposition
\ref{prop:conv16} are similar to those of the block coordinate scheme.  For
both schemes, the current iterate~$\theta_n$ can be shown to be the minimizer
of an approximate surrogate function which splits into different parts.  Each
iteration randomly picks up one part, and updates it. 
Like SAG or SDCA, we obtain linear convergence for strongly convex
functions~$f$, even though the upper bounds obtained for SAG and SDCA are
better than ours.

It is also worth noticing that for smooth unconstrained problems,
MISO and SAG yield different, but related, update rules.  Assume for instance
that ``Lipschitz gradient surrogates'' are used. At iteration~$n$ of MISO, each
function $g_n^t$ is a surrogate of~$f^t$ near some~$\kappa_{n-1}^t$. The update
rule of MISO can be shown to be
 $\theta_n \!\leftarrow
\!\frac{1}{T}\sum_{t=1}^T\!  \kappa_{n-1}^t \!-\!  \frac{1}{TL} \sum_{t=1}^T
\!\nabla f^t(\kappa_{n-1}^t)$; in comparison, the update rule of SAG is 
$\theta_n \!\leftarrow \! \theta_{n-1} \!-\!  \frac{1}{TL} \sum_{t=1}^T \!\nabla f^t(\kappa_{n-1}^t)$.

The next section complements the theoretical analysis of the scheme MISO by
numerical experiments and practical implementation heuristics.

\condvspace

\section{Experiments}
In this section, we show that MISO is efficient
for solving large-scale machine learning problems.

\condvspacesmall
\subsection{Experimental Setting}
\condvspacesmall
We consider $\ell_2$- and $\ell_1$- logistic regression without intercept, and
denote by~$m$ the number of samples and by~$p$ the number of features. The corresponding
optimization problem can be written
\begin{equation}
   \min_{\theta \in \Real^p} \frac{1}{m} \sum_{t=1}^m \log(1+e^{-y_t \x^{t\top}\theta}) + \lambda \psi(\theta), \label{eq:logistic}
\condvspace
\end{equation}
where the regularizer $\psi$ is either the $\ell_1$- or squared $\ell_2$-norm.  The $y_t$'s are
in~$\{-1,+1\}$ and the $\x^t$'s are vectors in $\Real^p$ with unit
$\ell_2$-norm.
We use four classical datasets described in the following table:

\ifthenelse{\isundefined{\supplemental}}{}{
\begin{center}
}
\begin{tabular}{|l|c|c|c|c|}
\hline
name & $m$ & $p$ & storage & size (GB) \\ 
\hline
\textsf{alpha} & $250\,000$ & $500$ & dense & $1$\\
\hline
\textsf{rcv1} & $781\,265$ & $47\,152$ & sparse & $0.95$ \\
\hline
\textsf{covtype} & $581\,012$ & $54$ & dense & $0.11$ \\
\hline
\textsf{ocr} & $ 2\,500\,000$ & $1\,155$ & dense & $23.1$ \\
\hline
\end{tabular}
\ifthenelse{\isundefined{\supplemental}}{}{
\end{center}
}

Three datasets, \textsf{alpha}, \textsf{rcv1} and \textsf{ocr} were obtained from the 2008 Pascal large scale learning challenge.\footnote{\url{http://largescale.ml.tu-berlin.de}.} 
The dataset~\textsf{covtype} is available from the LIBSVM website.\footnote{\url{http://www.csie.ntu.edu.tw/~cjlin/libsvm/}.}
We have chosen to test several software packages including
LIBLINEAR 1.93~\citep{fan2}, the
ASGD and SGD implementations of L.
Bottou (version 2)\footnote{\url{http://leon.bottou.org/projects/sgd}.}, an implementation of SAG 
kindly provided to us by the authors of \citet{leroux}, the FISTA method of~\citet{beck} implemented in the SPAMS
toolbox\footnote{\myvspace{0.1}\url{http://spams-devel.gforge.inria.fr/}.}, and SHOTGUN~\citep{bradley2}.
All these softwares are coded in C++ and were compiled using \textsf{gcc}.
Experiments were run on a single core of a 2.00GHz Intel Xeon CPU E5-2650 using
$64$GB of RAM, and all computations were done in double precision. All the
timings reported do not include data loading into memory. Note that we could not run the
softwares SPAMS, LIBLINEAR and SHOTGUN on the dataset~\textsf{ocr} because
of index overflow issues.

\subsection{On Implementing MISO}
The objective function~(\ref{eq:logistic})
splits into $m$ components $f^t: \theta \mapsto
\log(1+e^{-y_t \x^{t\top}\theta}) + \lambda \psi(\theta)$. 
It is thus natural to consider the incremental scheme of Section~\ref{sec:incremental}
together with the proximal gradient surrogates of Section~\ref{subsec:surrogates}.
Concretely, we build at iteration~$n$ of MISO a surrogate $g_n^\hattn$ of $f^\hattn$ as follows:
$g_n^\hattn: \theta \mapsto l^\hattn(\theta_{n-1}) + \nabla
l^\hattn(\theta_{n-1})^\top(\theta-\theta_{n-1}) +
\frac{L}{2}\|\theta-\theta_{n-1}\|_2^2 + \lambda \psi(\theta)$,
where~$l^t$ is the logistic function $\theta \mapsto \log(1+e^{-y_t \x^{t\top}\theta})$.
 
After removing the dependency over $n$ to simplify the notation, all the surrogates can be rewritten as
$g^t: \theta \mapsto a_{t} + \z^{t \top}\theta + \frac{L}{2}\|\theta\|_2^2 + \lambda \psi(\theta)$,
where~$a_t$ is a constant and $\z^t$ is a vector in $\Real^p$.  Therefore, all
surrogates can be ``summarized'' by the pair $(a_t,\z^t)$, quantities which we keep into
memory during the optimization. Then, finding the estimate $\theta_n$ amounts to minimizing a function of the form
$\theta \mapsto \barz_n^\top\theta + \frac{L}{2}\|\theta\|_2^2 + \lambda \psi(\theta)$, where~$\barz_n$ is the
average value of the quantities~$\z^t$ at iteration $n$. 
It is then easy to see that obtaining~$\barz_{n+1}$ from~$\barz_n$ can be done in $O(p)$ operations with the following update:
$\barz_{n+1} \leftarrow \barz_n + (\z^{\hattn}_{\text{new}} - \z^{\hattn}_{\text{old}})/m$.

One issue is that building the surrogates~$g^t$ requires choosing some constant~$L$. 
An upper bound on the Lipschitz constants of the
gradients $\nabla l^t$ could be used here. However, we have observed that significantly faster
convergence could be achieved by using a smaller value, probably because a
local Lipschitz constant may be better adapted than a global one. By studying
the proof of Proposition~\ref{prop:conv16}, we notice indeed that our convergence
rates can be obtained without majorant surrogates, when we simply have:
$\E[f^t(\theta_n)] \leq \E[g_n^t(\theta_n)]$ for all~$t$ and~$n$.
This motivates the following heuristics:\\
\mybullet MISO1: start by performing one pass over $\eta\!=\!5\%$ of the data to select a constant $L'$ yielding the smallest decrease of the objective, and set $L=L' \eta$;\\
\mybullet MISO2: in addition to MISO1, check the inequalities $f^{{\hat
t}_n}(\theta_{n-1}) \! \leq \! g_{n-1}^{{\hat t}_n}(\theta_{n-1})$ during the
optimization. After each pass over the data, if the rate of satisfied
inequalities drops below $50\%$, double the value of $L$.

Following these strategies, we have implemented the scheme MISO in C++. The
resulting software package will be publicly released with an open source
license.

\subsection{$\ell_2$-Regularized Logistic Regression}\label{subsec:l2log}
We compare LIBLINEAR, FISTA, SAG, ASGD, SGD, MISO1, MISO2 and MISO2 with
$T=1000$ blocks (grouping some observations into minibatches).  LIBLINEAR was
run using the option \textsf{-s 0 -e 0.000001}.  The implementation of SAG includes
a heuristic line search in the same spirit as MISO2, introduced by~\citet{leroux}.
Every method was stopped after~$50$ passes over the
data.
We considered three regularization regimes, \textsf{high} 
$(\lambda\!=\!10^{-3})$, \textsf{medium} $(\lambda\!=\!10^{-5})$ and \textsf{low}
$(\lambda\!=\!10^{-7})$. We present in
Figure~\ref{fig:exp1} the values of the objective function during the
optimization for the regime \textsf{medium}, both in terms of passes over the
data and training time. The regimes \textsf{low} and \textsf{high} are provided
as supplemental material only.
Note that to reduce the memory load, we used a minibatch strategy for the
dataset \textsf{rcv1} with $T=10\,000$ blocks.

Overall, there is no clear winner from this experiment, and the preference for
an algorithm depends on the dataset, the required precision, or the
regularization level. The best methods seem to be consistently MISO, ASGD and
SAG and the slowest one FISTA.  Note that this apparently mixed result is a
significant achievement.  We have indeed focused on state-of-the-art solvers,
which already significantly outperform a large number of other
baselines~\citep[see][]{bottou5,fan2,leroux}.

\ifthenelse{\isundefined{\supplemental}}{
\def\widthfigleft{0.48}
\def\widthfigright{0.51}
\begin{figure}[t]
}{
\def\widthfigleft{0.45}
\def\widthfigright{0.48}
\begin{figure}[hbtp]
}
\centering
\includegraphics[width=\widthfigleft\linewidth]{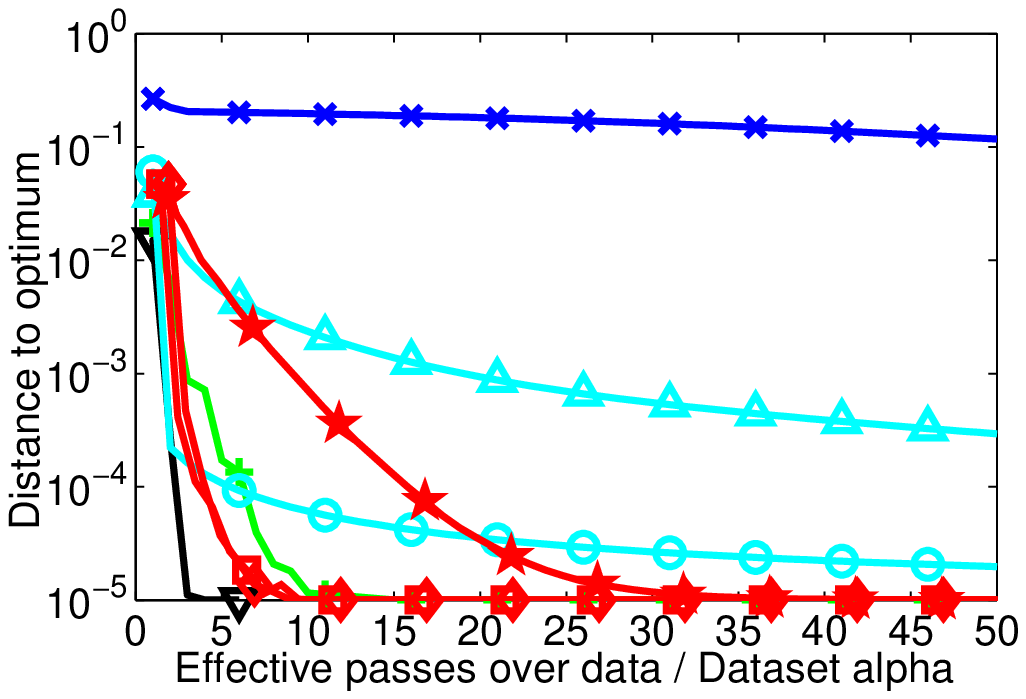}\hfill
\includegraphics[width=\widthfigright\linewidth]{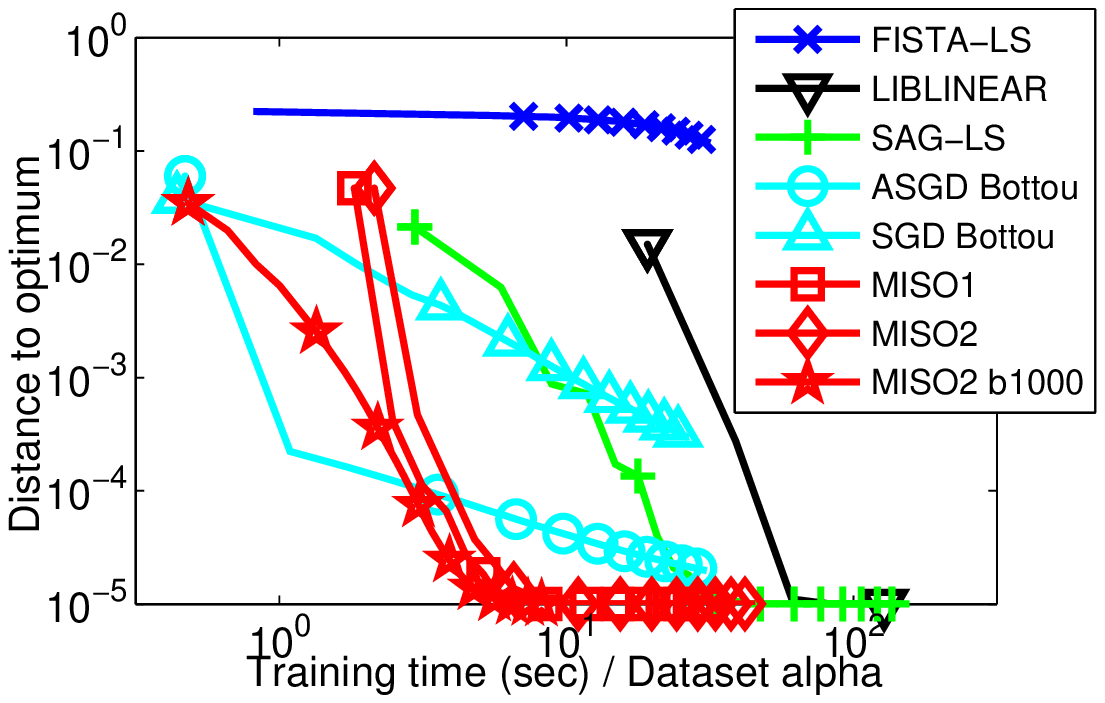}\\
\includegraphics[width=\widthfigleft\linewidth]{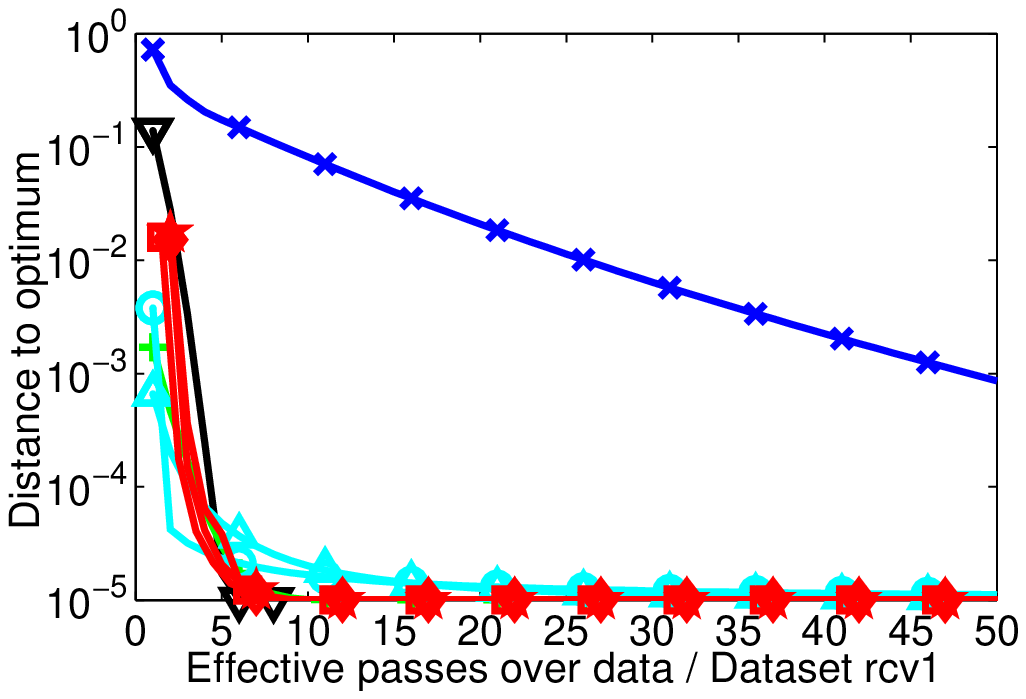}\hfill
\includegraphics[width=\widthfigright\linewidth]{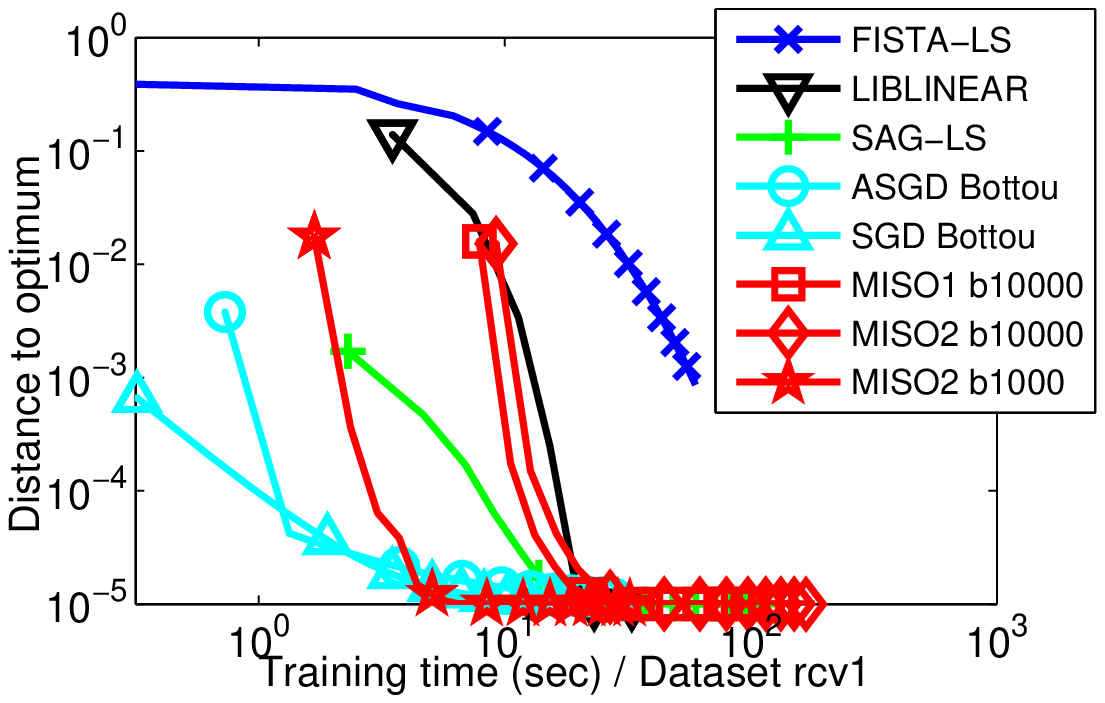}\\
\includegraphics[width=\widthfigleft\linewidth]{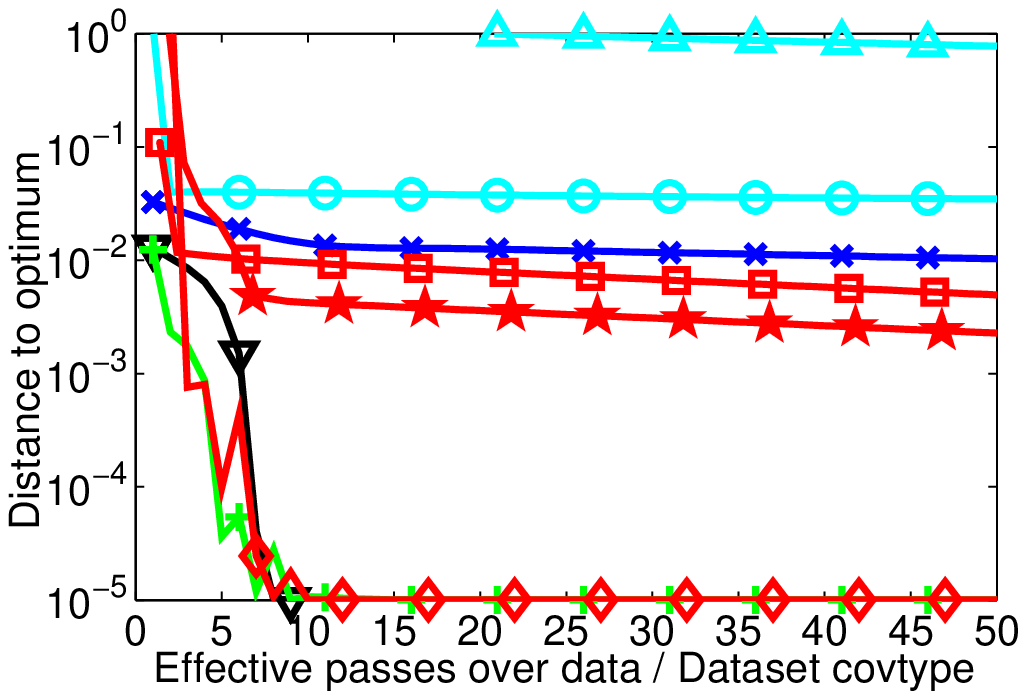}\hfill
\includegraphics[width=\widthfigright\linewidth]{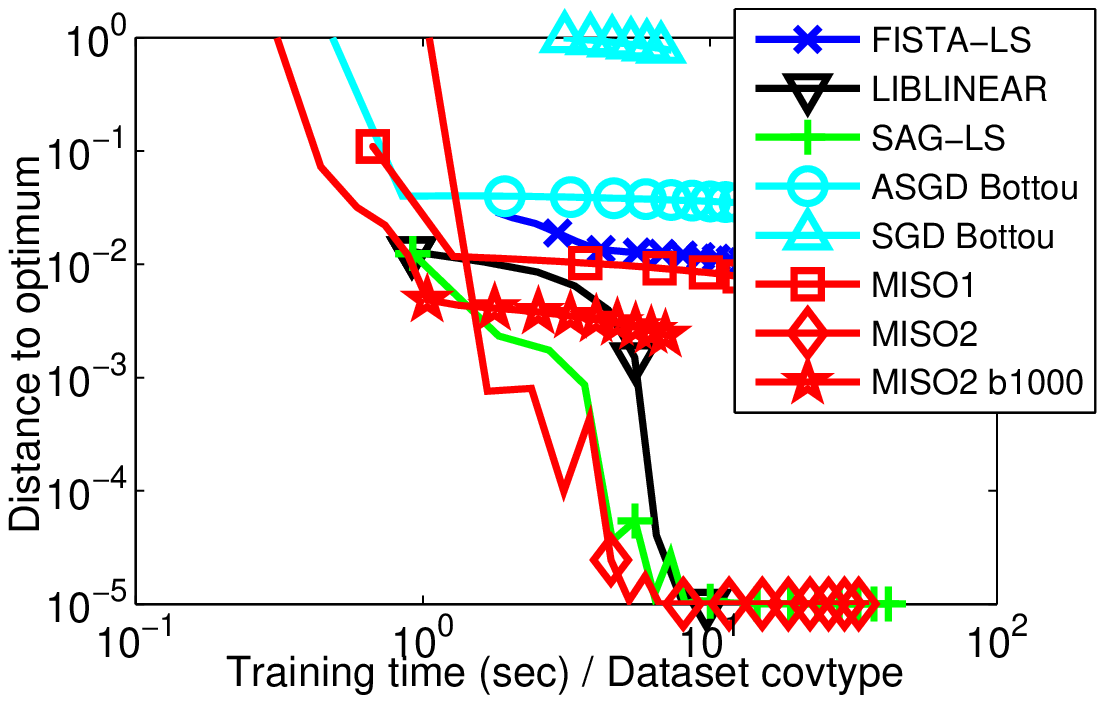}\\
\includegraphics[width=\widthfigleft\linewidth]{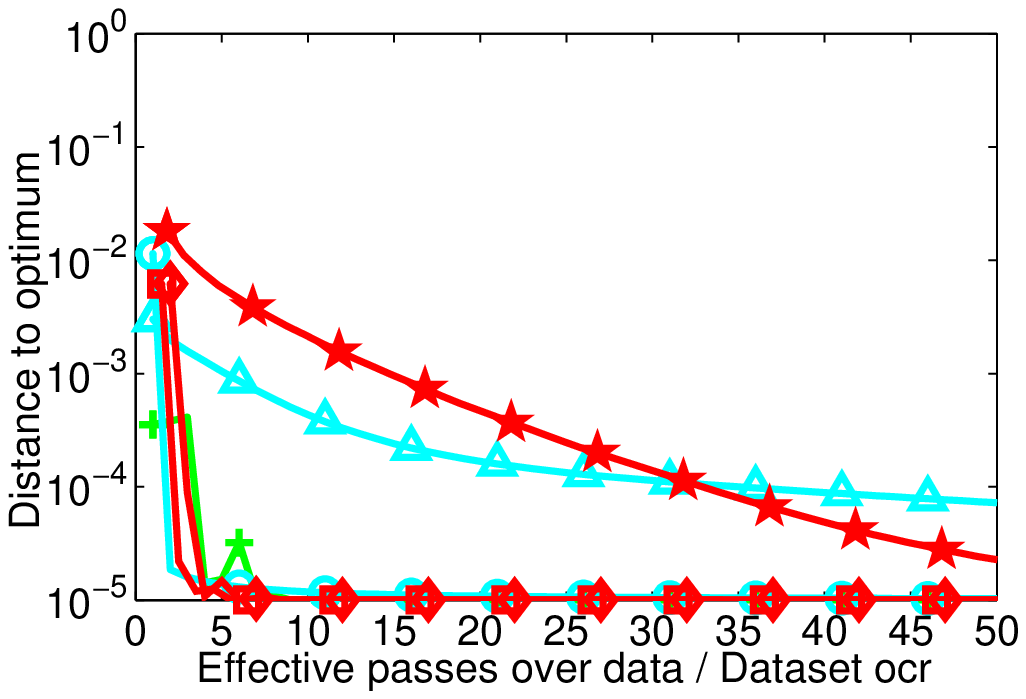}\hfill
\includegraphics[width=\widthfigright\linewidth]{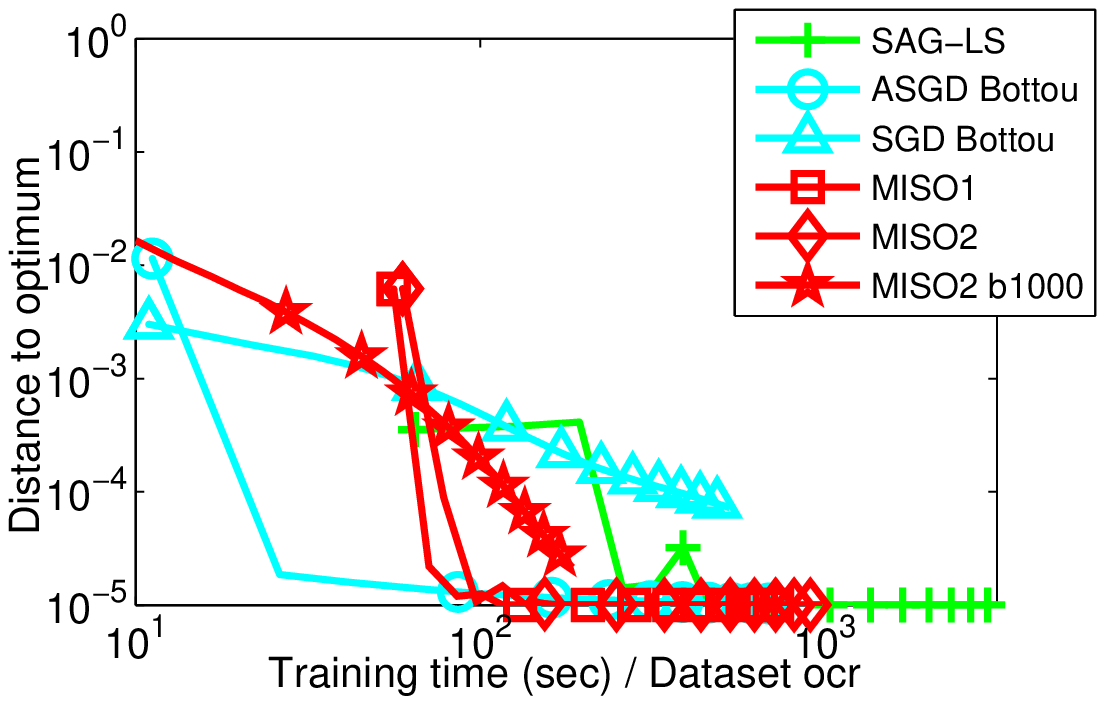}
\myvspace{0.5}
\caption{Results for $\ell_2$-logistic regression with $\lambda\!=\!10^{-5}$.}
\myvspace{0.3}
\label{fig:exp1}
\end{figure}

\myvspace{0.1}
\subsection{$\ell_1$-Regularized Logistic Regression}
Since SAG, SGD and ASGD cannot deal with $\ell_1$-regularization, we compare
here LIBLINEAR, FISTA, SHOTGUN and MISO. We use for LIBLINEAR the option
\textsf{-s 6 -e 0.000001}. We proceed as in Section~\ref{subsec:l2log},
considering three regularization regimes yielding different sparsity
levels. We report the results for one of them in Figure~\ref{fig:exp2} and provide
the rest as supplemental material. In this experiment, our method outperforms
other competitors, except LIBLINEAR on the dataset \textsf{rcv1} when 
a high precision is required (and the regularization is low).
We also remark that a low precision solution is often achieved
quickly using the minibatch scheme (MISO2 b1000), but this
strategy is outperformed by MISO1 and MISO2 for high precisions.

\ifthenelse{\isundefined{\supplemental}}{
\begin{figure}[t]
}{
\begin{figure}[hbtp]
}
\centering
\includegraphics[width=\widthfigleft\linewidth]{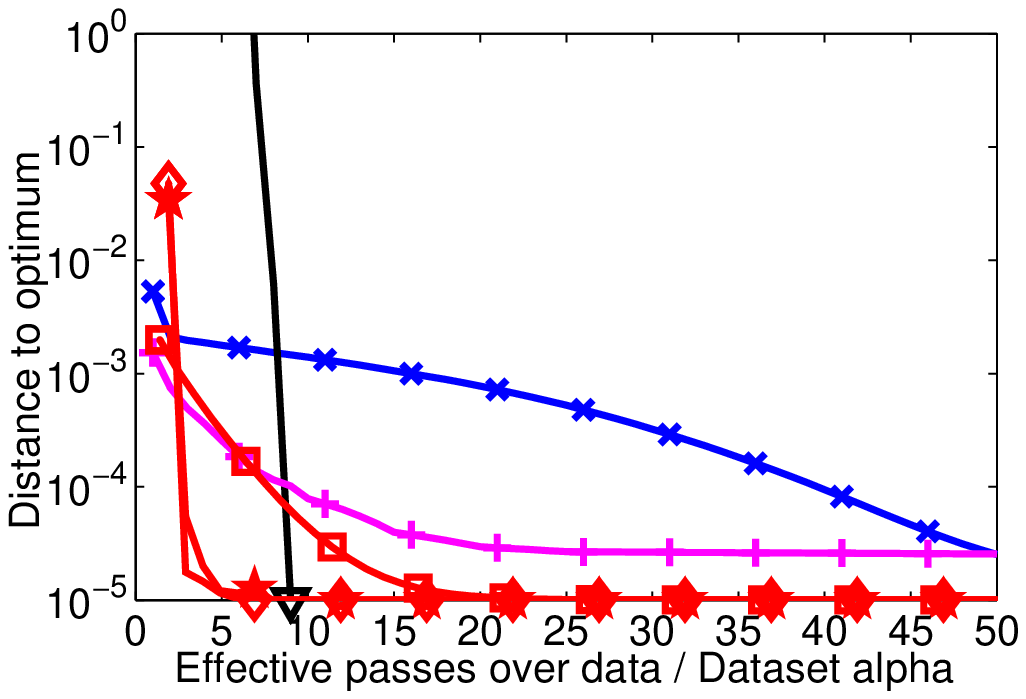}\hfill
\includegraphics[width=\widthfigright\linewidth]{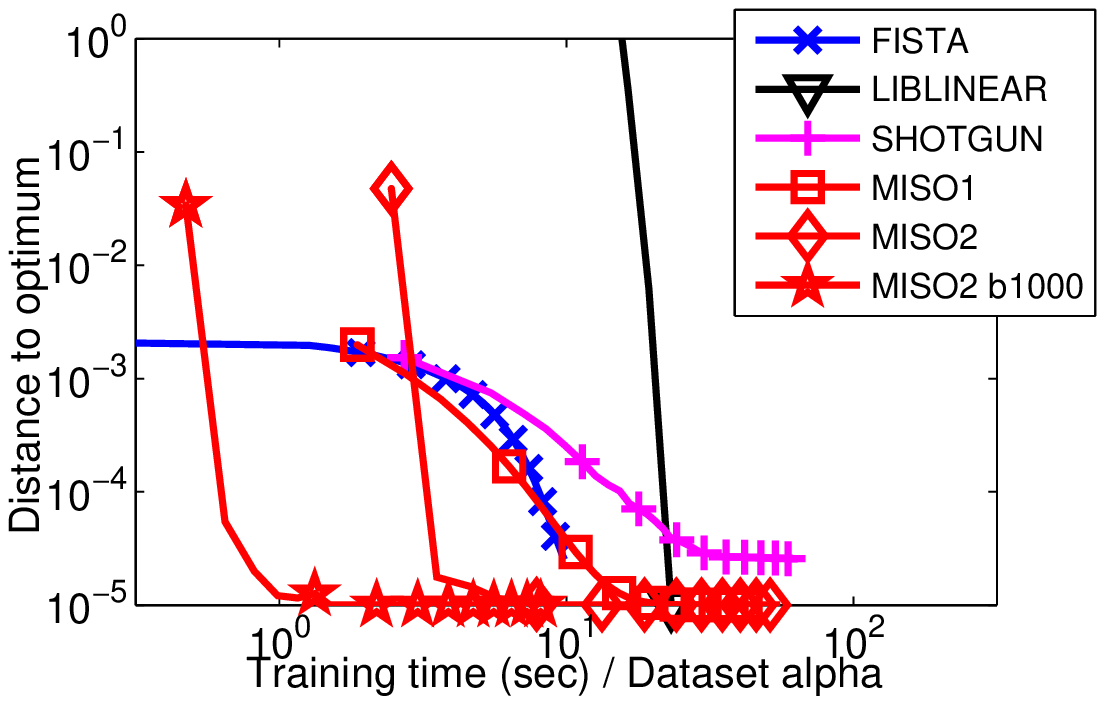}\\
\includegraphics[width=\widthfigleft\linewidth]{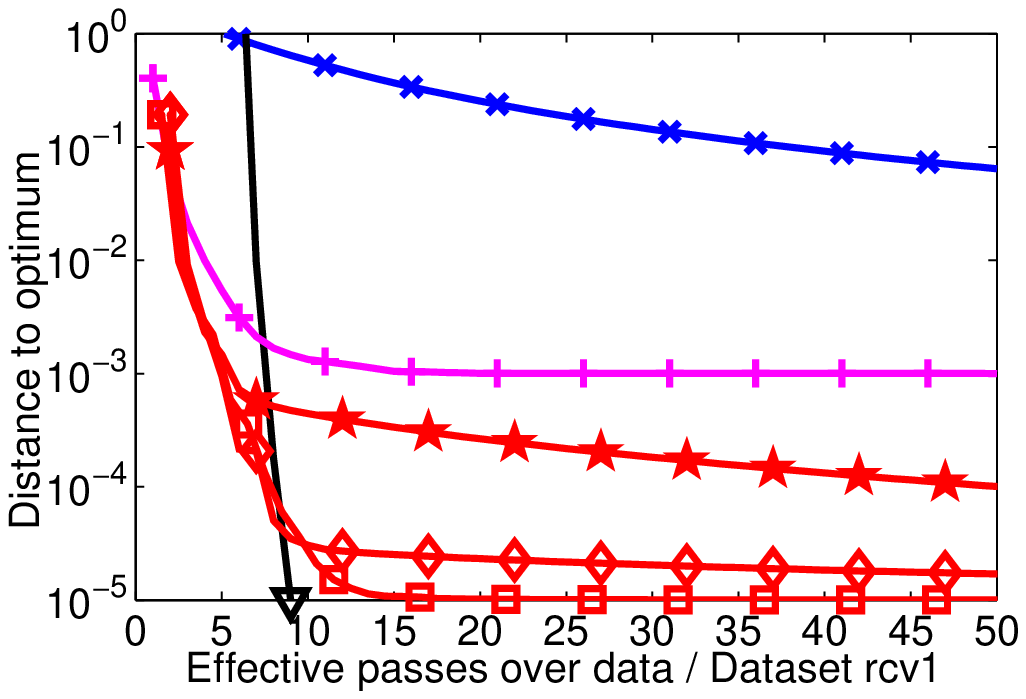}\hfill
\includegraphics[width=\widthfigright\linewidth]{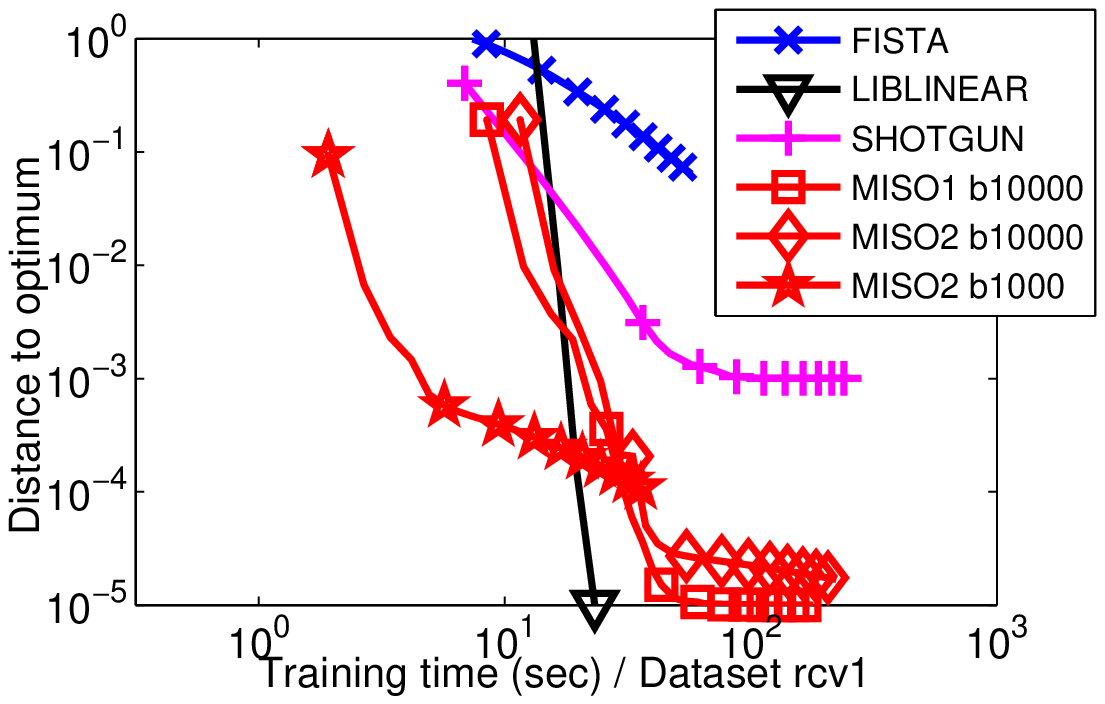}\\
\includegraphics[width=\widthfigleft\linewidth]{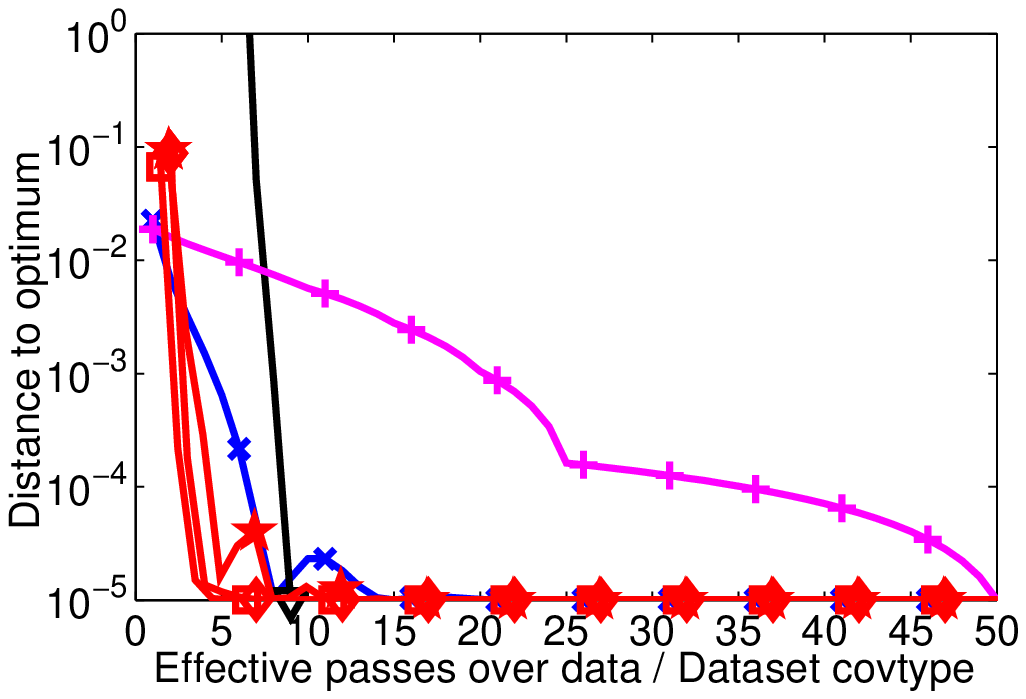}\hfill
\includegraphics[width=\widthfigright\linewidth]{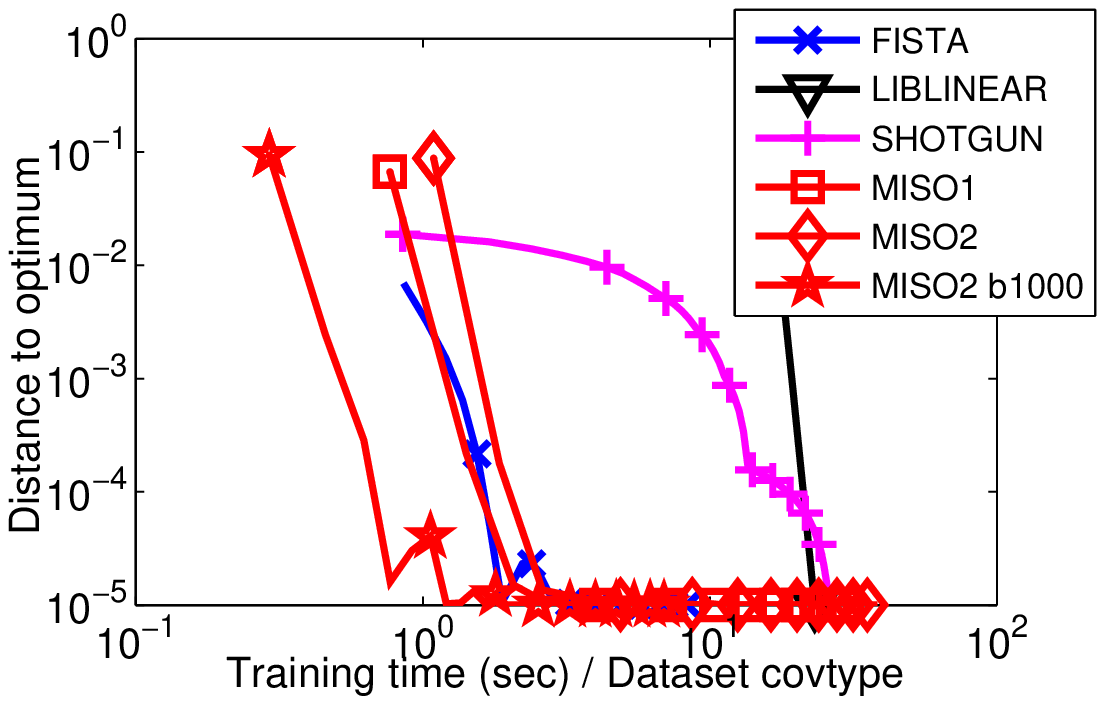}\\
\includegraphics[width=\widthfigleft\linewidth]{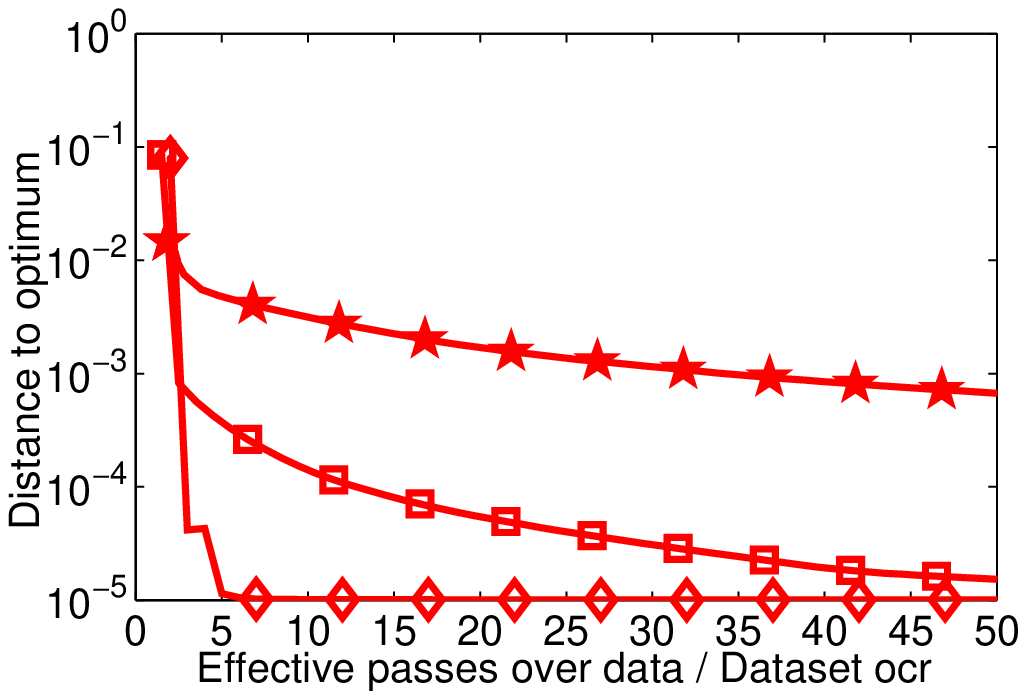}\hfill
\includegraphics[width=\widthfigright\linewidth]{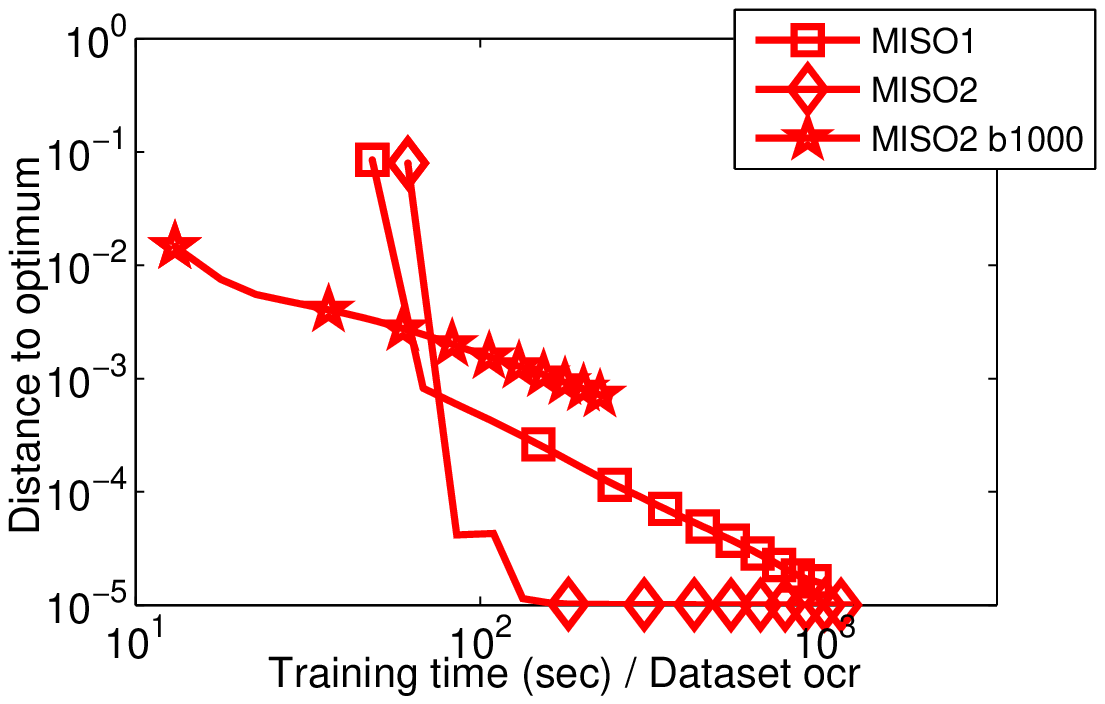}
\myvspace{0.5}
\caption{Benchmarks for $\ell_1$-logistic regression. $\lambda$ was chosen to obtain a solution with $10\%$ nonzero coefficients.}
\label{fig:exp2}
\myvspace{0.4}
\end{figure}

\myvspace{0.1}

\section{Conclusion}
In this paper, we have introduced a flexible optimization framework based
on the computation of ``surrogate functions''. We have revisited numerous
schemes and discovered new ones. For each of them, we have studied
convergence guarantees for non-convex problems and convergence rates for
convex ones. Our methodology led us in particular to the design of an
incremental algorithm, which has theoretical properties and empirical
performance matching state-of-the-art solvers for large-scale machine
learning problems.

In the future, we are planning to study fully stochastic or memoryless variants
of our framework. As in the incremental setting, it consists of
drawing a single training point at each iteration, but the algorithm does not
keep track of all past information. This is essentially a strategy followed
by~\citet{neal} and~\citet{mairal7} in the respective contexts of EM and sparse
coding algorithms. This would be particularly important for processing sparse
datasets with a large number of features, where storing (dense) information
about the past surrogates is cumbersome.

\vspace*{-0.05cm}
\section*{Acknowledgments}
\vspace*{-0.01cm}
JM would like to thank Zaid Harchaoui, Francis Bach, Simon
Lacoste-Julien, Mark Schmidt, Martin Jaggi, and Bin Yu for fruitful
discussions. This work was supported by 
Quaero, (funded by OSEO, the French state agency for innovation), by the Gargantua project (program Mastodons - CNRS), and by the
Center for Science of Information (CSoI), an NSF Science and Technology
Center, under grant agreement CCF-0939370.

{\small 
\bibliography{abbrev,main}
\bibliographystyle{icml2013}
}

\clearpage
\appendix
\onecolumn
\icmltitle{Supplementary Material \\ Optimization with First-Order Surrogate Functions}
\vskip 0.3in

 \paragraph{Outline.} 
   In Appendix~\ref{appendix:background}, we present simple mathematical definitions.
   Appendix~\ref{appendix:maths} contains useful mathematical
   results, which are used in the paper. In Appendix~\ref{appendix:mechanisms},
   we present various mechanisms to build first-order surrogate functions; it is
   in fact a more rigorous version of Section~\ref{subsec:surrogates}, where 
   all claims are proved. In Appendix~\ref{appendix:schemes}, we present the
   block Frank-Wolfe optimization scheme. Finally, all proofs of propositions
   are given in Appendix~\ref{appendix:proofs}, and
   Appendix~\ref{appendix:exps} contains additional experimental results.

\section{Mathematical Background}\label{appendix:background}
For self-containedness purposes, we introduce in this section some mathematical
definitions. Most of them can be found in classical textbooks on
optimization~\citep[e.g.,][]{bertsekas,boyd,borwein,nocedal,nesterov4}.
\begin{definition}[\bfseries Directional Derivative]~\label{def:derivative}\\
Let us consider a function $f: \Theta \subseteq \Real^p \to \Real$, where
$\Theta$ is a convex set, and $\theta,\theta'$ be in $\Theta$. When it exists,
the following limit is called the directional derivative of $f$ at $\theta$ in
the direction $\theta'-\theta$:
$$ \nabla f(\theta,\theta'-\theta) \defin \lim_{t \to 0^+}
\frac{f(\theta+t(\theta'-\theta)) - f(\theta)}{t}.$$
When $f$ is differentiable at $\theta$, 
directional derivatives always exist and we have $\nabla
f(\theta,\theta'-\theta)=\nabla f(\theta)^\top (\theta'-\theta)$.
\end{definition}

\begin{definition}[\bfseries Feasible Direction]~\label{def:admissible}\\
Let $\Theta \subseteq \Real^p$ be a convex set and $\theta$ be a point in
$\Theta$. A vector $\z$ in $\Real^p$ is a feasible direction if
$\theta+\z$ is in $\Theta$. In other words, $\z$ can be written as
$\theta'-\theta$, where $\theta'$ is in $\Theta$.
\end{definition}

\begin{definition}[\bfseries Stationary Point]~\label{def:stationary}\\
Let us consider a function $f: \Theta \subseteq \Real^p \to \Real$, where $\Theta$ is a convex set, such that $f$ admits directional derivatives everywhere in $\Theta$ for every feasible direction. Let $\theta$ be a point in $\Theta$. We say that $\theta$ is a stationary point if for all $\theta' \neq \theta$ in $\Theta$,
\begin{equation}
   \nabla f(\theta, \theta'-\theta) \geq 0.\label{eq:stationary}
\end{equation}
When $f$ is differentiable and $\theta$ is in the interior of $\Theta$, this condition reduces to $\nabla f(\theta)=0$. When~$f$ is convex and $\theta$ is also in the interior of $\Theta$, this condition reduces to $0 \in \partial f(\theta)$, where $\partial f$ is the subdifferential of $f$. 
\end{definition}
\begin{proof}
   Let us assume that $\theta$ is a stationary point and $f$ is differentiable
   at $\theta$. Then, for all~$\theta'$ in~$\Theta$, 
   $\nabla f(\theta, \theta'-\theta)= \nabla f(\theta)^\top (\theta'-\theta) \geq 0$. In particular, since $\theta$ is in the interior of $\Theta$, we can find $\theta'$ such that $\theta'-\theta= -\delta\nabla f(\theta)$ for some $\delta > 0$ small enough. Thus, we necessarily have $\nabla f(\theta)=0$. The converse is trivial.

   The equivalence between~(\ref{eq:stationary}) and $0 \in \partial f(\theta)$ when $f$ is convex but non-differentiable can be found in~\citet[][Proposition 3.1.6]{borwein}.
\end{proof}

\begin{definition}[\bfseries Lipschitz Continuity]~\label{def:lipschitz}\\
A function $f: \Theta \subseteq \Real^p \to \Real$ is called Lipschitz if there exists a constant $L>0$ such that for all $\theta,\theta'$ in $\Theta$, we have
\begin{displaymath}
   |f(\theta')-f(\theta)| \leq L \|\theta-\theta'\|_2.
\end{displaymath}
In that case, we say that the function is $L$-Lipschitz.
\end{definition}

\begin{definition}[\bfseries Strong Convexity]~\label{def:strong_convexity}\\
Let $\Theta$ be a convex set. A function $f: \Theta \subseteq \Real^p \to \Real$ is called $\mu$-strongly convex when there exists a constant $\mu > 0$ such that for all $\theta'$ in $\Theta$, the function $\theta \mapsto f(\theta)-\frac{\mu}{2}\|\theta-\theta'\|_2^2$ is convex. 
This definition is equivalent to having for all $\alpha$ in $[0,1]$ and $\theta$, $\theta'$ in $\Theta$,
\begin{equation}
  f(\alpha \theta + (1-\alpha)\theta') \leq \alpha f(\theta) + (1-\alpha) f(\theta') - \frac{\mu}{2} \alpha (1-\alpha)\|\theta-\theta'\|_2^2.\label{eq:strong_convexity}
\end{equation}
Note that the value $\mu=0$ leads to the classical definition of convex functions.
\end{definition}
\begin{proof}
Let us consider $\theta'$ in $\Theta$ and define the function $g: \theta \mapsto f(\theta)-\frac{\mu}{2}\|\theta-\theta'\|_2^2$.
This function is convex if and only if for all $\alpha$ in $[0,1]$, we have
\begin{displaymath}
   g(\alpha \theta + (1-\alpha)\theta') \leq \alpha g(\theta) + (1-\alpha) g(\theta').
\end{displaymath}
In other words, if and only if 
\begin{displaymath}
   f(\alpha \theta + (1-\alpha)\theta') - \frac{\mu}{2}\alpha^2\|\theta-\theta'\|_2^2 \leq \alpha\left(f(\theta) - \frac{\mu}{2}\|\theta-\theta'\|_2^2\right) + (1-\alpha) f(\theta'),
\end{displaymath}
which is equivalent to~(\ref{eq:strong_convexity}).
\end{proof}

\section{Useful Mathematical Results}\label{appendix:maths}
We provide in this section a few propositions and lemmas which are used in
this paper.

\begin{lemma}[\bfseries Convex Surrogate for Functions with Lipschitz Gradient]~\label{lemma:upperlipschitz}\newline
Let $f: \Real^p \to \Real$ be differentiable and $\nabla f$ be $L$-Lipschitz continuous. Then, for all $\theta,\theta'$ in $\Real^p$,
   \begin{equation}
       |f(\theta') - f(\theta) - \nabla f(\theta)^\top (\theta'-\theta)| \leq \frac{L}{2}\|\theta-\theta'\|_2^2.\label{eq:lipschitz}
   \end{equation}
\end{lemma}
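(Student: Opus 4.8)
The plan is to prove this descent-type lemma by reducing the multivariate statement to a one-dimensional integral along the segment joining $\theta$ and $\theta'$, and then controlling the integrand with the Lipschitz hypothesis. First I would introduce the auxiliary function $\varphi: [0,1] \to \Real$ defined by $\varphi(t) \defin f(\theta + t(\theta'-\theta))$. Since $f$ is differentiable, $\varphi$ is differentiable with $\varphi'(t) = \nabla f(\theta + t(\theta'-\theta))^\top (\theta'-\theta)$, and the fundamental theorem of calculus gives $f(\theta') - f(\theta) = \varphi(1) - \varphi(0) = \int_0^1 \nabla f(\theta + t(\theta'-\theta))^\top (\theta'-\theta)\, dt$.

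The key algebraic step is to subtract off the linear term $\nabla f(\theta)^\top(\theta'-\theta)$ inside the integral. Writing $\nabla f(\theta)^\top(\theta'-\theta) = \int_0^1 \nabla f(\theta)^\top(\theta'-\theta)\, dt$, I would obtain
\begin{displaymath}
f(\theta') - f(\theta) - \nabla f(\theta)^\top(\theta'-\theta) = \int_0^1 \bigl(\nabla f(\theta + t(\theta'-\theta)) - \nabla f(\theta)\bigr)^\top (\theta'-\theta)\, dt.
\end{displaymath}
Taking absolute values, passing the modulus inside the integral, and applying the Cauchy--Schwarz inequality to each integrand yields the bound $\int_0^1 \|\nabla f(\theta + t(\theta'-\theta)) - \nabla f(\theta)\|_2\, \|\theta'-\theta\|_2\, dt$.

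To finish, I would invoke the $L$-Lipschitz continuity of $\nabla f$, which gives $\|\nabla f(\theta + t(\theta'-\theta)) - \nabla f(\theta)\|_2 \leq L\|t(\theta'-\theta)\|_2 = Lt\,\|\theta'-\theta\|_2$, so the integral is at most $L\|\theta'-\theta\|_2^2 \int_0^1 t\, dt = \frac{L}{2}\|\theta'-\theta\|_2^2$. This establishes the claimed inequality with the absolute value, since the argument bounds the modulus of the left-hand side directly.

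I do not anticipate a genuine obstacle here, as this is the classical quadratic upper/lower bound; the only point requiring mild care is the justification that $\varphi$ is absolutely continuous so that the fundamental theorem of calculus applies to the integral of $\varphi'$. This follows because $t \mapsto \nabla f(\theta + t(\theta'-\theta))$ is continuous (indeed Lipschitz in $t$), making $\varphi'$ continuous and the integral representation valid in the ordinary Riemann sense.
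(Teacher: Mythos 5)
Your proof is correct and is precisely the classical argument: the paper does not spell out a proof but simply cites \citet[Lemma 1.2.3]{nesterov4}, whose proof is exactly your integral representation of $f(\theta')-f(\theta)-\nabla f(\theta)^\top(\theta'-\theta)$ along the segment, followed by Cauchy--Schwarz, the Lipschitz bound on the gradient increment, and integration of $t$ over $[0,1]$. Nothing is missing; your closing remark on the continuity of $\varphi'$ justifying the fundamental theorem of calculus is the right (and only) point of care.
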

\begin{proof}
This lemma is classical~\citep[see][Lemma 1.2.3 and its proof]{nesterov4}.
\end{proof}
Note that Eq.~(\ref{eq:lipschitz}) does not imply the
gradient of a differentiable function~$f$ to be $L$-Lipschitz continuous. 
The equivalence is only true in some cases, as shown in the following lemma.
\begin{lemma}[\bfseries Relation between Quadratic Surrogates and Lipschitz Constants]~\label{lemma:upperlipschitz2}\newline
Let $f: \Real^p \to \Real$ be a differentiable function.
   Assume that for all $\theta,\theta'$ in~$\Real^p$,
   inequality~(\ref{eq:lipschitz}) holds.  Then, $\nabla f$ is $L$-Lipschitz continuous
   when one of the following conditions is true:
   \begin{enumerate}
\setlength\itemindent{25pt}
      \item $f$ is convex;
      \item $f$ is twice differentiable;
      \item $\nabla f$ is Lipschitz continuous (the lemma then provides the Lipschitz constant $L$).
   \end{enumerate}
\end{lemma}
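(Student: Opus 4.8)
The plan is to first extract from~(\ref{eq:lipschitz}) a monotonicity statement, and then, in each of the three cases, upgrade it to a genuine bound on $\|\nabla f(\theta')-\nabla f(\theta)\|_2$. Writing~(\ref{eq:lipschitz}) as the two one-sided inequalities $f(\theta')-f(\theta)-\nabla f(\theta)^\top(\theta'-\theta) \le \frac{L}{2}\|\theta'-\theta\|_2^2$ and its mirror with the opposite sign, a direct computation shows that both $\phi \defin \frac{L}{2}\|\cdot\|_2^2 - f$ and $\psi \defin \frac{L}{2}\|\cdot\|_2^2 + f$ satisfy the first-order characterization of convexity, hence are convex. Monotonicity of $\nabla\phi$ and $\nabla\psi$ then yields the two-sided bound
$$ |(\nabla f(\theta')-\nabla f(\theta))^\top(\theta'-\theta)| \le L\|\theta'-\theta\|_2^2 \quad\text{for all }\theta,\theta'. $$
As the remark preceding the lemma stresses, this inner-product estimate is by itself \emph{not} enough to conclude that $\nabla f$ is $L$-Lipschitz; each of the three hypotheses supplies exactly the extra structure needed to remove the scalar projection.

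For the convex case (condition~1), I would avoid the two-sided bound and argue directly. Fixing $\theta$, the function $\eta \mapsto f(\eta)-\nabla f(\theta)^\top\eta$ is convex, still satisfies~(\ref{eq:lipschitz}) with the same constant (the linear part cancels), and is minimized at $\eta=\theta$. Bounding its value at the gradient-step point $\eta-\frac{1}{L}\nabla(\cdot)$ by the quadratic upper bound — exactly the algebraic step used in the proof of Proposition~\ref{prop:conv1} — gives $\frac{1}{2L}\|\nabla f(\theta')-\nabla f(\theta)\|_2^2 \le f(\theta')-f(\theta)-\nabla f(\theta)^\top(\theta'-\theta)$. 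Symmetrizing in $\theta,\theta'$ and adding produces the co-coercivity inequality $\frac{1}{L}\|\nabla f(\theta')-\nabla f(\theta)\|_2^2 \le (\nabla f(\theta')-\nabla f(\theta))^\top(\theta'-\theta)$, and Cauchy--Schwarz finishes the case.

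For the twice-differentiable case (condition~2), I would use the inner-product bound above: setting $\theta'=\theta+td$, dividing by $t^2$, and letting $t\to 0^+$ gives $|d^\top\nabla^2 f(\theta)\,d| \le L\|d\|_2^2$ for every direction $d$. Since the Hessian is symmetric, this forces all its eigenvalues into $[-L,L]$, i.e.\ $\|\nabla^2 f(\theta)\|_2 \le L$ for all $\theta$; applying the scalar mean value theorem to $t\mapsto v^\top\nabla f(\theta+t(\theta'-\theta))$ and specializing $v=\nabla f(\theta')-\nabla f(\theta)$ then yields $\|\nabla f(\theta')-\nabla f(\theta)\|_2 \le L\|\theta'-\theta\|_2$. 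For the Lipschitz-gradient case (condition~3), I would reduce to condition~2 by mollification: convolving $f$ with a smooth mollifier $\phi_\varepsilon$ gives a $C^\infty$ function $f_\varepsilon$ that still obeys~(\ref{eq:lipschitz}) with the same $L$, since each integrand $f(\theta'-y)-f(\theta-y)-\nabla f(\theta-y)^\top(\theta'-\theta)$ is bounded by $\frac{L}{2}\|\theta'-\theta\|_2^2$ and $\phi_\varepsilon$ is a probability density. Condition~2 then makes $\nabla f_\varepsilon=(\nabla f)*\phi_\varepsilon$ be $L$-Lipschitz, and continuity of $\nabla f$ (guaranteed by its Lipschitzness) gives $\nabla f_\varepsilon\to\nabla f$ locally uniformly, so the bound passes to the limit.

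The main obstacle is conceptual rather than computational: it is precisely the gap flagged in the remark, namely that the monotone/inner-product estimate does not control the full gradient increment. The three arguments close this gap in genuinely different ways — convexity through co-coercivity, second differentiability through symmetry of the Hessian (which converts a quadratic-form bound into an operator-norm bound), and a bare Lipschitz gradient through an approximation argument that imports the twice-differentiable conclusion. The one place to be careful is the mollification step, where one must verify both that $f_\varepsilon$ retains the \emph{same} constant $L$ and that differentiation commutes with the convolution.
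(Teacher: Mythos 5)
Your proposal is correct, and for the first two conditions it essentially coincides with the paper's argument: for condition~1 the paper simply cites the co-coercivity theorem of Nesterov (Theorem~2.1.5), whose standard proof is exactly the gradient-step/symmetrization computation you write out, and for condition~2 the paper performs the same two steps you do --- extract the inner-product bound $(\theta'-\theta)^\top(\nabla f(\theta')-\nabla f(\theta)) \leq L\|\theta'-\theta\|_2^2$ by summing the two one-sided versions of~(\ref{eq:lipschitz}) with the roles of $\theta,\theta'$ exchanged, plug in the first-order expansion of $\nabla f$ to get $\|\nabla^2 f(\theta)\|_2 \leq L$, and conclude by a mean value theorem (the paper uses the integral form $\nabla f(\theta')-\nabla f(\theta)=\int_0^1 \nabla^2 f(\theta+t(\theta'-\theta))(\theta'-\theta)\,dt$, whereas your scalar mean value theorem applied to $t \mapsto v^\top \nabla f(\theta+t(\theta'-\theta))$ with $v=\nabla f(\theta')-\nabla f(\theta)$ sidesteps any integrability question about the Hessian along the segment, a mild improvement; your explicit two-sided quadratic-form bound via convexity of both $\frac{L}{2}\|\cdot\|_2^2 \pm f$ is also slightly more careful than the paper, which only displays the upper bound before invoking the operator norm). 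Where you genuinely diverge is condition~3: the paper stays with the nonsmooth function directly, invoking Clarke's nonsmooth analysis --- Rademacher's theorem gives a Hessian almost everywhere, bounded by $L$ wherever it exists, and Clarke's mean value theorem (Proposition~2.6.5 of Clarke) applies along almost every segment, with a final continuity argument extending the bound to all pairs $\theta,\theta'$. Your mollification route instead smooths $f$, checks that the convolution $f_\varepsilon = f \ast \phi_\varepsilon$ inherits~(\ref{eq:lipschitz}) with the \emph{same} constant $L$ (correct, since the inequality is averaged pointwise against a probability density and the linear term convolves to $\nabla f_\varepsilon = (\nabla f)\ast\phi_\varepsilon$, which is legitimate here because $\nabla f$ is continuous), applies condition~2 to the smooth $f_\varepsilon$, and passes to the limit by local uniform convergence. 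Both arguments are sound; the Clarke route is shorter if one accepts the imported machinery, while your mollification argument is self-contained and elementary, and in fact reveals that the full Lipschitz hypothesis on $\nabla f$ is used only through the continuity of $\nabla f$, so your proof establishes the conclusion under the formally weaker assumption $f \in C^1$.
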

\begin{proof}~\\
   {\bfseries First point:}\\
   a proof of the first point can be found in~\citet[][Theorem 2.1.5]{nesterov4}.

   {\noindent \bfseries Second point:}\\
   To prove the second point, we upper-bound the extremal eigenvalues of the Hessian matrix. Let us fix $\theta$ in $\Real^p$. Since $f$ is twice differentiable at $\theta$, we have for all $\theta'$ in $\Real^p$
    \begin{displaymath}
        \nabla f(\theta') - \nabla f(\theta) = \nabla^2 f (\theta)(\theta'-\theta) + o( \|\theta'-\theta\|_2),
    \end{displaymath}
    and thus
    \begin{equation}
        (\theta'-\theta)^\top(\nabla f(\theta') - \nabla f(\theta)) = (\theta'-\theta)^\top\nabla^2 f (\theta)(\theta'-\theta) + o( \|\theta'-\theta\|_2^2). \label{eq:hessianbound}
    \end{equation}
    Summing twice Eq.~(\ref{eq:lipschitz}) without the absolute values when exchanging the roles of $\theta$ and~$\theta'$ gives
   \begin{displaymath}
      (\theta'-\theta)^\top(\nabla f(\theta') - \nabla f(\theta)) \leq L\|\theta-\theta'\|_2^2.
   \end{displaymath}
   Plugging Eq.~(\ref{eq:hessianbound}) into this inequality yields $\|\nabla^2 f(\theta)\|_2 \leq L$. To conclude, we use a mean value theorem, as done for example in~\citet[][Lemma 1.2.2]{nesterov4}
   \begin{displaymath}
   \begin{split}
      \|\nabla f(\theta') - \nabla f(\theta)\|_2 & = \left\| \int_{t=0}^1 \nabla^2 f(\theta + t (\theta'-\theta)) (\theta'-\theta)dt \right\|_2 \\
                                               & \leq \int_{t=0}^1 \|\nabla^2 f(\theta  + t(\theta'-\theta))\|_2 dt \|\theta'-\theta\|_2 \\
                                               & \leq L \|\theta-\theta'\|_2.
   \end{split}
   \end{displaymath}
   {\noindent \bfseries Third point:}\\
   Proving the third point is more difficult due to the lack of smoothness
   assumptions on~$f$. However, when making the explicit assumption that $\nabla f$ 
   is Lipschitz continuous, we can show that Eq.~(\ref{eq:lipschitz})
   provides us a Lipschitz constant.
   The proof exploits some results from nonsmooth analysis developed
   by~\citet{clarke}.
   We essentially use a mean value theorem for
   multi-dimensional Lipschitz functions~\citep[][Proposition 2.6.5]{clarke},
   exploiting the fact that a Lipschitz function is differentiable almost
   everywhere (Rademacher theorem). This allows us to follow a similar proof as
   for the twice differentiable case.

   More precisely, we have that $\nabla f$ is Lipschitz continuous and thus
   differentiable almost everywhere on $\Theta$.
   Let us call the Hessian matrix  $\nabla^2 f(\theta)$ at a point $\theta$ in
   $\Real^p$, when it exists.  Then, we have at such a point $\|\nabla^2
   f(\theta)\|_2 \leq L$, following the beginning of the second point's proof.
   Then, it turns out that for all $\theta$ in $\Real^p$, the following mean
   value theorem holds for almost all $\theta'$~\citep[see][proof of Proposition 2.6.5]{clarke}: 
   \begin{displaymath}
   \nabla f(\theta') - \nabla f(\theta) =
   \int_{t=0}^1 \nabla^2 f(\theta + t (\theta'-\theta)) (\theta'-\theta)dt.
   \end{displaymath}
   This comes from the fact that for almost all $\theta'$, the intersection of the line
   segment $[\theta,\theta']$ and the set where $\nabla^2 f$ is not defined
   has $0$ one-dimensional measure~\citep[see again][Proposition 2.6.5]{clarke}.
   We therefore have for almost all $\theta'$ (and a fixed
   $\theta$), $\|\nabla f(\theta)-\nabla f(\theta')\|_2 \leq
   L\|\theta-\theta'\|_2$ and the general result comes from a continuity
   argument.
\end{proof}
 
\begin{lemma}[\bfseries Surrogate for Functions with Lipschitz Hessian]~\label{lemma:uppersecond}\newline
   Let $f: \Real^p \to \Real$ be a twice differentiable function with $M$-Lipschitz continuous Hessian. Then for all $\theta,\theta'$ in $\Real^p$,
   \begin{displaymath}
       \left|f(\theta') - f(\theta) - \nabla f(\theta)^\top (\theta'-\theta) - \frac{1}{2}(\theta'-\theta)^\top \nabla^2f(\theta)(\theta'-\theta)\right| \leq \frac{M}{6}\|\theta-\theta'\|_2^3.
   \end{displaymath}
\end{lemma}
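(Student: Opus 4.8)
The plan is to reduce the multivariate statement to a one-dimensional integral along the segment joining $\theta$ and $\theta'$, exactly in the spirit of the proof of Lemma~\ref{lemma:upperlipschitz} but carried one order higher. Writing $d \defin \theta'-\theta$ and considering the restriction $t \mapsto f(\theta+td)$ on $[0,1]$, the second-order Taylor expansion with integral remainder gives the identity $f(\theta') = f(\theta) + \nabla f(\theta)^\top d + \int_0^1 (1-t)\, d^\top \nabla^2 f(\theta+td)\, d\, dt$. This requires only twice differentiability, which is assumed, and it is the starting point I would take.

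Next I would rewrite the leading Hessian term in the same integral form. Since $\int_0^1 (1-t)\,dt = \tfrac12$, we have $\tfrac{1}{2}\, d^\top \nabla^2 f(\theta)\, d = \int_0^1 (1-t)\, d^\top \nabla^2 f(\theta)\, d\, dt$. Subtracting this from the identity above, the quantity inside the absolute value in the statement collapses to the single integral $\int_0^1 (1-t)\, d^\top \bigl[\nabla^2 f(\theta+td) - \nabla^2 f(\theta)\bigr]\, d\, dt$, so the entire difference is expressed purely through the variation of the Hessian along the segment.

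The final step applies the Lipschitz hypothesis. Bounding the integrand by $(1-t)\,\|\nabla^2 f(\theta+td) - \nabla^2 f(\theta)\|_2\,\|d\|_2^2$ and using $M$-Lipschitz continuity of the Hessian, namely $\|\nabla^2 f(\theta+td) - \nabla^2 f(\theta)\|_2 \leq M\,t\,\|d\|_2$, produces the upper bound $M\|d\|_2^3 \int_0^1 t(1-t)\, dt$. Computing $\int_0^1 t(1-t)\, dt = \tfrac12 - \tfrac13 = \tfrac16$ gives precisely $\frac{M}{6}\|\theta-\theta'\|_2^3$, which is the claim.

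I do not expect any genuine obstacle. The only points needing mild care are justifying the integral form of the Taylor remainder (the identity itself needs only twice differentiability; the Lipschitz continuity is used solely in the last bound) and confirming that the operator-norm estimate on the Hessian increment passes through the integral via Cauchy--Schwarz; everything else is the routine evaluation of $\int_0^1 t(1-t)\,dt$.
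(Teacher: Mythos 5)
Your proof is correct and is precisely the classical argument behind the citation the paper gives in place of a proof (it simply invokes Nesterov's Lemma 1.2.4): second-order Taylor expansion with integral remainder, absorption of the Hessian term via $\int_0^1(1-t)\,dt=\tfrac{1}{2}$, and the Lipschitz bound combined with $\int_0^1 t(1-t)\,dt=\tfrac{1}{6}$. One small caveat: the integral form of the remainder requires slightly more than bare twice differentiability (continuity of $\nabla^2 f$ along the segment suffices), but this is automatic here since the Hessian is assumed $M$-Lipschitz, so your argument is complete as written.
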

\begin{proof}
This is again a classical lemma~\citep[see][Lemma 1.2.4]{nesterov4}.
\end{proof}

\begin{lemma}[\bfseries Lower Surrogate for Strongly Convex Functions]~\label{lemma:lower}\newline
 Let $f: \Real^p \to \Real$ be a $\mu$-strongly convex function. Suppose that $f$ is differentiable, then the following inequality holds for all $\theta,\theta'$ in $\Real^p$:
 \begin{displaymath}
     f(\theta') \geq f(\theta) + \nabla f(\theta)^\top(\theta'-\theta) + \frac{\mu}{2}\|\theta-\theta'\|_2^2.
 \end{displaymath}
\end{lemma}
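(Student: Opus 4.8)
The plan is to reduce the inequality to the first-order characterization of convexity obtained by differentiating the chord inequality of strong convexity along a line segment. Two equivalent routes are available, and I would take the more direct one, which uses the defining inequality (\ref{eq:strong_convexity}) from Definition~\ref{def:strong_convexity} together with the notion of directional derivative from Definition~\ref{def:derivative}.

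First I would fix $\theta,\theta'$ in $\Real^p$ and apply (\ref{eq:strong_convexity}) to the convex combination $\alpha\theta' + (1-\alpha)\theta$ for $\alpha \in (0,1)$, obtaining
$$f(\theta + \alpha(\theta'-\theta)) \leq (1-\alpha)f(\theta) + \alpha f(\theta') - \frac{\mu}{2}\alpha(1-\alpha)\|\theta-\theta'\|_2^2.$$
Subtracting $f(\theta)$, dividing by $\alpha > 0$, and rearranging would then give
$$\frac{f(\theta + \alpha(\theta'-\theta)) - f(\theta)}{\alpha} \leq f(\theta') - f(\theta) - \frac{\mu}{2}(1-\alpha)\|\theta-\theta'\|_2^2.$$

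Next I would let $\alpha \to 0^+$. Since $f$ is differentiable at $\theta$, the left-hand side converges to the directional derivative $\nabla f(\theta,\theta'-\theta) = \nabla f(\theta)^\top(\theta'-\theta)$ by Definition~\ref{def:derivative}, while the right-hand side converges to $f(\theta') - f(\theta) - \frac{\mu}{2}\|\theta-\theta'\|_2^2$. This produces $\nabla f(\theta)^\top(\theta'-\theta) \leq f(\theta') - f(\theta) - \frac{\mu}{2}\|\theta-\theta'\|_2^2$, which is exactly the claimed inequality after moving the terms to the other side.

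There is essentially no serious obstacle here; the only point requiring care is the passage to the limit, where I must invoke differentiability of $f$ to identify the limit of the difference quotient with $\nabla f(\theta)^\top(\theta'-\theta)$. As an alternative I could instead set $g \defin f - \frac{\mu}{2}\|\cdot\|_2^2$, observe that $g$ is convex and differentiable (adding an affine term to the function in Definition~\ref{def:strong_convexity} preserves convexity), apply the $\mu=0$ version of the inequality to $g$, and expand the squared norms; the identity $\frac{\mu}{2}\|\theta'\|_2^2 - \frac{\mu}{2}\|\theta\|_2^2 - \mu\,\theta^\top(\theta'-\theta) = \frac{\mu}{2}\|\theta-\theta'\|_2^2$ would then recover the statement.
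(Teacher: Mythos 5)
Your proposal is correct, and your primary argument takes a genuinely different route from the paper's. The paper's proof is a one-liner: by Definition~\ref{def:strong_convexity}, the function $\theta' \mapsto f(\theta') - \frac{\mu}{2}\|\theta-\theta'\|_2^2$ is convex and differentiable, hence lies above its tangent at~$\theta$; because the quadratic term is centered at~$\theta$, its value and gradient there coincide with $f(\theta)$ and $\nabla f(\theta)$, so the inequality drops out with no algebra at all. Your first route instead works directly from the chord inequality~(\ref{eq:strong_convexity}): divide by $\alpha$ and let $\alpha \to 0^+$, using differentiability (Definition~\ref{def:derivative}) to identify the limit of the difference quotient with $\nabla f(\theta)^\top(\theta'-\theta)$ — all steps check out, including the factor $(1-\alpha) \to 1$ in front of the quadratic term. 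What this buys is self-containedness: you use only the defining chord inequality and the definition of the derivative, and in passing you reprove the $\mu=0$ tangent inequality rather than presupposing it, whereas the paper quietly invokes that standard fact about convex differentiable functions. The paper's route buys brevity. Your alternative route is essentially the paper's argument in disguise: subtracting $\frac{\mu}{2}\|\cdot\|_2^2$ centered at the origin rather than at~$\theta$ is legitimate (the two auxiliary functions differ by an affine term, as you note), but it forces the norm-expansion identity you state — which is exactly the paper's Lemma~\ref{lemma:pythagoras} — while the paper's choice of center makes that bookkeeping vanish.
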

\begin{proof}
    $\theta' \mapsto f(\theta') - \frac{\mu}{2}\|\theta-\theta'\|_2^2$ is convex and differentiable and is therefore above its tangent at~$\theta$, immediately leading to the desired inequality.
\end{proof}

\begin{lemma}[\bfseries Second-Order Growth Property]~\label{lemma:second}\newline
 Let $f: \Real^p \to \Real$ be a $\mu$-strongly convex function and $\Theta \subseteq \Real^p$ be a convex set.
 Let $\theta^\star$ be the minimizer of $f$ on $\Theta$. Then, the following condition holds for all $\theta$ in $\Theta$:
 \begin{displaymath}
     f(\theta) \geq f(\theta^\star) + \frac{\mu}{2}\|\theta-\theta^\star\|_2^2.
 \end{displaymath}
\end{lemma}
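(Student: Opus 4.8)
The plan is to prove this purely from the definition of $\mu$-strong convexity, avoiding any differentiability hypothesis, and to exploit that $\Theta$ is convex together with the fact that $\theta^\star$ minimizes $f$ over $\Theta$. The natural tool is inequality~(\ref{eq:strong_convexity}) from Definition~\ref{def:strong_convexity}, applied along the segment joining an arbitrary $\theta$ to $\theta^\star$.

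Concretely, I would fix an arbitrary $\theta$ in $\Theta$ and a scalar $\alpha$ in $(0,1]$, and apply~(\ref{eq:strong_convexity}) to the pair $(\theta,\theta^\star)$, obtaining
$$f(\alpha\theta + (1-\alpha)\theta^\star) \leq \alpha f(\theta) + (1-\alpha)f(\theta^\star) - \frac{\mu}{2}\alpha(1-\alpha)\|\theta-\theta^\star\|_2^2.$$
Because $\Theta$ is convex, the interpolated point $\alpha\theta+(1-\alpha)\theta^\star$ belongs to $\Theta$, so the optimality of $\theta^\star$ gives $f(\theta^\star) \leq f(\alpha\theta+(1-\alpha)\theta^\star)$. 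Chaining these two inequalities and subtracting $(1-\alpha)f(\theta^\star)$ from both sides isolates $\alpha f(\theta^\star)$ on the left, after which dividing by $\alpha>0$ yields
$$f(\theta^\star) \leq f(\theta) - \frac{\mu}{2}(1-\alpha)\|\theta-\theta^\star\|_2^2.$$
Letting $\alpha \to 0^+$ then produces $f(\theta) \geq f(\theta^\star) + \frac{\mu}{2}\|\theta-\theta^\star\|_2^2$, which is exactly the claimed second-order growth property.

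I do not expect a genuine obstacle here; the argument is short and elementary. The only two points requiring a little care are the appeal to convexity of $\Theta$, which is what guarantees feasibility of the interpolated point and hence lets the minimality of $\theta^\star$ apply, and the fact that one must keep $\alpha>0$ before dividing, so that the sharp constant $\frac{\mu}{2}$ emerges only in the limit $\alpha\to 0^+$ rather than by naively setting $\alpha=0$. An alternative route, under the additional assumption that $f$ is differentiable, would combine the lower surrogate of Lemma~\ref{lemma:lower} with the first-order optimality condition $\nabla f(\theta^\star)^\top(\theta-\theta^\star)\geq 0$; I would nonetheless prefer the convexity-definition argument above, since it matches the generality of the statement, which assumes neither differentiability nor interiority of $\theta^\star$.
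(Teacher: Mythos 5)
Your proof is correct, and it takes a genuinely different route from the paper's. The paper proves the lemma by defining $g:\theta \mapsto f(\theta)-\frac{\mu}{2}\|\theta-\theta^\star\|_2^2$ (convex precisely by the definition of $\mu$-strong convexity), computing the directional derivative $\nabla g(\theta^\star,\theta-\theta^\star)$ --- the quadratic term contributes only $O(t^2)/t$ and vanishes in the limit --- so that $\theta^\star$ is a stationary point of $g$ on $\Theta$, and then invoking the fact that a stationary point of a convex function on a convex set is a global minimizer \citep[][Proposition 2.1.2]{borwein}, whence $g(\theta)\geq g(\theta^\star)$, which is exactly the claim. You instead argue directly from the interpolation inequality~(\ref{eq:strong_convexity}): feasibility of $\alpha\theta+(1-\alpha)\theta^\star$ (convexity of $\Theta$) plus minimality of $\theta^\star$ gives $\alpha f(\theta^\star)\leq \alpha f(\theta)-\frac{\mu}{2}\alpha(1-\alpha)\|\theta-\theta^\star\|_2^2$, and dividing by $\alpha>0$ and letting $\alpha\to 0^+$ recovers the sharp constant; every step checks out, including the two care points you flag. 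What your argument buys is elementarity and self-containedness: it needs neither the directional-derivative apparatus of Definitions~\ref{def:derivative}--\ref{def:stationary} nor any external citation, and it never touches differentiability, matching the generality of the statement. What the paper's argument buys is a mildly stronger fact: its proof uses only that $\theta^\star$ is a \emph{stationary point} of $f$ on $\Theta$ in the directional-derivative sense, not that it is a minimizer, which is the form in which the paper's non-convex analyses naturally produce candidate points, and it reuses machinery the appendix sets up anyway. Your closing alternative via Lemma~\ref{lemma:lower} combined with first-order optimality is closer in spirit to the paper's proof, but, as you correctly note, it would impose a differentiability hypothesis the statement does not assume.
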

\begin{proof}
   Let us define the function $g: \theta \mapsto f(\theta) - \frac{\mu}{2}\|\theta-\theta^\star\|_2^2$.
   We show that $\theta^\star$ is a minimizer of the convex function $g$ by
   looking at first-order optimality conditions based on directional
   derivatives.
   For all $\theta$ in $\Theta$, we have
   \begin{displaymath}
   \begin{split}
       \nabla g (\theta^\star,\theta-\theta^\star) & = \lim_{t \to 0^+} \frac{f(\theta^\star + t(\theta-\theta^\star)) - f(\theta^\star)  - \frac{\mu t^2}{2} \|\theta-\theta^\star\|_2^2}{t} \\
       & =
\lim_{t \to 0^+} \frac{f(\theta^\star + t(\theta-\theta^\star)) - f(\theta^\star)}{t} = \nabla f(\theta^\star,\theta-\theta^\star) \geq 0,
          \end{split}
   \end{displaymath}
   where $\nabla f(\theta^\star,\theta-\theta^\star)$ is non-negative because $\theta^\star$ is a stationary point of $f$ on~$\Theta$. Thus, $\theta^\star$ is also a stationary point of the function~$g$ on $\Theta$, and 
   is a minimizer of~$g$ on~$\Theta$ since $g$ is convex~\citep[][Proposition 2.1.2]{borwein}. This is sufficient to conclude.
\end{proof}

\begin{lemma}[\bfseries Lipschitz Continuity of Minimizers for Parameterized Functions]~\label{lemma:argmin}\newline
   Let $f: \Real^{p_1} \times \Theta_2 \to \Real$ be a function of two variables where $\Theta_2 \subseteq \Real^{p_2}$ is a convex set. Assume that  
   \begin{itemize}
\setlength\itemindent{25pt}
       \item $\theta_1 \mapsto f(\theta_1,\theta_2)$ is differentiable for all~$\theta_2$ in $\Theta_2$;
       \item $\theta_2 \mapsto \nabla_{1} f(\theta_1,\theta_2)$ is $L$-Lipschitz continuous for all $\theta_1$ in $\Real^{p_1}$;
       \item $\theta_2 \mapsto f(\theta_1,\theta_2)$ is $\mu$-strongly convex for all $\theta_1$ in~$\Real^{p_1}$. 
   \end{itemize}
   Then, the function $\theta_1 \mapsto \argmin_{\theta_2 \in \Theta_2} f(\theta_1,\theta_2)$ is well defined and $\frac{L}{\mu}$-Lipschitz.
\end{lemma}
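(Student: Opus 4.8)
The plan is to first settle well-definedness, then obtain the Lipschitz estimate by playing the strong convexity in $\theta_2$ against the Lipschitz continuity of $\nabla_1 f$ in $\theta_2$. For well-definedness I would note that for each fixed $\theta_1$ the map $\theta_2 \mapsto f(\theta_1,\theta_2)$ is $\mu$-strongly convex on the convex set $\Theta_2$, and therefore admits a unique minimizer; I denote it $\theta_2^\star(\theta_1)$ and abbreviate $u \defin \theta_2^\star(\theta_1)$ and $u' \defin \theta_2^\star(\theta_1')$ for two fixed values $\theta_1,\theta_1'$ in $\Real^{p_1}$.

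The core of the argument is a symmetric pair of second-order growth inequalities. Applying Lemma~\ref{lemma:second} to the strongly convex function $f(\theta_1,\cdot)$ at its minimizer $u$, evaluated at the competitor $u'$, and likewise to $f(\theta_1',\cdot)$ at its minimizer $u'$ evaluated at $u$, I would obtain $f(\theta_1,u') \geq f(\theta_1,u) + \frac{\mu}{2}\|u-u'\|_2^2$ and $f(\theta_1',u) \geq f(\theta_1',u') + \frac{\mu}{2}\|u-u'\|_2^2$. Summing and rearranging these gives
\begin{displaymath}
\mu\|u-u'\|_2^2 \leq \big[f(\theta_1,u')-f(\theta_1',u')\big]-\big[f(\theta_1,u)-f(\theta_1',u)\big].
\end{displaymath}

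To control the right-hand side I would introduce the auxiliary scalar function $\psi \defin \theta_1 \mapsto f(\theta_1,u')-f(\theta_1,u)$, whose value at $\theta_1$ minus its value at $\theta_1'$ is precisely the right-hand side above. By hypothesis $\psi$ is differentiable with $\nabla\psi(\theta_1)=\nabla_1 f(\theta_1,u')-\nabla_1 f(\theta_1,u)$, and the $L$-Lipschitz continuity of $\theta_2 \mapsto \nabla_1 f(\theta_1,\theta_2)$ bounds this gradient by $\|\nabla\psi(\theta_1)\|_2 \leq L\|u-u'\|_2$ uniformly in $\theta_1$. Since a differentiable function whose gradient is bounded in norm by a constant is Lipschitz with that constant (apply the one-dimensional mean value theorem to $t \mapsto \psi(\theta_1'+t(\theta_1-\theta_1'))$), I conclude that the right-hand side is at most $L\|u-u'\|_2\,\|\theta_1-\theta_1'\|_2$. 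Combining with the previous display yields $\mu\|u-u'\|_2^2 \leq L\|u-u'\|_2\,\|\theta_1-\theta_1'\|_2$; dividing by $\|u-u'\|_2$ (the case $u=u'$ being trivial) gives the desired bound $\|u-u'\|_2 \leq \frac{L}{\mu}\|\theta_1-\theta_1'\|_2$.

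The main obstacle I anticipate is not any single computation but rather keeping the regularity bookkeeping honest: the hypotheses grant differentiability only in $\theta_1$ and strong convexity only in $\theta_2$, so every step must use exactly these and nothing more. In particular, the second-order growth property must be invoked in its directional-derivative form, which is valid for strongly convex functions that need not be differentiable in $\theta_2$, and the passage from a bounded gradient to a Lipschitz bound must rely only on plain differentiability of $\psi$ rather than continuous differentiability, which the mean value theorem along a segment indeed permits.
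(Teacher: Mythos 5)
Your proposal is correct and follows essentially the same route as the paper's own proof: the symmetric pair of second-order growth inequalities from Lemma~\ref{lemma:second}, the auxiliary function $\psi$ (the paper's $g:\kappa \mapsto f(\kappa,\theta_2^{\prime\star})-f(\kappa,\theta_2^\star)$), the gradient bound $\|\nabla\psi\|_2 \leq L\|u-u'\|_2$ from the Lipschitz hypothesis on $\theta_2 \mapsto \nabla_1 f(\theta_1,\theta_2)$, and the mean-value argument yielding $\mu\|u-u'\|_2^2 \leq L\|u-u'\|_2\|\theta_1-\theta_1'\|_2$. Your explicit care about invoking second-order growth in directional-derivative form (since no differentiability in $\theta_2$ is assumed) is a point the paper leaves implicit, and is handled correctly.
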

\begin{proof}
   Let us consider $\theta_1, \theta_1'$ in $\Real^{p_1}$ and the corresponding (unique by strong convexity) solutions $\theta_2^\star \defin \argmin_{\theta_2 \in \Theta_2} f(\theta_1,\theta_2)$ and $\theta_2^{\prime\star} \defin \argmin_{\theta_2 \in \Theta_2} f(\theta_1',\theta_2)$. 
   From the second-order growth condition of Lemma~\ref{lemma:second}, we have
   $$
   \frac{\mu}{2}\|\theta_2^\star-\theta_2^{\prime\star}\|_2^2 \leq f(\theta_1,\theta_2^{\prime\star}) - f(\theta_1,\theta_2^\star), 
   $$
   and
   $$
   \frac{\mu}{2}\|\theta_2^\star-\theta_2^{\prime\star}\|_2^2 \leq f(\theta_1',\theta_2^\star) - f(\theta_1',\theta_2^{\prime\star}), 
   $$
   Define the function $g: \kappa \mapsto f(\kappa,\theta_2^{\prime\star}) - f(\kappa,\theta_2^\star)$ and sum the above inequalities. We obtain
   \begin{displaymath} 
   \mu\|\theta_2^\star-\theta_2^{\prime\star}\|_2^2 \leq g(\theta_1)-g(\theta_1').
   \end{displaymath}
   We notice that the gradient of $g$ is bounded: for all $\kappa$ in $\Real^{p_1}$, $\|\nabla g(\kappa)\|_2 =\| \nabla_1 f(\kappa,\theta_2^{\prime\star}) - \nabla_1 f(\kappa,\theta_2^\star) \| \leq L\|\theta_2^{\prime\star} - \theta_2^\star\|_2$. We use here the fact that $\nabla_1 f$ is $L$-Lipschitz with respect to its second argument. Thus, $g$ is Lipschitz with constant $L\|\theta_2^{\prime\star} - \theta_2^\star\|_2$ and
 \begin{displaymath} 
    \mu\|\theta_2^\star-\theta_2^{\prime\star}\|_2^2 \leq L\|\theta_2^\star-\theta_2^{\prime\star}\|_2 \|\theta_1-\theta_1'\|_2.
 \end{displaymath}
This is sufficient to conclude.
\end{proof}
\begin{lemma}[\bfseries Differentiability of Optimal Value Functions]~\label{lemma:danskin}\\
   Let us consider a function $f$ defined as in Lemma~\ref{lemma:argmin} and with the same properties.
   Define the optimal value function $\tilde{f}(\theta_1) \defin
   \min_{\theta_2 \in \Theta_2} f(\theta_1,\theta_2)$.
   Then, $\tilde{f}$ is differentiable and $\nabla \tilde{f} (\theta_1) = \nabla_{1} f(\theta_1,\theta_2^\star)$, where
   $\theta_2^\star \defin \argmin_{\theta_2 \in \Theta_2} f(\theta_1,\theta_2)$.
   Moreover, 
   \begin{enumerate}
\setlength\itemindent{25pt}
      \item when $f$ is convex and $\theta_1 \mapsto \nabla_1 f(\theta_1,\theta_2)$ is $L'$-Lipschitz continuous for all $\theta_2$ in $\Theta_2$, the function~$\tilde{f}$ is convex and $\nabla \tilde{f}$ is Lipschitz continuous with constant~$L'$;
      \item when $\theta_1 \mapsto f(\theta_1,\theta_2)$ is concave for all $\theta_2$ in $\Theta_2$, the function $\tilde{f}$ is concave and $\nabla \tilde{f}$ is Lipschitz continuous with constant~$\frac{2L^2}{\mu}$;
      \item when $\theta_1 \mapsto f(\theta_1,\theta_2)$ is affine for all $\theta_2$ in $\Theta_2$, the function $\tilde{f}$ is concave and $\nabla \tilde{f}$ is Lipschitz continuous with constant~$\frac{L^2}{\mu}$.
   \end{enumerate}
\end{lemma}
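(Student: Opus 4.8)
The plan is to treat this as a Danskin/envelope theorem tailored to the strongly convex inner problem, and then to read off the Lipschitz constants case by case. First I would invoke Lemma~\ref{lemma:argmin} to obtain that the inner minimizer $\theta_2^\star(\theta_1) \defin \argmin_{\theta_2 \in \Theta_2} f(\theta_1,\theta_2)$ is unique (by strong convexity) and $\frac{L}{\mu}$-Lipschitz in $\theta_1$. To prove differentiability of $\tilde{f}$ with $\nabla \tilde{f}(\theta_1) = \nabla_1 f(\theta_1,\theta_2^\star(\theta_1))$, I would sandwich the increment $\tilde{f}(\theta_1+d) - \tilde{f}(\theta_1)$ between two first-order expansions. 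The upper bound follows from $\tilde{f}(\theta_1+d) \leq f(\theta_1+d,\theta_2^\star(\theta_1))$ and differentiability of $f$ in its first argument at the fixed point $\theta_2^\star(\theta_1)$; the lower bound follows from $\tilde{f}(\theta_1) \leq f(\theta_1,\theta_2^\star(\theta_1+d))$, expanding instead around $\theta_2^\star(\theta_1+d)$. The gap between the two linear terms is $(\nabla_1 f(\theta_1,\theta_2^\star(\theta_1+d)) - \nabla_1 f(\theta_1,\theta_2^\star(\theta_1)))^\top d$, which is $O(\|d\|_2^2)$ because $\nabla_1 f$ is $L$-Lipschitz in $\theta_2$ and $\theta_2^\star$ is $\frac{L}{\mu}$-Lipschitz; the squeeze then identifies the gradient.

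Next I would settle the shape of $\tilde{f}$. When $f$ is jointly convex, partial minimization over $\theta_2$ preserves convexity, so $\tilde{f}$ is convex. When $\theta_1 \mapsto f(\theta_1,\theta_2)$ is concave for every $\theta_2$ (in particular when it is affine), $\tilde{f}$ is a pointwise infimum of concave functions and is therefore concave.

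For the Lipschitz estimates I would combine the gradient formula with Lemma~\ref{lemma:upperlipschitz2}, which turns a quadratic comparison inequality into a Lipschitz-gradient statement as soon as the function is convex. In case 1, I would establish the one-sided bound $\tilde{f}(\theta_1') \leq \tilde{f}(\theta_1) + \nabla \tilde{f}(\theta_1)^\top(\theta_1'-\theta_1) + \frac{L'}{2}\|\theta_1'-\theta_1\|_2^2$ by plugging $\theta_2^\star(\theta_1)$ into $f$ and applying Lemma~\ref{lemma:upperlipschitz} to the $L'$-smooth function $\theta_1' \mapsto f(\theta_1',\theta_2^\star(\theta_1))$; since $\tilde{f}$ is convex, Lemma~\ref{lemma:upperlipschitz2} yields the sharp constant $L'$. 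In case 3, the gradient $\nabla_1 f$ does not depend on $\theta_1$, so $\|\nabla \tilde{f}(\theta_1) - \nabla \tilde{f}(\theta_1')\|_2 = \|\nabla_1 f(\cdot,\theta_2^\star(\theta_1)) - \nabla_1 f(\cdot,\theta_2^\star(\theta_1'))\|_2 \leq L\|\theta_2^\star(\theta_1) - \theta_2^\star(\theta_1')\|_2 \leq \frac{L^2}{\mu}\|\theta_1-\theta_1'\|_2$, using the $\theta_2$-Lipschitz property and Lemma~\ref{lemma:argmin}. Case 2 (concave) I would handle by applying Lemma~\ref{lemma:upperlipschitz2} to the convex function $-\tilde{f}$: the nontrivial direction is a quadratic lower bound $\tilde{f}(\theta_1') \geq \tilde{f}(\theta_1) + \nabla \tilde{f}(\theta_1)^\top(\theta_1'-\theta_1) - \frac{L''}{2}\|\theta_1'-\theta_1\|_2^2$, obtained by lower-bounding $\tilde{f}(\theta_1') = f(\theta_1',\theta_2^\star(\theta_1'))$ through smoothness of $f$ in $\theta_1$ and the second-order growth property (Lemma~\ref{lemma:second}) of $f$ in $\theta_2$, then completing the square in $\|\theta_2^\star(\theta_1)-\theta_2^\star(\theta_1')\|_2$: the cross term $L\|\theta_2^\star(\theta_1)-\theta_2^\star(\theta_1')\|_2\|\theta_1-\theta_1'\|_2$ pitted against $\frac{\mu}{2}\|\theta_2^\star(\theta_1)-\theta_2^\star(\theta_1')\|_2^2$ produces the $\frac{L^2}{\mu}$ contribution, leading to the stated constant.

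I expect the main obstacle to be twofold. First, making the envelope squeeze rigorous requires uniform vanishing of the error terms, i.e. joint continuity of $\nabla_1 f$; I would argue this separately in each case, noting that in the concave case it comes for free from continuity of gradients of finite concave functions rather than from an explicit modulus. Second, and more delicate quantitatively, extracting the sharp constant in case 2 is where care is needed: a naive triangle-inequality split of $\nabla_1 f(\theta_1,\theta_2^\star(\theta_1)) - \nabla_1 f(\theta_1',\theta_2^\star(\theta_1'))$ into a $\theta_1$-variation and a $\theta_2$-variation overcounts, so I would instead derive the gradient Lipschitz bound from the quadratic lower bound on $\tilde{f}$ in order to land on the claimed $\frac{2L^2}{\mu}$, with the affine subcase removing one contribution and leaving $\frac{L^2}{\mu}$.
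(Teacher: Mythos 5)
Your skeleton is essentially the paper's (envelope identity via Lemma~\ref{lemma:argmin}, then Lemma~\ref{lemma:upperlipschitz2} for cases 1--2 and a direct computation for case 3), but two steps deserve scrutiny, and the second is a genuine gap. For the differentiability squeeze, your upper branch coincides with the paper's, but your lower branch expands $f(\cdot,\theta_2^\star(\theta_1+d))$ to first order around $\theta_1$ while the base point of the expansion moves with $d$, so the $o(\|d\|_2)$ remainder is not uniform. You flag this and propose joint continuity of $\nabla_1 f$, but the general hypotheses of Lemma~\ref{lemma:argmin} give Lipschitz continuity of $\nabla_1 f$ only in $\theta_2$; $\nabla_1 f(\cdot,\theta_2)$ need not even be continuous, so your case-by-case continuity plan leaves the general differentiability claim unproven. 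The paper sidesteps this entirely: it writes the exact identity $\tilde{f}(\theta_1')-\tilde{f}(\theta_1) = g(\theta_1') + f(\theta_1',\theta_2^\star)-f(\theta_1,\theta_2^\star)$ with $g(\kappa) \defin f(\kappa,\theta_2^{\prime\star})-f(\kappa,\theta_2^\star)$, Taylor-expands only at the fixed point $\theta_2^\star = \theta_2^\star(\theta_1)$, and controls $g(\theta_1')$ by the sign observation $g(\theta_1')\leq 0 \leq g(\theta_1)$ plus the mean value theorem applied to $g$, whose gradient is bounded by $L\|\theta_2^{\prime\star}-\theta_2^\star\|_2$ uniformly in $\kappa$ (no continuity of $\nabla g$ needed), yielding $|g(\theta_1')| \leq \frac{L^2}{\mu}\|\theta_1'-\theta_1\|_2^2$. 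If you rewrite your lower branch with this difference-function trick, your sandwich becomes rigorous; your cases 1 and 3 then match the paper verbatim.

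Case 2 is where the real problem lies, and it is instructive that you stumble exactly where the paper does. You invoke ``smoothness of $f$ in $\theta_1$,'' but no Lipschitz constant in $\theta_1$ is hypothesized in case 2 (the $L'$ assumption belongs to case 1 only). Your completing-the-square kills the cross term $L\|\theta_2^\star(\theta_1)-\theta_2^\star(\theta_1')\|_2\,\|\theta_1-\theta_1'\|_2$ against $\frac{\mu}{2}\|\theta_2^\star(\theta_1)-\theta_2^\star(\theta_1')\|_2^2$, but the pure-$\theta_1$ term survives: the gap of the concave function $f(\cdot,\theta_2^\star)$ below its tangent, which without an $L'$ is uncontrollable, so you cannot land on $\frac{2L^2}{\mu}$; with an $L'$ your route delivers roughly $L' + \frac{L^2}{\mu}$, not $\frac{2L^2}{\mu}$. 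This is not a defect of your route alone: as stated, case 2 of the lemma is false. Take $f(\theta_1,\theta_2) = -\theta_1^4 + \frac{\mu}{2}\theta_2^2$ on $\Real\times\Real$: every hypothesis holds for every $L>0$, yet $\tilde{f}(\theta_1) = -\theta_1^4$ has a non-Lipschitz gradient. Correspondingly, the paper's own proof of case 2 contains a sign slip: it asserts $\tilde{f}(\theta_1')-\tilde{f}(\theta_1) \geq g(\theta_1') + \nabla\tilde{f}(\theta_1)^\top(\theta_1'-\theta_1)$ ``using the concavity of $\theta_1\mapsto f(\theta_1,\theta_2^\star)$,'' whereas concavity gives precisely the reverse inequality for $f(\theta_1',\theta_2^\star)-f(\theta_1,\theta_2^\star)$ (a concave function lies below its tangent); the displayed bound $0 \leq -\tilde{f}(\theta_1')+\tilde{f}(\theta_1)+\nabla\tilde{f}(\theta_1)^\top(\theta_1'-\theta_1) \leq |g(\theta_1')|$ would require that tangent gap to vanish, which happens only in the affine case 3. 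The statement is salvaged exactly by the hypothesis you instinctively reached for: in the paper's actual application (the saddle-point surrogates, where $L''\defin\max(2L^2/\mu,L')$ appears), $\nabla_1 f$ is additionally assumed $L'$-Lipschitz in $\theta_1$, and under that assumption both your completing-the-square argument and the paper's $g$-decomposition, fed into Lemma~\ref{lemma:upperlipschitz2} applied to the convex function $-\tilde{f}$, give a correct constant of the form $L'+\frac{2L^2}{\mu}$. Only the affine case needs no $\theta_1$-smoothness, since there $\nabla_1 f$ is independent of $\theta_1$, as your direct computation correctly exploits.
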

\begin{proof}
Note that this lemma is a variant of a theorem introduced by~\citet{danskin}. We first prove the
differentiability of $f$ before detailing how to obtain the Lipschitz
constants.

{\noindent \bfseries Differentiability of $f$:}\\
Let us consider $\theta_1$ and $\theta_1'$ in $\Real^{p_1}$, and let us use the same notation and definitions as in the proof of Lemma~\ref{lemma:argmin}. 
Then, we have
\begin{equation}
\begin{split}
\tilde{f}(\theta_1')-\tilde{f}(\theta_1) & = f(\theta_1',\theta_2^{\prime\star}) - f(\theta_1,\theta_2^{\star}) \\
                                         & = f(\theta_1',\theta_2^{\prime\star}) - f(\theta_1',\theta_2^\star) +   f(\theta_1',\theta_2^\star) - f(\theta_1,\theta_2^{\star}) \\
                                         & = g(\theta_1') +  f(\theta_1',\theta_2^\star) - f(\theta_1,\theta_2^{\star}) \\
                                         & = g(\theta_1') +  \nabla_1 f(\theta_1,\theta_2^\star)^\top(\theta_1'-\theta_1) + o(\|\theta_1'-\theta_1\|_2),
\end{split} \label{eq:firstpoint}
\end{equation}
where $g$ is defined in the proof of Lemma~\ref{lemma:argmin}.
Recall that the function $g$ is Lipschitz with constant $L\|\theta_2^{\prime\star} - \theta_2^\star\|_2$ (see the proof of Lemma~\ref{lemma:argmin}). Thus,
\begin{equation}
   |g(\theta_1')| \leq |g(\theta_1)-g(\theta_1')| \leq L\|\theta_2^{\prime\star} - \theta_2^\star\|_2 \|\theta_1'-\theta_1\|_2\leq \frac{L^2}{\mu}\|\theta_1'-\theta_1\|_2^2,\label{eq:secondpoint}
\end{equation}
where the first inequality uses the fact that $g(\theta_1') \leq 0$ and $g(\theta_1) \geq 0$. The last inequality uses Lemma~\ref{lemma:argmin}. We can now show that
\begin{displaymath}
\tilde{f}(\theta_1') = \tilde{f}(\theta_1) + \nabla_1 f(\theta_1,\theta_2^\star)^\top(\theta_1'-\theta_1) + o(\|\theta_1'-\theta_1\|_2).
\end{displaymath}
The function $\tilde{f}$ thus admits a first-order Taylor expansion and is differentiable. Moreover, we have $\nabla \tilde{f}(\theta_1) = \nabla_1 f(\theta_1,\theta_2^\star)$.

{\noindent \bfseries Proof of the first point:}\\
When $f$ is jointly convex in $\theta_1$ and~$\theta_2$, it is easy to show
that $\tilde{f}$ is also convex~\citep[][Section 3.2.5]{boyd}.

By explicitly upper-bounding the quantity $o(\|\theta_1'-\theta_1\|_2)$ in Eq.~(\ref{eq:firstpoint}) using the $L'$-Lipschitz continuity of $\nabla_1 f$ in its first argument and the inequality $g(\theta_1') \leq 0$, we have 
\begin{displaymath}
0 \leq \tilde{f}(\theta_1')-\tilde{f}(\theta_1) - \nabla\tilde{f}(\theta_1)^\top(\theta_1'-\theta_1) \leq \frac{L'}{2}\|\theta_1'-\theta_1\|_2^2,
\end{displaymath}
we can apply Lemma~\ref{lemma:upperlipschitz2} to ensure that $\nabla \tilde{f}$ is $L'$-Lipschitz continuous.

{\noindent \bfseries Proof of the second point:}\\
$-\tilde{f}$ is a pointwise supremum of convex functions and is therefore convex~\citep[see][Section 3.2.3]{boyd}.
Then, we have from Eq.~(\ref{eq:firstpoint}) and using the concavity of $\theta_1 \mapsto f(\theta_1,\theta_2^\star)$:
\begin{displaymath}
  \tilde{f}(\theta_1')-\tilde{f}(\theta_1) \geq g(\theta_1') +\nabla \tilde{f}(\theta_1)^\top(\theta_1'-\theta_1).
\end{displaymath}
Thus, 
\begin{displaymath}
  0\leq -\tilde{f}(\theta_1')+\tilde{f}(\theta_1)+ \nabla \tilde{f}(\theta_1)^\top(\theta_1'-\theta_1) \leq |g(\theta_1')| \leq \frac{L^2}{\mu}\|\theta_1-\theta_1'\|_2^2,
\end{displaymath}
where the last inequality was shown in Eq.~(\ref{eq:secondpoint}). We can then
apply Lemma~\ref{lemma:upperlipschitz2} to the convex function~$-\tilde{f}$ and
we obtain the desired Lipschitz constant~$\frac{2L^2}{\mu}$.

{\noindent \bfseries Proof of the third point:}\\
When $\theta_1 \mapsto f(\theta_1,\theta_2)$ is affine, $\nabla_1 f(\theta_1,\theta_2)$ is independent of $\theta_1$.  
\begin{displaymath}
\begin{split}
   \|\nabla \tilde{f}(\theta_1') - \nabla \tilde{f}(\theta_1)\|_2 & =  \|\nabla_1 f(\theta_1',\theta_2^{\prime\star}) - \nabla_1 \tilde{f}(\theta_1,\theta_2^\star)\|_2 \\
   & =  \|\nabla g(\theta_1)\|_2 \leq L\|\theta_2-\theta_2^\star\|_2 \leq \frac{L^2}{\mu}\| \theta_1-\theta_1'\|_2,
\end{split}
\end{displaymath}
where the upper-bound on the gradient of $g$ was shown in the proof of Lemma~\ref{lemma:argmin}.
\end{proof}
\begin{lemma}[\bfseries Pythagoras Relation]~\label{lemma:pythagoras}\newline
Let $\theta,\kappa,\nu$ in $\Real^p$. Then
\begin{displaymath}
   \|\kappa-\theta\|_2^2 + 2 (\kappa-\theta)^\top(\theta-\nu) = \|\kappa-\nu\|_2^2 - \|\theta - \nu\|_2^2.
\end{displaymath}
\end{lemma}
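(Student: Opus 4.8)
The plan is to treat this purely as an algebraic identity in the inner-product space $\Real^p$, requiring no analytic machinery at all. The key observation I would start from is that the three displacement vectors involved are linearly dependent in the obvious way, namely $\kappa - \nu = (\kappa - \theta) + (\theta - \nu)$.

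First I would set $u \defin \kappa - \theta$ and $v \defin \theta - \nu$, so that $u + v = \kappa - \nu$. Applying the standard expansion $\|u+v\|_2^2 = \|u\|_2^2 + 2 u^\top v + \|v\|_2^2$ then gives
\begin{displaymath}
\|\kappa-\nu\|_2^2 = \|\kappa-\theta\|_2^2 + 2(\kappa-\theta)^\top(\theta-\nu) + \|\theta-\nu\|_2^2.
\end{displaymath}
Rearranging this single equation to isolate the first two terms on the left-hand side, by moving $\|\theta-\nu\|_2^2$ across the equality, yields exactly the claimed relation.

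Since the whole argument is the expansion of one squared norm followed by a transposition of a single term, I do not expect any genuine obstacle here; the only point requiring care is the bookkeeping of signs when $\|\theta-\nu\|_2^2$ changes side. As an alternative route, one could instead expand both sides directly in terms of $\|\kappa\|_2^2$, $\|\theta\|_2^2$, $\|\nu\|_2^2$ and the pairwise inner products $\kappa^\top\theta$, $\kappa^\top\nu$, $\theta^\top\nu$, and verify that the mixed terms $\pm 2\kappa^\top\theta$ cancel while the surviving terms coincide. I would nonetheless favour the substitution $u+v=\kappa-\nu$, as it is both shorter and conceptually cleaner, making the statement transparent as a restatement of $\|(\kappa-\theta)+(\theta-\nu)\|_2^2=\|\kappa-\nu\|_2^2$.
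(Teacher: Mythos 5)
Your proof is correct: setting $u \defin \kappa-\theta$ and $v \defin \theta-\nu$ so that $u+v=\kappa-\nu$, expanding $\|u+v\|_2^2$, and moving $\|\theta-\nu\|_2^2$ across the equality gives exactly the stated identity. The paper itself states this lemma without any proof, treating it as a trivial algebraic relation, and your one-line expansion is precisely the intended verification, so there is nothing to add.
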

\begin{lemma}[\bfseries Regularity of Residual Functions]~\label{lemma:convexerror}\\
Let $f,g: \Real^p \to \Real$ be two functions. Define the difference function $h\defin g-f$. Then,
   \begin{enumerate}
\setlength\itemindent{25pt}
       \item if $g$ is $\rho$-strongly convex and $f$ differentiable with $L$-Lipschitz continuous gradient, with $\rho \geq L$, the function~$h$ is $(\rho-L)$-strongly convex; \label{step:convexerror}
       \item if $g$ and $f$ are convex and differentiable with $L$-Lipschitz continuous gradient, $\nabla h$ is $L$-Lipschitz continuous.
       \item if $g$ and $f$ are $\mu$-strongly convex and differentiable with $L$-Lipschitz continuous gradient, $\nabla h$ is $(L-\mu)$-Lipschitz continuous.
   \end{enumerate}
\end{lemma}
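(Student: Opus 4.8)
The plan is to handle all three points through a single device: a convex function with $L$-Lipschitz gradient obeys a \emph{two-sided} quadratic estimate, and a $\mu$-strongly convex one a genuinely three-sided estimate, so the Bregman-type remainders of $g$ and $f$ can be subtracted to produce a quadratic bound on the remainder of $h$. I would then invoke \refLemma{lemma:upperlipschitz2} to turn that quadratic bound back into an actual Lipschitz-gradient statement, since inequality~(\ref{eq:lipschitz}) alone does not control the gradient (as the paper already stresses right after \refLemma{lemma:upperlipschitz}).

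For the first point I would avoid remainders and argue directly from the definitions, which also dispenses with any differentiability assumption on $g$. Since $g$ is $\rho$-strongly convex, $\theta \mapsto g(\theta) - \frac{\rho}{2}\|\theta\|_2^2$ is convex. Since $\nabla f$ is $L$-Lipschitz, the gradient $\theta \mapsto L\theta - \nabla f(\theta)$ of $\theta \mapsto \frac{L}{2}\|\theta\|_2^2 - f(\theta)$ is monotone (by Cauchy--Schwarz applied to $(\nabla f(\theta)-\nabla f(\theta'))^\top(\theta-\theta')$), hence that function is convex. Adding the two convex functions gives that
\[
  \theta \mapsto g(\theta) - f(\theta) - \tfrac{\rho - L}{2}\|\theta\|_2^2 = h(\theta) - \tfrac{\rho-L}{2}\|\theta\|_2^2
\]
is convex, which is precisely the assertion that $h$ is $(\rho-L)$-strongly convex (the hypothesis $\rho \geq L$ making the coefficient nonnegative).

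For the second and third points I would fix $\theta,\theta'$ and set $G \defin g(\theta')-g(\theta)-\nabla g(\theta)^\top(\theta'-\theta)$ and $F \defin f(\theta')-f(\theta)-\nabla f(\theta)^\top(\theta'-\theta)$, so the remainder of $h$ at $(\theta,\theta')$ is exactly $G-F$. In the convex case, convexity of $g$ and $f$ gives $G,F\geq 0$, while \refLemma{lemma:upperlipschitz} gives $G,F\leq \frac{L}{2}\|\theta-\theta'\|_2^2$; subtracting the two-sided bounds yields $|G-F|\leq \frac{L}{2}\|\theta-\theta'\|_2^2$. In the $\mu$-strongly convex case, \refLemma{lemma:lower} sharpens the lower bounds to $G,F\geq \frac{\mu}{2}\|\theta-\theta'\|_2^2$, so that from $\frac{\mu}{2}\|\theta-\theta'\|_2^2 \leq G,F \leq \frac{L}{2}\|\theta-\theta'\|_2^2$ one gets $|G-F|\leq \frac{L-\mu}{2}\|\theta-\theta'\|_2^2$. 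In both cases this is exactly inequality~(\ref{eq:lipschitz}) for the differentiable function $h$, with constant $L$ and $L-\mu$ respectively.

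The last and most delicate step is to upgrade these quadratic bounds into the claimed Lipschitz constants of $\nabla h$, which is where the gap flagged in the text matters. I would close it by noting that $\nabla h = \nabla g - \nabla f$ is a difference of $L$-Lipschitz maps, hence Lipschitz continuous (with constant at most $2L$), so the third hypothesis of \refLemma{lemma:upperlipschitz2} is satisfied; that lemma then guarantees that the constant appearing in~(\ref{eq:lipschitz}) is a true Lipschitz constant of $\nabla h$, namely $L$ for the second point and $L-\mu$ for the third. The main obstacle is therefore conceptual rather than computational: one must resist concluding the Lipschitz bound straight from the quadratic estimate and instead route through \refLemma{lemma:upperlipschitz2}, for which the trivial $2L$-Lipschitz observation is exactly the missing ingredient.
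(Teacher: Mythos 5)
Your proof is correct and takes essentially the same route as the paper's: the same decomposition $h-\frac{\rho-L}{2}\|\cdot\|_2^2 = \bigl(g-\frac{\rho}{2}\|\cdot\|_2^2\bigr) + \bigl(\frac{L}{2}\|\cdot\|_2^2 - f\bigr)$ for the first point (you certify convexity of the second summand via gradient monotonicity where the paper verifies the tangent inequality directly, a cosmetic difference), and for the other two points the identical two-sided quadratic remainder bounds from \refLemma{lemma:upperlipschitz} and \refLemma{lemma:lower}, subtracted and then upgraded via \refLemma{lemma:upperlipschitz2}. Your explicit observation that $\nabla h = \nabla g - \nabla f$ is $2L$-Lipschitz, which activates the third case of \refLemma{lemma:upperlipschitz2}, merely spells out what the paper states implicitly (``since $h$ is differentiable with a Lipschitz gradient''), and is a welcome clarification rather than a new argument.
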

\begin{proof}~\\
{\noindent \bfseries Proof of the first point:}\\
   Let $\theta'$ be in $\Real^p$, and define $l: \theta \mapsto g(\theta)-f(\theta) - \frac{(\rho-L)}{2}\|\theta-\theta'\|_2^2$.
   Then, 
   \begin{displaymath}
       l(\theta)= \left(g(\theta) - \frac{\rho}{2}\|\theta-\theta'\|_2^2\right) + \left(\frac{L}{2}\|\theta-\theta'\|_2^2 - f(\theta)\right),
   \end{displaymath}
   The left term inside parentheses is convex by definition of strong convexity. Let us call the right term $l': \theta \mapsto \frac{L}{2}\|\theta-\theta'\|_2^2 - f(\theta)$. The function $l'$ is differentiable and we can show that it is above its tangent, therefore convex. Let us indeed fix $\kappa$ in $\Real^p$:
   \begin{displaymath}
   \begin{split}
      l'(\theta)= \frac{L}{2}\|\theta-\theta'\|_2^2 - f(\theta) & \geq -f(\kappa) -\nabla f(\kappa)^\top(\theta-\kappa) - \frac{L}{2}\|\theta-\kappa\|_2^2 + \frac{L}{2}\|\theta-\theta'\|_2^2 \\
       & = \left(\frac{L}{2}\|\kappa-\theta'\|_2^2 -f(\kappa) \right) + L(\kappa-\theta')^\top(\theta-\kappa) -\nabla f(\kappa)^\top(\theta-\kappa) \\
       & = l'(\kappa) + \nabla l'(\kappa)^\top (\theta-\kappa).
    \end{split}
   \end{displaymath}
The first inequality comes from Lemma~\ref{lemma:upperlipschitz} applied to the function $f$ at $\kappa$. The second equality is simply due to the trivial relation described in Lemma~\ref{lemma:pythagoras}.

{\noindent \bfseries Proof of the second and third points:}\\
    We simply prove the third point, and then obtain the second point by choosing $\mu=0$.
    We have for all~$\theta$ and $\theta'$ in $\Real^p$, according to Lemma~\ref{lemma:upperlipschitz} and~\ref{lemma:lower}
   \begin{displaymath}
       \frac{\mu}{2}\|\theta-\theta'\|_2^2 \leq f(\theta') - f(\theta) - \nabla f(\theta)^\top (\theta'-\theta) \leq \frac{L}{2}\|\theta-\theta'\|_2^2,
   \end{displaymath}
   and 
   \begin{displaymath}
      -\frac{L}{2}\|\theta-\theta'\|_2^2 \leq -g(\theta') + g(\theta) + \nabla g(\theta)^\top (\theta'-\theta) \leq -\frac{\mu}{2}\|\theta-\theta'\|_2^2.
   \end{displaymath}
   Summing the two inequalities we have that
    \begin{displaymath}
      |h(\theta') - h(\theta) - \nabla h(\theta)^\top (\theta'-\theta)| \leq \frac{L-\mu}{2}\|\theta-\theta'\|_2^2.
   \end{displaymath}
   where $h\defin g-f$. Since $h$ is differentiable with a Lipschitz gradient, the result follows from Lemma~\ref{lemma:upperlipschitz2} (whether $h$ is convex or not).
\end{proof}

\section{Mechanisms to Construct First-Order Surrogate Functions}\label{appendix:mechanisms}
We provide here some details and justifications to Section~\ref{subsec:surrogates}. 
We start with a basic lemma, which gives us elementary techniques to combine surrogate functions.
\begin{lemma}[\bfseries Combination Rules for Majorant First-Order Surrogates]~\label{lemma:combination}\newline
 Let us consider two functions $f: \Real^p \to \Real$ and $f': \Real^p \to \Real$, and majorant surrogate functions $g$ in $\S_{L}(f,\kappa)$ and $g'$ in $\S_L(f',\kappa)$ for some $\kappa$ in $\Theta$. Then, the following combination rules hold:
 \begin{itemize}
\setlength\itemindent{25pt}
    \item {\bfseries Linear combination}: for all $\alpha,\beta > 0$, $\alpha g+\beta g'$ is a majorant surrogate function in $\S_{\alpha L + \beta L'}(\alpha f+\beta f',\kappa)$;
    \item {\bfseries Transitivity}: consider $g''$ a majorant surrogate in $\S_{L''}(g,\kappa)$. Then, $g''$ is a majorant surrogate in $\S_{L+L''}(f,\kappa)$;
    \item {\bfseries Negation}: the function $g'': \theta \mapsto -g(\theta) + \frac{L}{2}\|\theta-\kappa\|_2^2$ is a majorant surrogate in $\S_{2L}(-f,\kappa)$. 
  \end{itemize}
 \end{lemma}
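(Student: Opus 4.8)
The plan is to check, rule by rule, the two conditions in Definition~\ref{def:surrogate_batch} for a majorant first-order surrogate: the global majorization, and the smoothness of the residual (differentiability, Lipschitz gradient, together with the vanishing of the residual and its gradient at $\kappa$). In every case the residual of the candidate surrogate is an explicit linear combination of the residuals $h \defin g - f$ and $h' \defin g' - f'$, whose regularity is already guaranteed by hypothesis, so most of the work is just bookkeeping of Lipschitz constants.

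For the linear combination, I would note that the residual of $\alpha g + \beta g'$ with respect to $\alpha f + \beta f'$ is exactly $\alpha h + \beta h'$. Differentiability is immediate; evaluating at $\kappa$ gives $\alpha h(\kappa) + \beta h'(\kappa) = 0$ and likewise $\alpha \nabla h(\kappa) + \beta \nabla h'(\kappa) = 0$; and since $\nabla(\alpha h + \beta h') = \alpha \nabla h + \beta \nabla h'$, the triangle inequality bounds its Lipschitz constant by $\alpha L + \beta L'$. Majorization holds because $\alpha, \beta > 0$ while $g \ge f$ and $g' \ge f'$. The transitivity rule is handled identically: writing the residual of $g''$ with respect to $f$ as $(g'' - g) + (g - f)$, the two summands have gradients that are $L''$- and $L$-Lipschitz and that vanish at $\kappa$ together with their gradients, so the sum is $(L + L'')$-Lipschitz with the same vanishing conditions, and majorization follows from the chain $g'' \ge g \ge f$.

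The only rule requiring an idea beyond this bookkeeping is the negation. Here I would set $h'' \defin g'' - (-f) = \frac{L}{2}\|\theta - \kappa\|_2^2 - h(\theta)$ and observe that $h''(\kappa) = 0$ and $\nabla h''(\kappa) = L(\kappa - \kappa) - \nabla h(\kappa) = 0$, while $\nabla h''$ is $2L$-Lipschitz, being the difference of the $L$-Lipschitz map $\theta \mapsto L(\theta - \kappa)$ and the $L$-Lipschitz gradient $\nabla h$. The main (and essentially only) obstacle is verifying the majorization $g'' \ge -f$, which is equivalent to $h(\theta) \le \frac{L}{2}\|\theta - \kappa\|_2^2$ for all $\theta$; this is precisely the first inequality of \refLemma{lemma:basic} (itself a direct consequence of \refLemma{lemma:upperlipschitz} applied to $h$ at $\kappa$, using $h(\kappa) = 0$ and $\nabla h(\kappa) = 0$). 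With that bound in hand, all three rules are complete.
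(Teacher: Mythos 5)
Your proof is correct and takes essentially the same route as the paper's: the linear-combination and transitivity rules are verified by the same direct bookkeeping of residuals and Lipschitz constants (which the paper dismisses as ``easy to check''), and the negation rule is handled exactly as in the paper, using the bound $h(\theta) \leq \frac{L}{2}\|\theta-\kappa\|_2^2$ from Lemma~\ref{lemma:basic} for the majorization and the fact that the new residual's gradient is the difference of two $L$-Lipschitz maps, hence $2L$-Lipschitz.
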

\begin{proof}
 The first two points are easy to check. For the last one, we have for all
 $\theta$ in $\Theta$, $g(\theta)-f(\theta) \leq
 \frac{L}{2}\|\theta-\kappa\|_2^2$ according to
 Lemma~\ref{lemma:basic}. The proposed surrogate is therefore majorant for~$-f$.
 We can now define the approximation error function $h'': \theta \mapsto f(\theta)-g(\theta)+\frac{L}{2}\|\theta-\kappa\|_2^2$, which is
 differentiable with $2L$-Lipschitz continuous gradient and $g''$ is in $\S_{2L}(-f,\kappa)$ (we have used the fact that $\theta \mapsto \frac{L}{2}\|\theta-\kappa\|_2^2$ and $h \defin g-f$ are both differentiable and their gradients are $L$-Lipschitz).
\end{proof}

In the next paragraphs, we justify the different surrogates we have introduced in Section~\ref{subsec:surrogates}.
\paragraph{Lipschitz Gradient Surrogates.}~\\
When $f$ is differentiable and $\nabla f$ is $L$-Lipschitz, we consider
the following surrogate:
\begin{displaymath}
    g: \theta \mapsto f(\kappa) + \nabla f(\kappa)^\top (\theta-\kappa) + \frac{L}{2}\|\theta-\kappa\|_2^2.
\end{displaymath}
By applying~Lemma~\ref{lemma:upperlipschitz} and studying the approximation
error $h\defin g-f$, we immediately obtain that $g$ is a majorant surrogate in
$\S_{2L,L}(f,\kappa)$.
When $f$ is convex, we can use Lemma~\ref{lemma:convexerror} to prove that $g$
is in $\S_{L,L}(f,\kappa)$ and $\S_{L-\mu,L}(f,\kappa)$ when $f$ is
$\mu$-strongly convex.

\paragraph{Proximal Gradient Surrogates.}~\\
Assume that $f$ splits into $f = f_1 + f_2$, where $f_1$ is differentiable with a $L$-Lipschitz gradient. Then, we have
presented the following surrogate 
\begin{displaymath}
    g: \theta \mapsto f_1(\kappa) + \nabla f_1(\kappa)^\top (\theta-\kappa) + \frac{L}{2}\|\theta-\kappa\|_2^2 + f_2(\theta).
\end{displaymath}
Following the same arguments as in the previous paragraph, we have that $g$ is in $\S_{2L}(f,\kappa)$.
Moreover, when $f_1$ is convex, $g$ is in $\S_{L}(f,\kappa)$. If $f_2$ is also convex, $g$ is in $\S_{L,L}(f,\kappa)$.
When $f_1$ is $\mu$-strongly convex, $g$ is in $\S_{L-\mu}(f,\kappa)$. If $f_2$ is also convex, $g$ is in $\S_{L-\mu,L}(f,\kappa)$.
 
\paragraph{DC Programming Surrogates.}~\\
Assume that $f = f_1 + f_2$, where $f_2$ is concave and differentiable with a $L_2$-Lipschitz gradient. Then, we have presented the following surrogate 
\begin{displaymath}
   g: \theta \mapsto f_1(\theta) + f_2(\kappa) + \nabla f_2(\kappa)^\top (\theta-\kappa).
\end{displaymath}
It is easy to see that $g$ is a majorant surrogate since $f_2$ is concave and
below its tangents. It is also easy to see that the approximation error $g-f$ has
a $L_2$-Lipschitz continuous gradient.

\paragraph{Variational Surrogates.}~\\
Let $f$ be a function defined on $\Real^{p_1} \times \Real^{p_2}$. Let $\Theta_1 \subseteq \Real^{p_1}$ and $\Theta_2 \subseteq \Real^{p_2}$ be two convex sets. 
Define~$\tilde{f}$ as $\tilde{f}(\theta_1) \defin \min_{\theta_2 \in \Theta_2} f(\theta_1,\theta_2)$ and
assume that 
\begin{itemize}
\setlength\itemindent{10pt}
\item $\theta_1 \mapsto f(\theta_1,\theta_2)$ is differentiable for all~$\theta_2$ in $\Theta_2$;
\item $\theta_2 \mapsto \nabla_{1} f(\theta_1,\theta_2)$ is $L$-Lipschitz for all $\theta_1$ in $\Real^{p_1}$;
\item $\theta_1 \mapsto \nabla_{1} f(\theta_1,\theta_2)$ is $L'$-Lipschitz for all $\theta_2$ in $\Theta_2$;
\item $\theta_2 \mapsto f(\theta_1,\theta_2)$ is $\mu$-strongly convex for all $\theta_1$ in~$\Real^{p_1}$. 
\end{itemize}
Let us fix $\kappa_1$ in $\Theta_1$. Then, we can show that the following function is a majorant surrogate in $\S_{L''}(\tilde{f},\kappa)$ for some $L'' > 0$:
\begin{displaymath}
g: \theta_1 \mapsto f(\theta_1,\kappa_2^\star) ~\text{with}~~ \kappa_2^\star \defin \argmin_{\theta_2 \in \Theta_2} \tilde{f}(\kappa_1,\theta_2).
\end{displaymath}
 We can indeed apply Lemma~\ref{lemma:danskin} which ensures that $\tilde{f}$ is
 differentiable with $\nabla \tilde{f}(\theta_1) = \nabla_1
 f(\theta_1,\theta_2^\star)$ and $\theta_2^\star \defin \argmin
 f(\theta_1,\theta_2)$. Considering the approximation error function $h\defin g-\tilde{f}$, we indeed have that $h(\kappa_1)=0$, $\nabla h(\kappa_1)=0$ and since $\theta_2^\star$ as a function of $\theta_1$ is
 Lipschitz according to Lemma~\ref{lemma:argmin}, we also have that $\nabla h$ is
 Lipschitz continuous. 
 
When $f$ is jointly convex in $\theta_1$ and~$\theta_2$, $\tilde{f}$
is itself convex and $\nabla \tilde{f}$ is $L'$-Lipschitz continuous according
to Lemma~\ref{lemma:danskin}. We can then apply  Lemma~\ref{lemma:convexerror} to
obtain that $\nabla h$ is $L'$-Lipschitz continuous such that we can choose $L''=L'$.

\paragraph{Saddle Point Surrogates.}~\\
Let us make the same assumptions as in the previous paragraph with the following exceptions
\begin{itemize}
\setlength\itemindent{10pt}
\item $\theta_2 \!\mapsto\! f(\theta_1,\theta_2)$ is $\mu$-strongly concave for all $\theta_1$ in~$\Real^{p_1}$; 
\item $\theta_1  \!\mapsto\! f(\theta_1,\theta_2)$ is convex for all $\theta_2$ in $\Theta_2$;
\item $\tilde{f}(\theta_1) \defin \max_{\theta_2 \in \Theta_2} f(\theta_1,\theta_2).$
\end{itemize}
Then, $\tilde{f}$ is convex as the pointwise supremum of convex functions~\citep[see][]{boyd} and we can show that the function below is a majorant surrogate in $\S_{2L''}(\tilde{f},\kappa_1)$:
$$ g: \theta_1 \mapsto f(\theta_1,\kappa_2^\star) + \frac{L''}{2}\|\theta_1-\kappa_1\|_2^2,$$
where $L''\defin \max(2L^2 /\mu,L')$.
When $\theta_1 \mapsto f(\theta_1,\theta_2)$ is affine, we can instead choose $L''\defin L^2 /\mu$.

We indeed apply the same methodology as in the previous paragraph.
Lemma~\ref{lemma:danskin} tells us that the function
 $-\tilde{f}$ is differentiable with $2L^2/\mu$-Lipschitz continuous gradient
 (only $L^2/\mu$ in the affine case). 
Then, we have that the function $\theta_1
 \mapsto -f(\theta_1,\kappa_2^\star)$ is in $\S_{L''}(-\tilde{f},\kappa_1)$ by
 using Lemma~\ref{lemma:convexerror}.  We then apply the negation rule of
 Lemma~\ref{lemma:combination} to conclude.

\paragraph{Jensen Surrogates.}~\\
Let us recall the definition of Jensen surrogates.
Following \citet{lange2}, we consider a convex function
$f: \Real \mapsto \Real$, a vector $\x$ in $\Real^p$ and define $\tilde{f}:
\Real^p \to \Real$ as $\tilde{f}(\theta) \defin f(\x^\top \theta)$ for
all $\theta$. Let $\w$ in $\Real_+^p$ be a weight vector such that $\w \geq 0$, $\|\w\|_1=1$ and $\w_i \neq 0$ whenever $\x_i \!\neq\! 0$. Then, we consider the following function~$g$ for any $\kappa$ in~$\Real^p$
\begin{displaymath}
   g: \theta \mapsto \sum_{i=1}^p \w_i f \left(\frac{\x_i}{\w_i}( \theta_i-\kappa_i) + \x^\top \kappa\right),
\end{displaymath}
Assume that $f$ is differentiable with a $L$-Lipschitz gradient and $\w_i
\defin |\x_i|^\nu / \|\x\|_\nu^\nu$ for some~$\nu \geq 0$.\footnote{With an abuse of notation, $\|\x\|_{0}^0$ denotes the $\ell_0$-pseudo norm, also denoted by~$\|\x\|_0$.}
$\nabla f$ is obviously Lipschitz with constant $L\|\x\|_2^2$. $g$ is also convex,
differentiable with Lipschitz continuous gradient with constant $L'$
obtained below with simple calculations:
\begin{itemize}
\setlength\itemindent{10pt}
   \item if $\nu=0$, $L' = L\|\x\|_\infty^2\|\x\|_0$;
   \item if $\nu=1$, $L' = L\|\x\|_\infty\|\x\|_1$;
   \item if $\nu=2$, $L' = L\|\x\|_2^2$.
\end{itemize}
The fact that $g$ is majorant is a simple application of Jensen inequality.  It
is also obvious that $g(\kappa)=f(\kappa)$ and that $\nabla g(\kappa)=\nabla
f(\kappa)$.  We now apply Lemma~\ref{lemma:convexerror}, noticing that we
always have $L'$ greater than $L\|\x\|_2^2$, and we have that $g$ is in
$\S_{L'}(\tilde{f},\kappa)$. 

\paragraph{Quadratic Surrogates.}~\\
When $f$ is twice differentiable and admits a matrix~$\HH$ in such that $\nabla^2 f - \HH$ is always positive definite, the following 
function is a first-order majorant surrogate:
\begin{displaymath}
    g: \theta \mapsto f(\kappa) + \nabla f(\kappa)^\top (\theta-\kappa) + \frac{1}{2}(\theta-\kappa)^\top\HH(\theta-\kappa).
\end{displaymath}
The fact that it is majorant is simply an application of the mean-value theorem.

\section{Additional Optimization Scheme: Block Frank-Wolfe}\label{appendix:schemes}
We provide in this section an additional optimization scheme, combining the
ideas of Sections~\ref{sec:conditional} and~\ref{sec:bcd} with
separability assumptions on the surrogates $g_n$ and $\Theta$.  It results in a
block coordinate version of the Frank-Wolfe optimization scheme presented in
Algorithm~\ref{alg:conditional_bcd} generalizing a procedure recently
introduced by~\citet{lacoste}. More precisely, the algorithm of \citet{lacoste}
corresponds to using a quadratic surrogate as provided by
Lemma~\ref{lemma:upperlipschitz} when $f$ is smooth with $L$-Lipschitz
gradient, and performing a line search on the function $f$ instead of $g_n$.
Our approach on the other hand can afford to have a non-smooth component in $f$
and in that sense is more general. Note that \citet{lacoste} also presents
duality gap guarantees and various extensions and applications, which we
do not consider in our paper.
\begin{algorithm}[hbtp]
    \caption{Block Frank-Wolfe Scheme}\label{alg:conditional_bcd}
    \begin{algorithmic}[1]
    \INPUT $\theta_0=(\theta_0^1,\ldots,\theta_0^k) \in \Theta=\Theta^1 \times \ldots \times \Theta^k$ (initial point); $N$ (number of iterations).
    \FOR{ $n=1,\ldots,N$}
    \STATE Compute a separable majorant surrogate function $g_n=\sum_{i=1}^kg_n^i$ in $\S_{L,L}(f,\theta_{n-1})$;
    \STATE Randomly pick one block $\hati_n$ in $\{1,\ldots,k\}$ and compute a search direction:
    \begin{displaymath}
       \nu_n^{\hati_n} \in \argmin_{\theta^{\hati_n} \in \Theta^{\hati_n}} \left[ g_n^{\hati_n}(\theta^{\hati_n})-\frac{L}{2}\|\theta^{\hati_n}-\theta^{\hati_n}_{n-1}\|_2^2\right]. 
    \end{displaymath}
    \STATE Line search:
    \begin{displaymath}
       \alpha^\star \defin \argmin_{\alpha \in [0,1]} g_n^{\hati_n}((1-\alpha) \theta_{n-1}^{\hati_n} + \alpha\nu_n^{\hati_n}). 
    \end{displaymath}
    \STATE Update $\theta_n^{\hati_n}$:
    \begin{displaymath}
       \theta_n^{\hati_n} \defin (1-\alpha^\star) \theta_{n-1}^{\hati_n} + \alpha^\star\nu_n^{\hati_n}. 
    \end{displaymath}
    \ENDFOR
    \OUTPUT $\theta_{N} = (\theta_N^1,\ldots,\theta_N^k)$ (final estimate);
    \end{algorithmic}
\end{algorithm}
\begin{proposition}[\bfseries Convergence Rate for Algorithm~\ref{alg:conditional_bcd}]~\label{prop:conv12}\newline
Let $f$ be convex, bounded below and $f^\star$ be the minimum of $f$ on
$\Theta=\Theta^1 \times \ldots \times \Theta^k$. Assume that~$\Theta$ is bounded and call $R\defin \max_{\theta_1,\theta_2 \in \Theta} \|\theta_1-\theta_2\|_2$ its diameter.
The sequence $(f(\theta_n))_{n \geq 0}$ provided by Algorithm~\ref{alg:conditional_bcd} converges almost surely to $f^\star$ and
we have for all $n \geq 1$,
\begin{displaymath}
\E[f(\theta_n)-f^\star] \leq  \frac{2LR^2}{2+\delta(n-n_0)},
\end{displaymath}
where $\delta\defin 1/k$ and $n_0\defin \left\lceil \log\left(\frac{2(f(\theta_0)-f^\star)}{LR^2}-1\right) / \log\left(\frac{1}{1-\delta}\right) \right\rceil$ if $f(\theta_0)-f^\star > LR^2$ and $n_0\defin0$ otherwise.
\end{proposition}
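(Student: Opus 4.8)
The plan is to reduce the analysis to that of \refProp{prop:conv15}: I will show that the expected gaps obey \emph{exactly} the recursion established there for the randomized block coordinate scheme, so that the bound $2LR^2/(2+\delta(n-n_0))$ and the definition of $n_0$ follow verbatim. Fix a minimizer $\theta^\star$ of $f$ on $\Theta$. Almost sure convergence is easier here than in \refProp{prop:conv14}: since $g_n$ is a separable majorant surrogate in $\S_{L,L}(f,\theta_{n-1})$ and the line search can only decrease the chosen block of $g_n$ while leaving the others fixed, we get $f(\theta_n)\le g_n(\theta_n)\le g_n(\theta_{n-1})=f(\theta_{n-1})$ surely; hence $(f(\theta_n))_{n\ge0}$ is monotonically nonincreasing and bounded below, and once $\E[f(\theta_n)-f^\star]\to0$ is proved below, monotone convergence forces $f(\theta_n)\to f^\star$ almost surely.

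Next I would establish a one-block Frank-Wolfe decrease. Separability together with the $L$-strong convexity of $g_n$ implies that each $g_n^i$ is $L$-strongly convex; combining the strong convexity inequality with the optimality of the line search (as in the first display of the proof of \refProp{prop:conv11}) and then replacing $\nu_n^{\hati_n}$ using its defining optimality gives, for every $\alpha\in[0,1]$,
\begin{displaymath}
g_n^{\hati_n}(\theta_n^{\hati_n}) \le (1-\alpha)g_n^{\hati_n}(\theta_{n-1}^{\hati_n}) + \alpha\Big(g_n^{\hati_n}(\theta^{\star\hati_n})-\frac{L}{2}\|\theta^{\star\hati_n}-\theta_{n-1}^{\hati_n}\|_2^2\Big) + \frac{L\alpha^2}{2}\|\nu_n^{\hati_n}-\theta_{n-1}^{\hati_n}\|_2^2 .
\end{displaymath}
Since the other blocks are unchanged, $g_n(\theta_n)=g_n^{\hati_n}(\theta_n^{\hati_n})+\sum_{j\ne\hati_n}g_n^j(\theta_{n-1}^j)$ and $\sum_j g_n^j(\theta_{n-1}^j)=f(\theta_{n-1})$, so this turns into a bound on $g_n(\theta_n)$ involving only block $\hati_n$.

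I would then take the conditional expectation over the uniformly random index $\hati_n$. The step I expect to be the main obstacle is the treatment of the Frank-Wolfe quadratic term: bounding $\|\nu_n^{\hati_n}-\theta_{n-1}^{\hati_n}\|_2$ by the \emph{block} diameter $R_i\defin\max_{\theta^i,\theta'^i\in\Theta^i}\|\theta^i-\theta'^i\|_2$ rather than by the global $R$, and exploiting that for a Cartesian product in the Euclidean norm $\sum_{i=1}^k R_i^2=R^2$, so the averaged quadratic contribution is $\delta\sum_i \frac{L\alpha^2 R_i^2}{2}=\frac{\delta L\alpha^2 R^2}{2}$; a naive bound by $R$ loses the factor $\delta$ and yields a strictly weaker rate. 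Using separability of $g_n$, the identity $\|\theta^\star-\theta_{n-1}\|_2^2=\sum_i\|\theta^{\star i}-\theta_{n-1}^i\|_2^2$, and $g_n(\theta^\star)-\frac{L}{2}\|\theta^\star-\theta_{n-1}\|_2^2\le f^\star$ from \refLemma{lemma:basic}, this yields for every $\alpha\in[0,1]$
\begin{displaymath}
\E[f(\theta_n)-f^\star\mid\theta_{n-1}] \le (1-\delta)(f(\theta_{n-1})-f^\star) + \delta\Big((1-\alpha)(f(\theta_{n-1})-f^\star)+\frac{LR^2\alpha^2}{2}\Big).
\end{displaymath}

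Finally, taking the full expectation and applying Jensen's inequality to the concave map $r\mapsto\min_{\alpha\in[0,1]}[(1-\alpha)r+\frac{LR^2\alpha^2}{2}]$ (a pointwise minimum of functions affine in $r$) produces precisely the recursion appearing in the proof of \refProp{prop:conv15}. From that point the case split on whether $\E[f(\theta_{n-1})-f^\star]>LR^2$, the resulting value of $n_0$, and the telescoping of $\E[f(\theta_n)-f^\star]^{-1}$ are identical, and they give $\E[f(\theta_n)-f^\star]\le 2LR^2/(2+\delta(n-n_0))$. The only genuinely new ingredients relative to \refProp{prop:conv15} are the block-wise Frank-Wolfe decrease and the diameter decomposition $\sum_i R_i^2=R^2$; everything downstream is reused.
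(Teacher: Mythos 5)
Your proposal is correct and takes essentially the same route as the paper's own proof: condition on the random block, use separability of $g_n$ and $\Theta$ to reduce to the single-block Frank--Wolfe inequality from \refProp{prop:conv11}, and then reuse verbatim the recursion and case analysis of \refProp{prop:conv15}. The quadratic-term step you flag as the main obstacle is handled in the paper by recombining the per-block terms through separability into $\delta\frac{L\alpha^2}{2}\|\nu_n^\star-\theta_{n-1}\|_2^2 \leq \delta\frac{L\alpha^2 R^2}{2}$, which is exactly equivalent to your block-diameter decomposition $\sum_{i=1}^k R_i^2 = R^2$.
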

\proofatend
Let us denote by $\nu_n^\star \defin \argmin_{\theta \in \Theta} \left[ g_n(\theta)-\frac{L}{2}\|\theta-\theta_{n-1}\|_2^2\right]$.
Because of the separability of the surrogate function $g_n$, we have after a few calculations
\begin{displaymath}
\E [f(\theta_n) | \theta_{n-1}] \leq \E[g_n(\theta_n) | \theta_{n-1}] \leq (1-\delta) f(\theta_{n-1}) + \delta\min_{\alpha \in [0,1]} g_n((1-\alpha) \theta_{n-1}+\alpha\nu_n^\star).
\end{displaymath}
Following the proof of Proposition~\ref{prop:conv11}, we have
\begin{displaymath}
\E [f(\theta_n) | \theta_{n-1}] \leq (1-\delta) f(\theta_{n-1}) + \delta\min_{\alpha \in [0,1]} \left[(1-\alpha) f(\theta_{n-1}) + \alpha f^\star + \frac{\alpha^2LR^2}{2}\right].
\end{displaymath}
Taking the expectation and defining $r_n \defin \E[f(\theta_n)-f^\star]$, we have
\begin{displaymath}
  r_n \leq (1-\delta) r_{n-1} + \delta \min_{\alpha \in [0,1]} (1-\alpha) r_{n-1} + \frac{\alpha^2 LR^2}{2},
\end{displaymath}
where we have used Jensen inequality similarly as in the proof of Proposition~\ref{prop:conv15}.

Minimizing with respect to $\alpha$ yields
\begin{displaymath}
   r_n \leq (1-\delta) r_{n-1} + \delta \left\{ \begin{array}{ll}
      \frac{LR^2}{2} & \text{if}~~ r_{n-1} > LR^2 \\
      r_{n-1}\left(1-\frac{r_{n-1}}{2LR^2}\right) & \text{otherwise}.
      \end{array} \right.
\end{displaymath}
This is the same recursive relations as in the proof of
Proposition~\ref{prop:conv15}, and we therefore obtain the same convergence
rate.
\endproofatend
The proof is given in Appendix~\ref{appendix:proofs}.

\section{Proofs of Lemmas and Propositions}\label{appendix:proofs}
We present in this section the proofs of the different lemmas and propositions in the paper.
\printproofs

\section{Additional Experimental Results}\label{appendix:exps}
Figures~\ref{fig:exp5} and~\ref{fig:exp6} presents benchmarks for $\ell_2$-logistic regressions with a different regularization parameter than Figure~\ref{fig:exp1}.
Similarly, we present $\ell_1$-logistic regressions benchmarks in Figures~\ref{fig:exp3} and~\ref{fig:exp4} with a different sparsity level than Figure~\ref{fig:exp2}.
\begin{figure}[hbtp]
\centering
\includegraphics[width=0.35\linewidth]{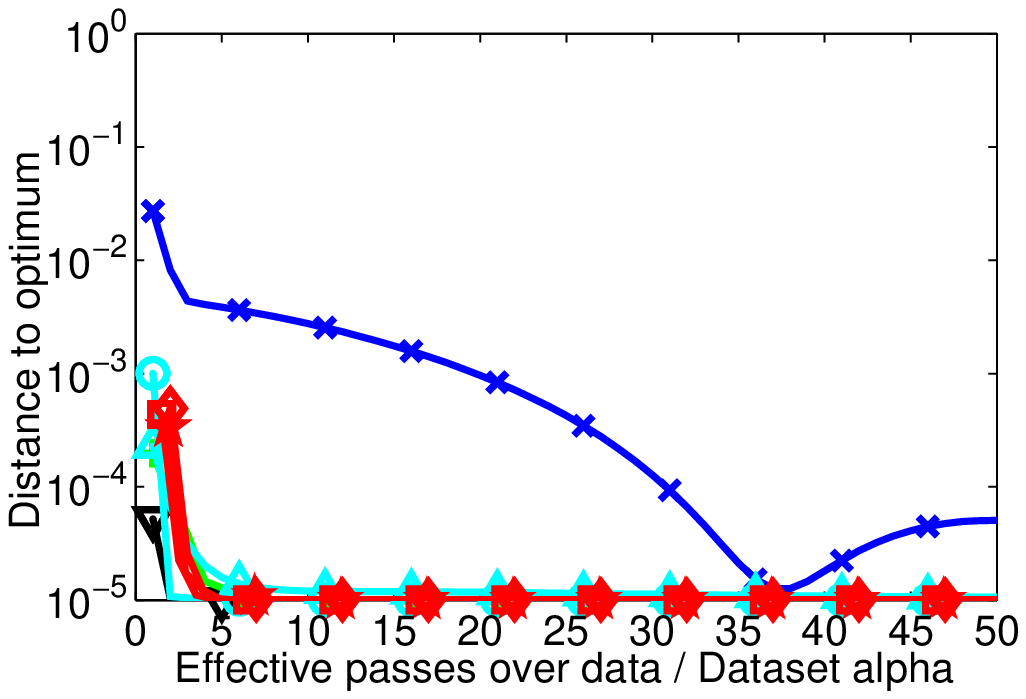}~
\includegraphics[width=0.375\linewidth]{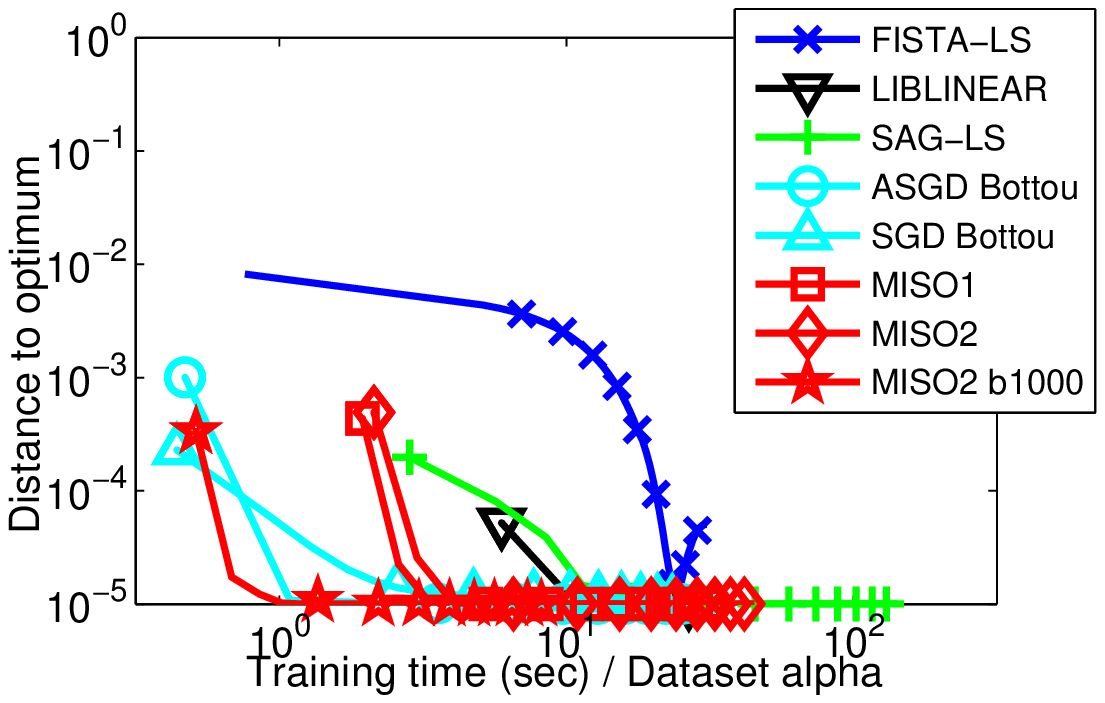}\\
\includegraphics[width=0.35\linewidth]{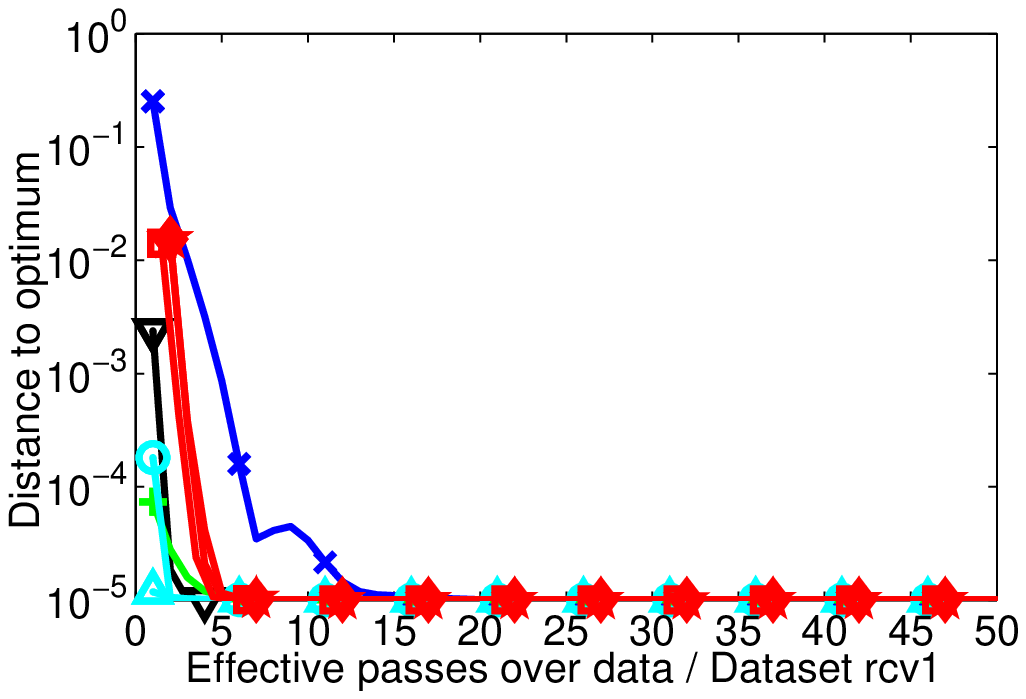}~
\includegraphics[width=0.375\linewidth]{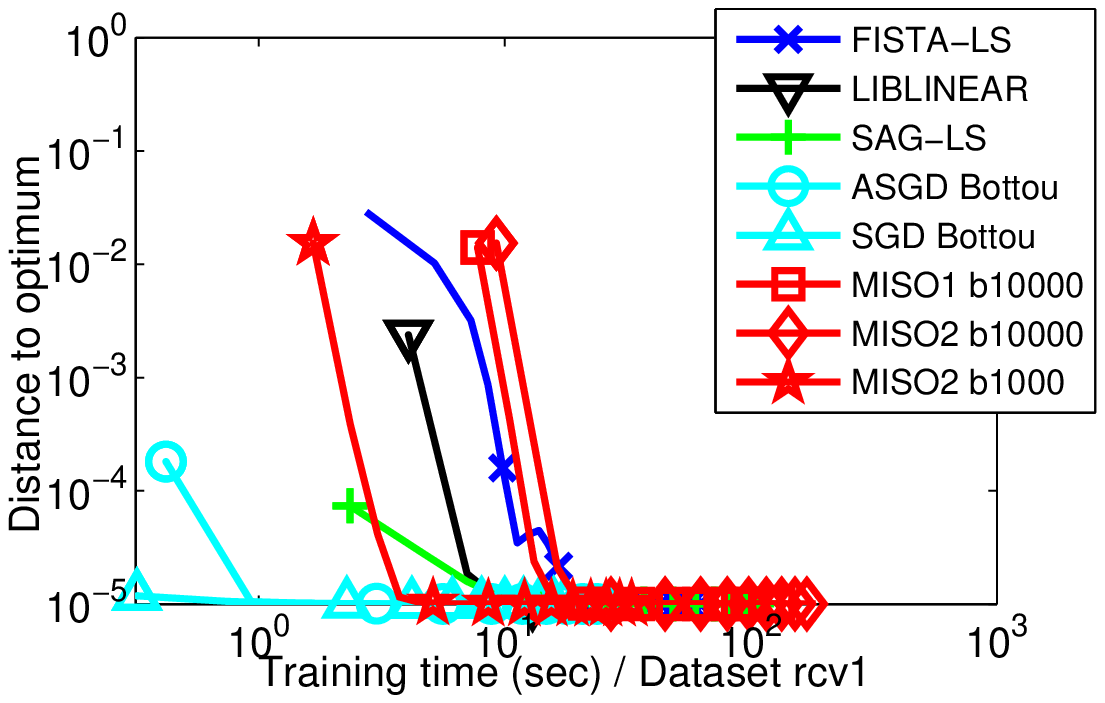}\\
\includegraphics[width=0.35\linewidth]{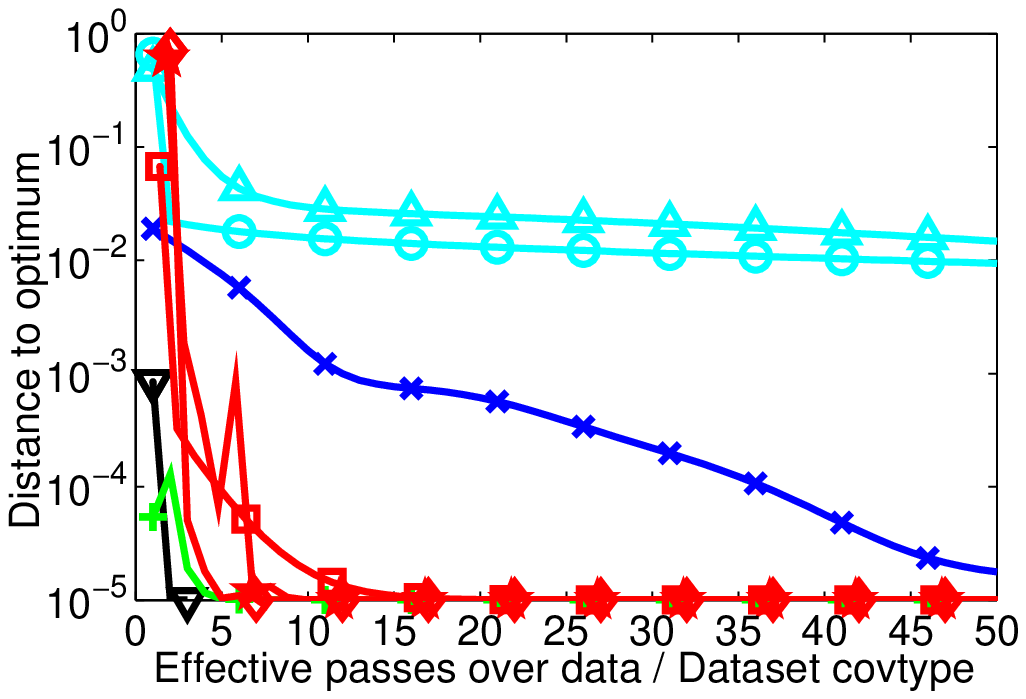}~
\includegraphics[width=0.375\linewidth]{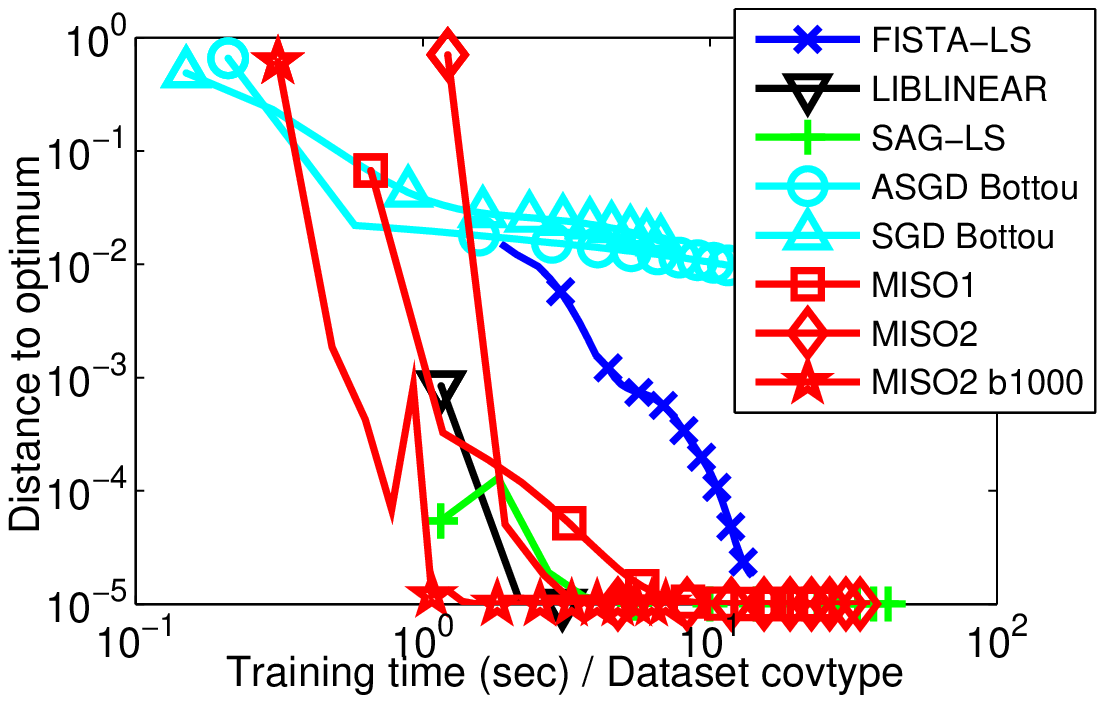}\\
\includegraphics[width=0.35\linewidth]{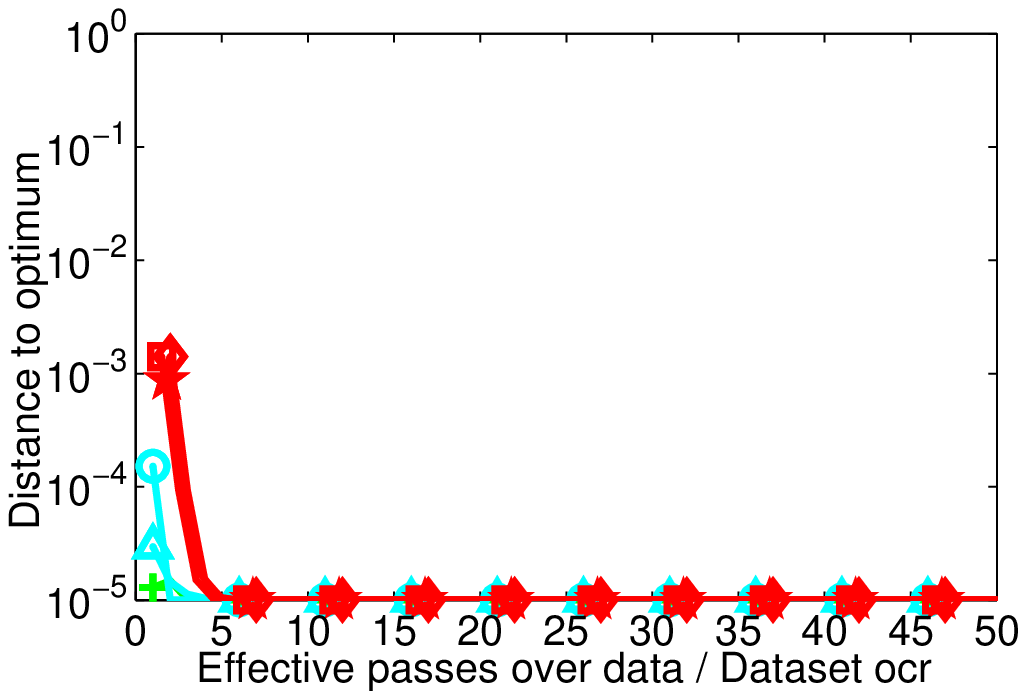}~
\includegraphics[width=0.375\linewidth]{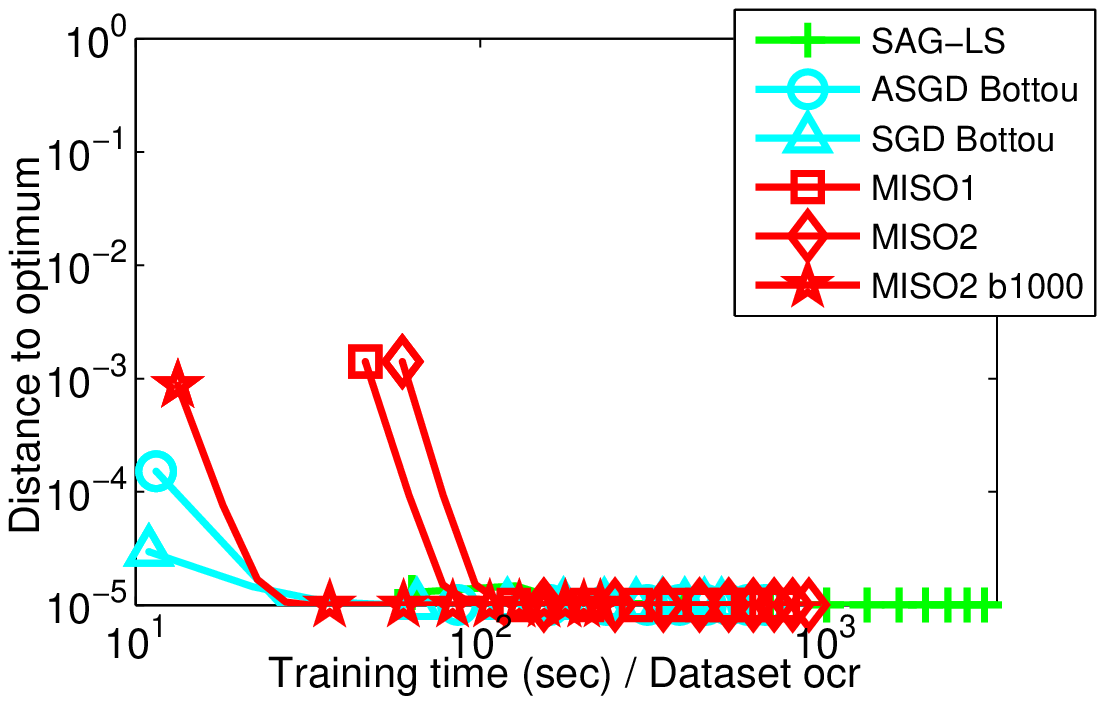}
\myvspace{0.2}
\caption{Benchmarks for $\ell_2$-logistic regression with $\lambda=10^{-3}$.}
\label{fig:exp5}
\end{figure}
\begin{figure}[hbtp]
\centering
\includegraphics[width=0.35\linewidth]{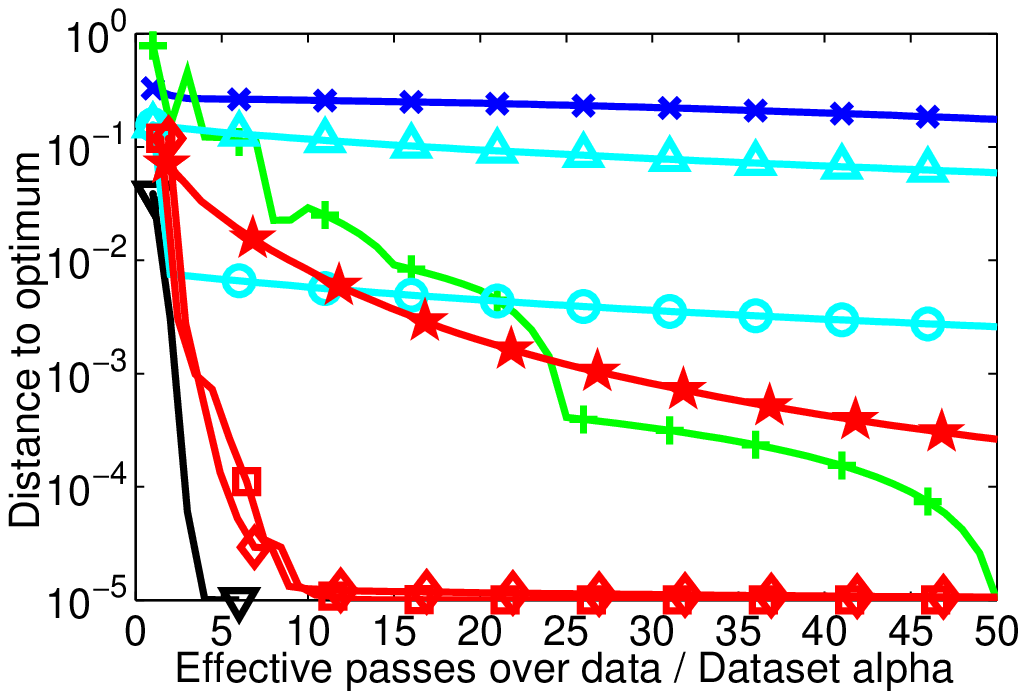}~
\includegraphics[width=0.375\linewidth]{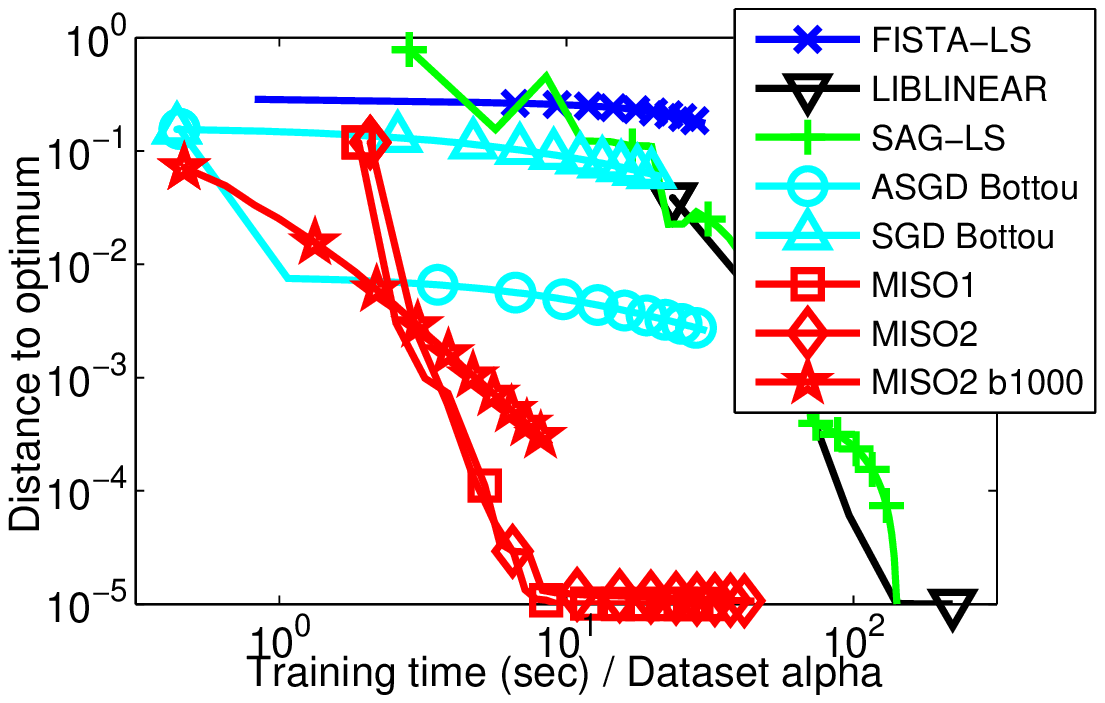}\\
\includegraphics[width=0.35\linewidth]{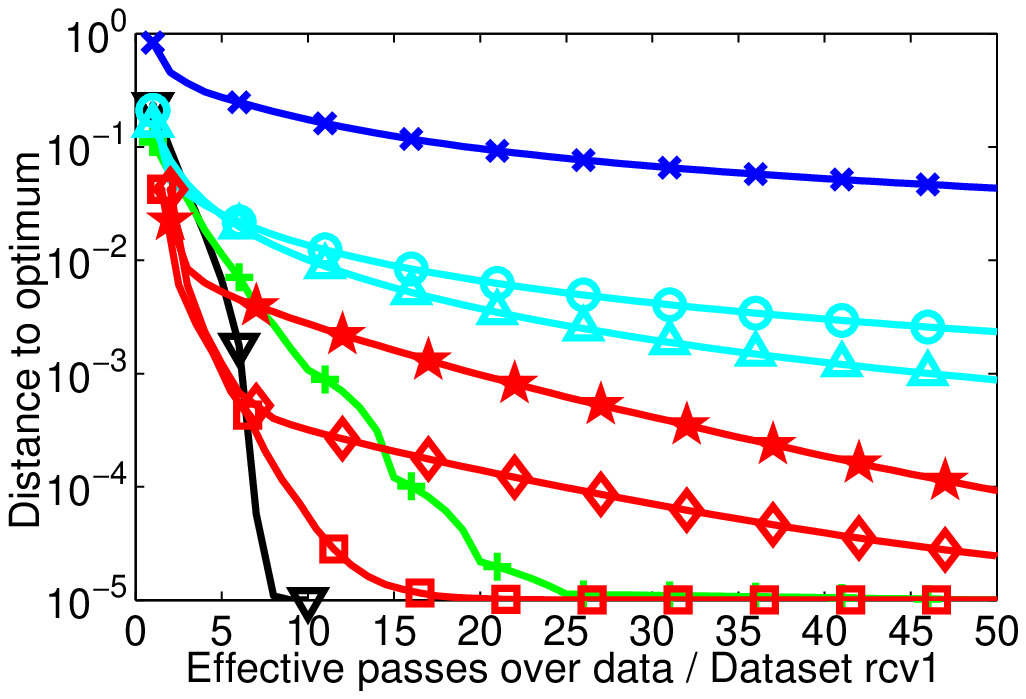}~
\includegraphics[width=0.375\linewidth]{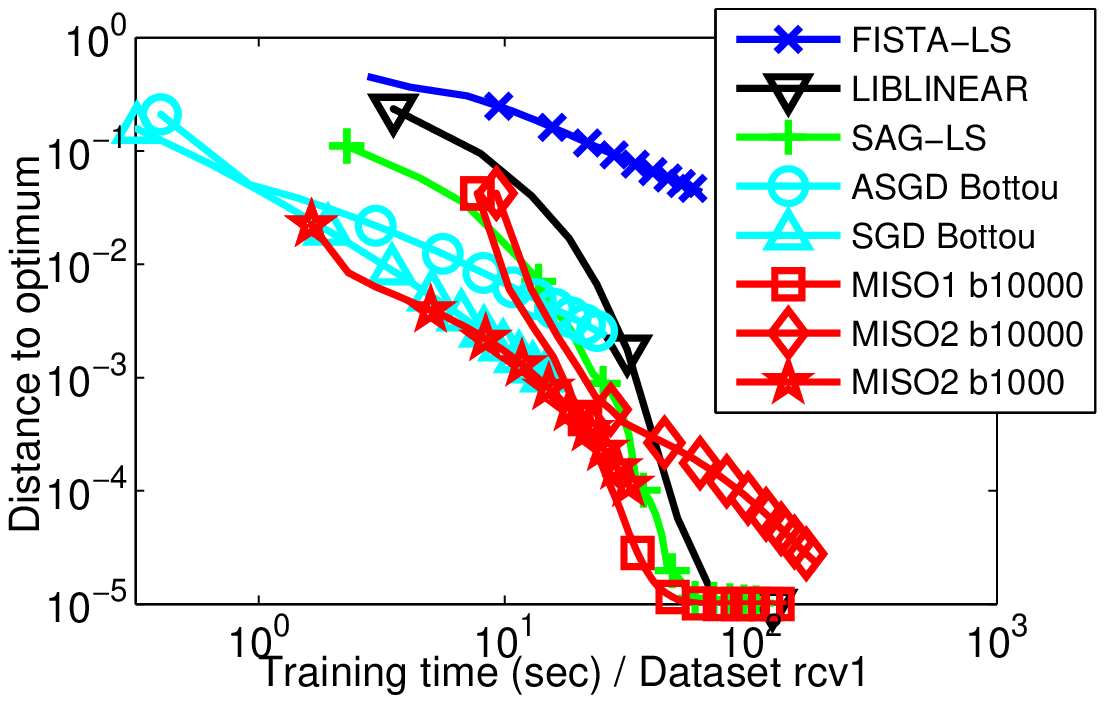}\\
\includegraphics[width=0.35\linewidth]{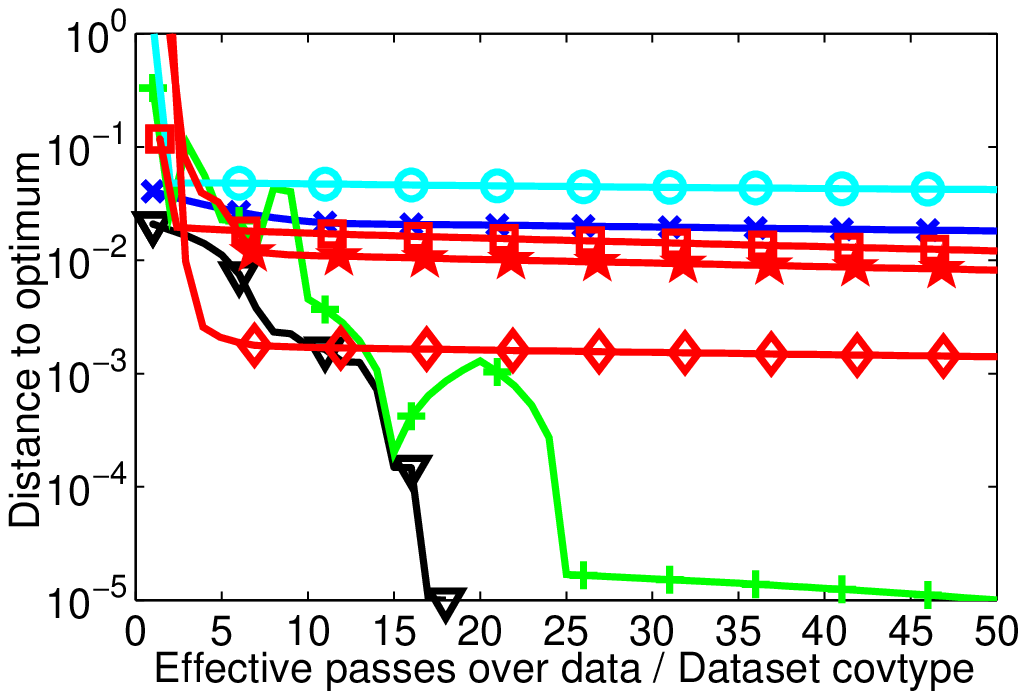}~
\includegraphics[width=0.375\linewidth]{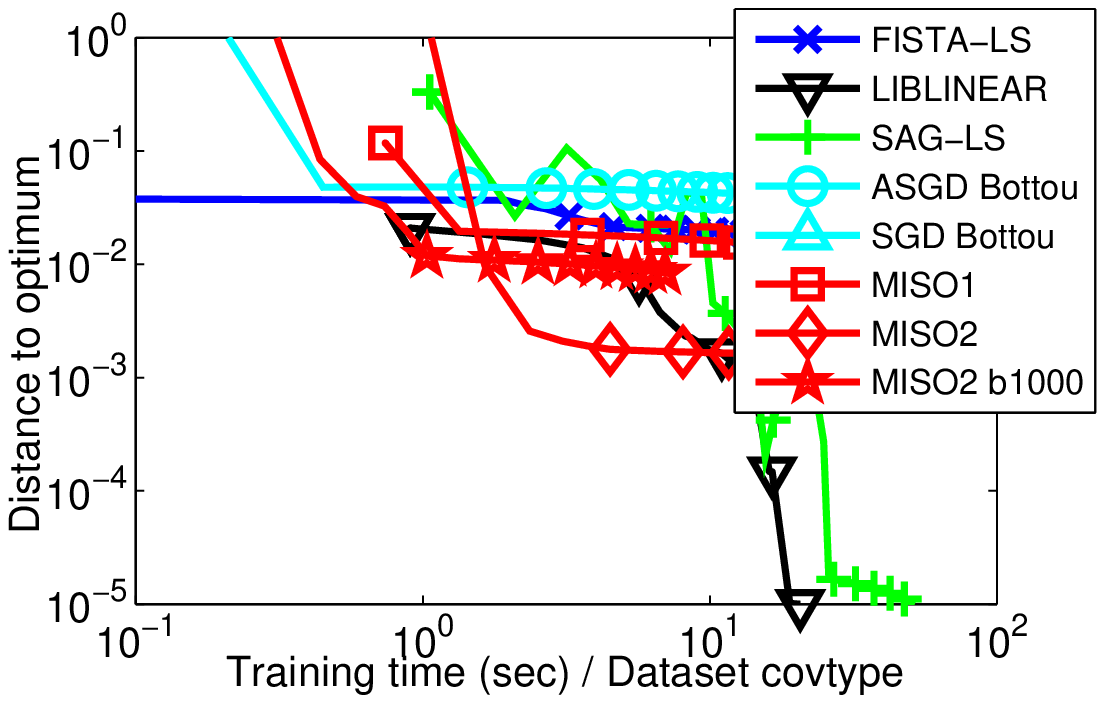}\\
\includegraphics[width=0.35\linewidth]{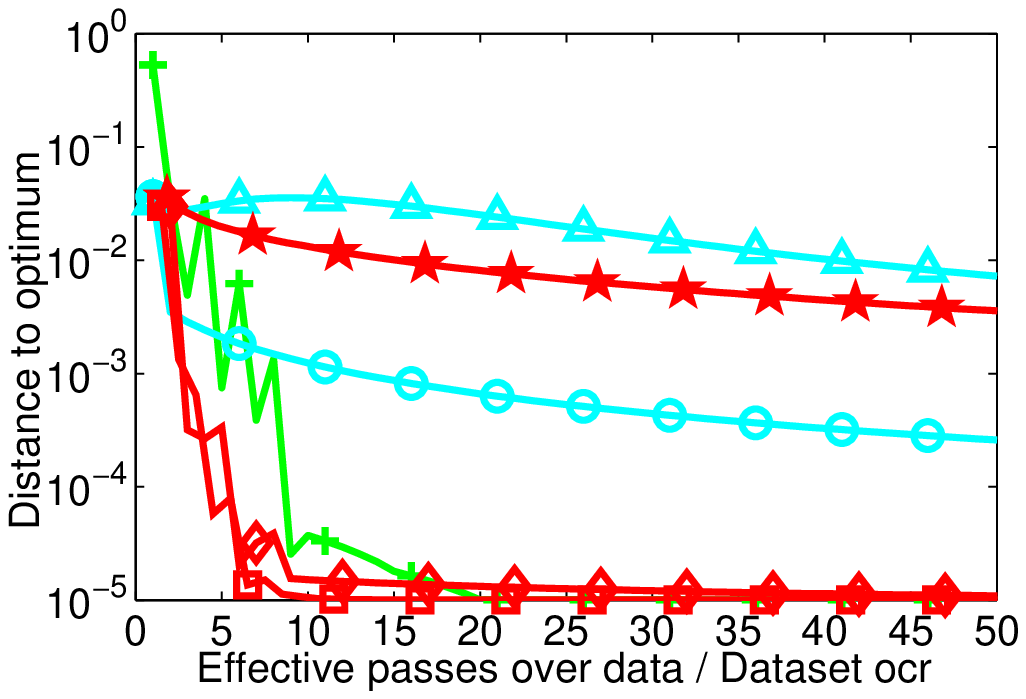}~
\includegraphics[width=0.375\linewidth]{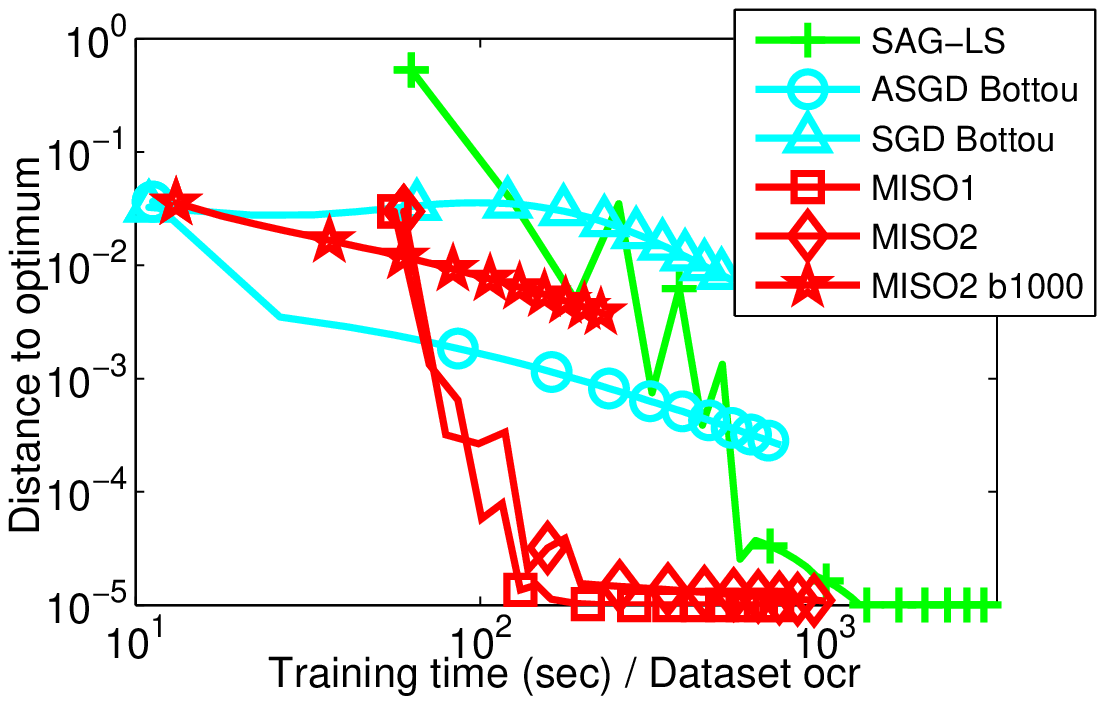}
\myvspace{0.2}
\caption{Benchmarks for $\ell_2$-logistic regression with $\lambda=10^{-7}$.}
\label{fig:exp6}
\end{figure}

\begin{figure}[hbtp]
\centering
\includegraphics[width=0.35\linewidth]{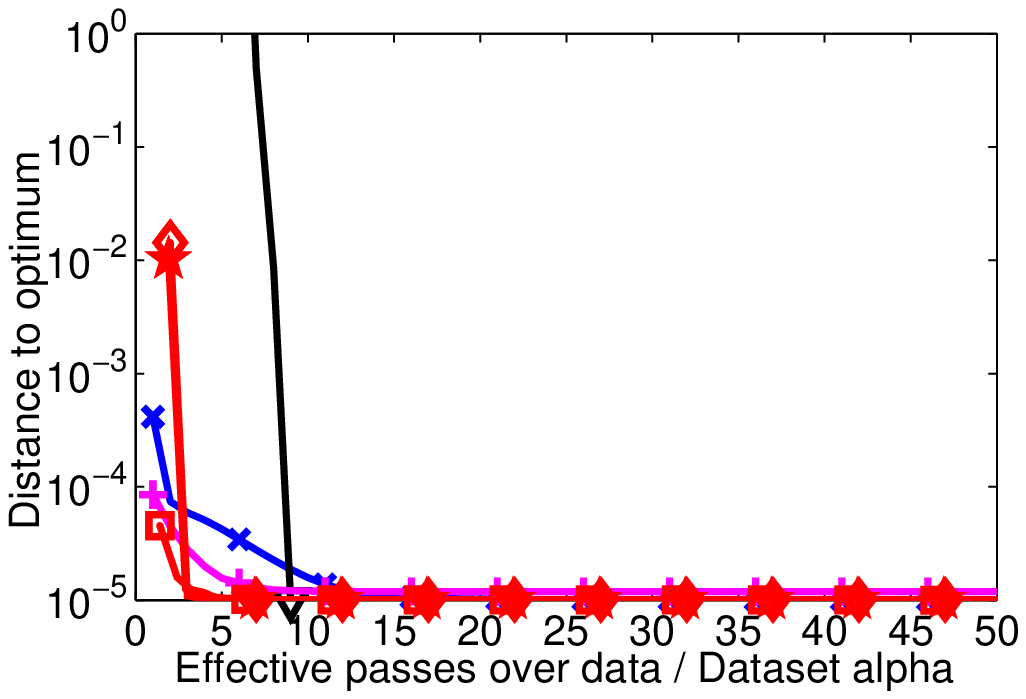}~
\includegraphics[width=0.375\linewidth]{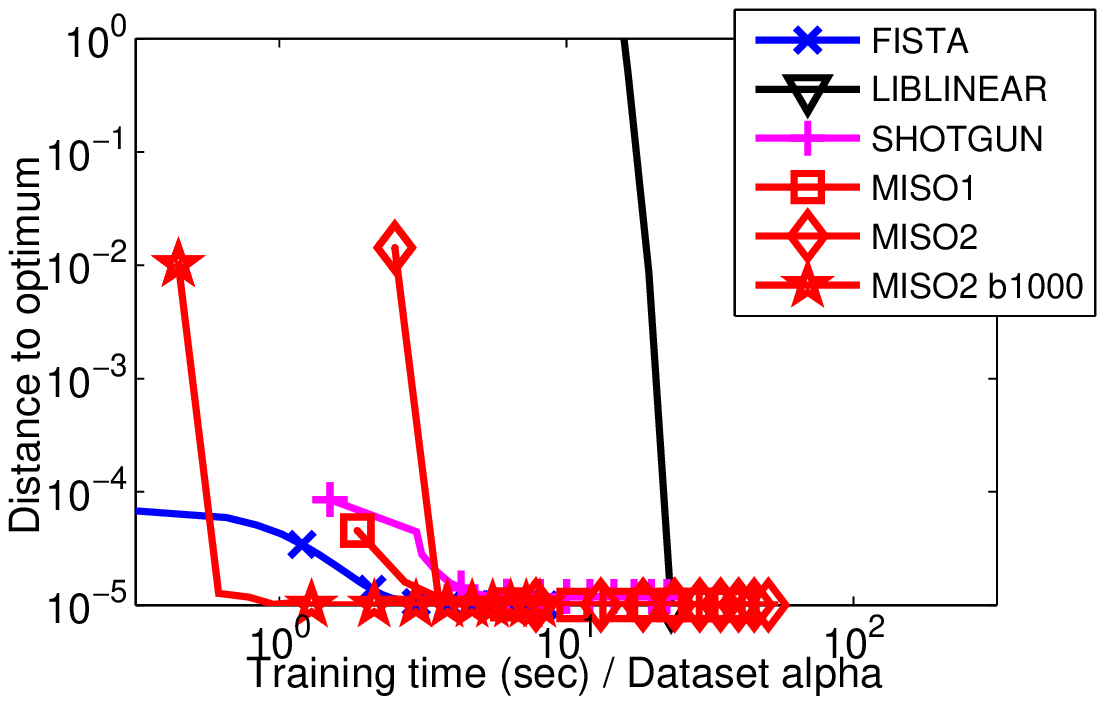}\\
\includegraphics[width=0.35\linewidth]{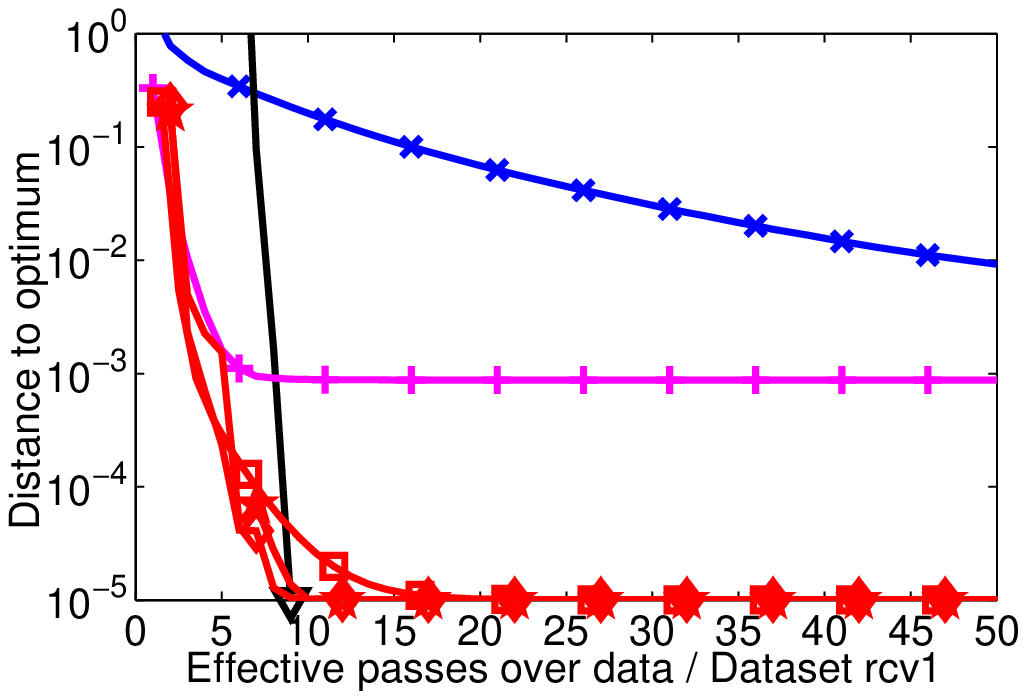}~
\includegraphics[width=0.375\linewidth]{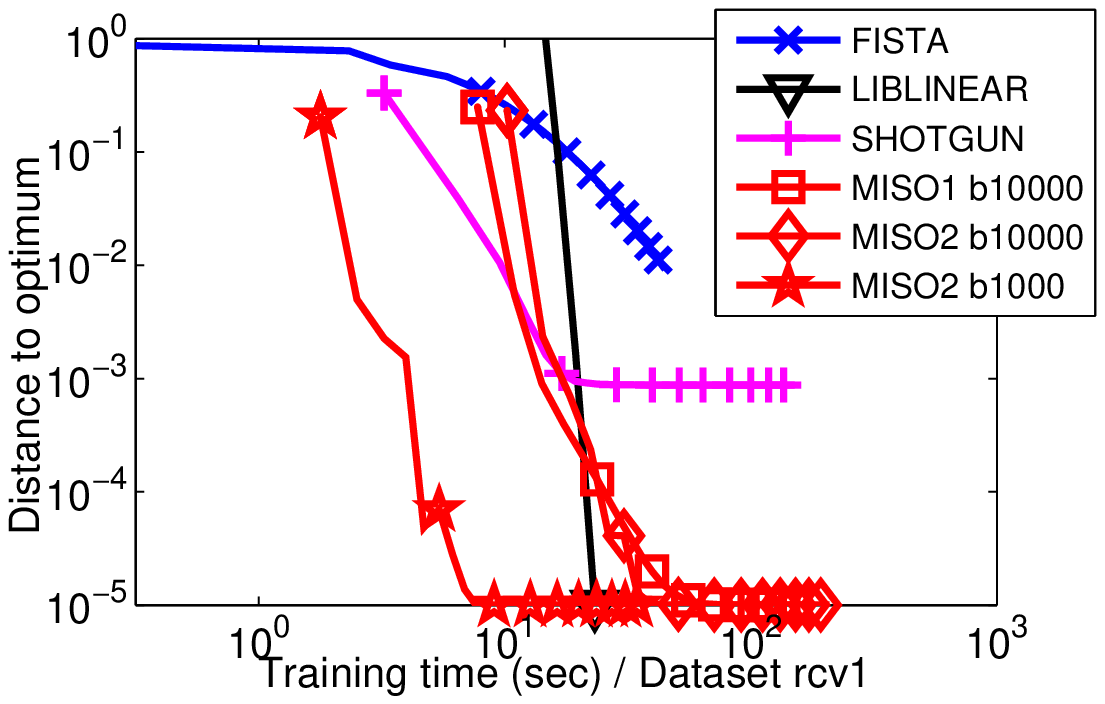}\\
\includegraphics[width=0.35\linewidth]{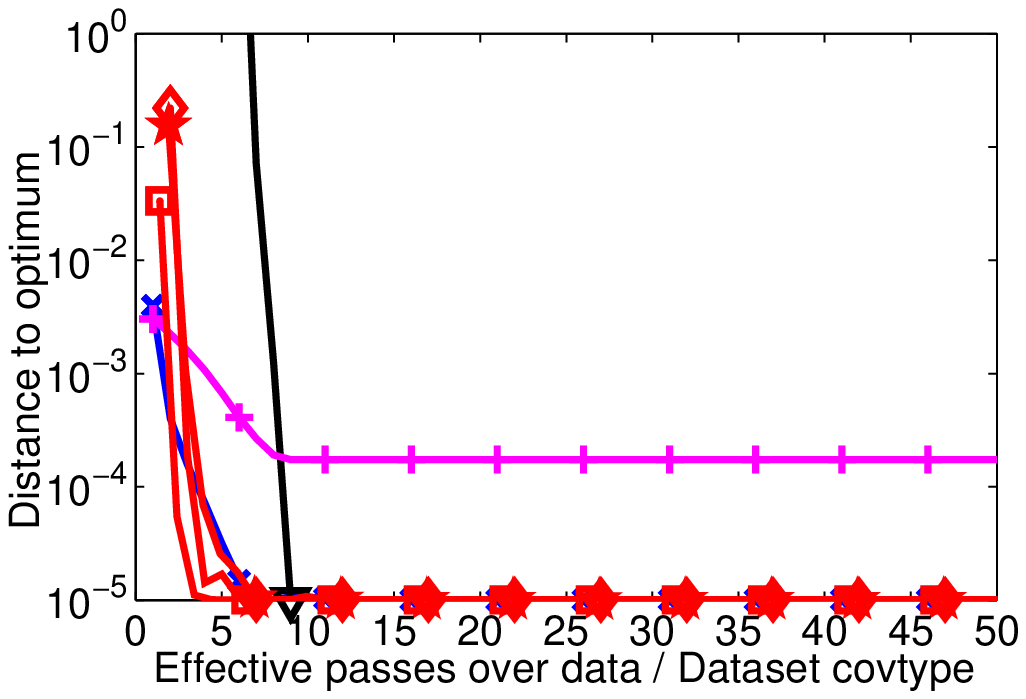}~
\includegraphics[width=0.375\linewidth]{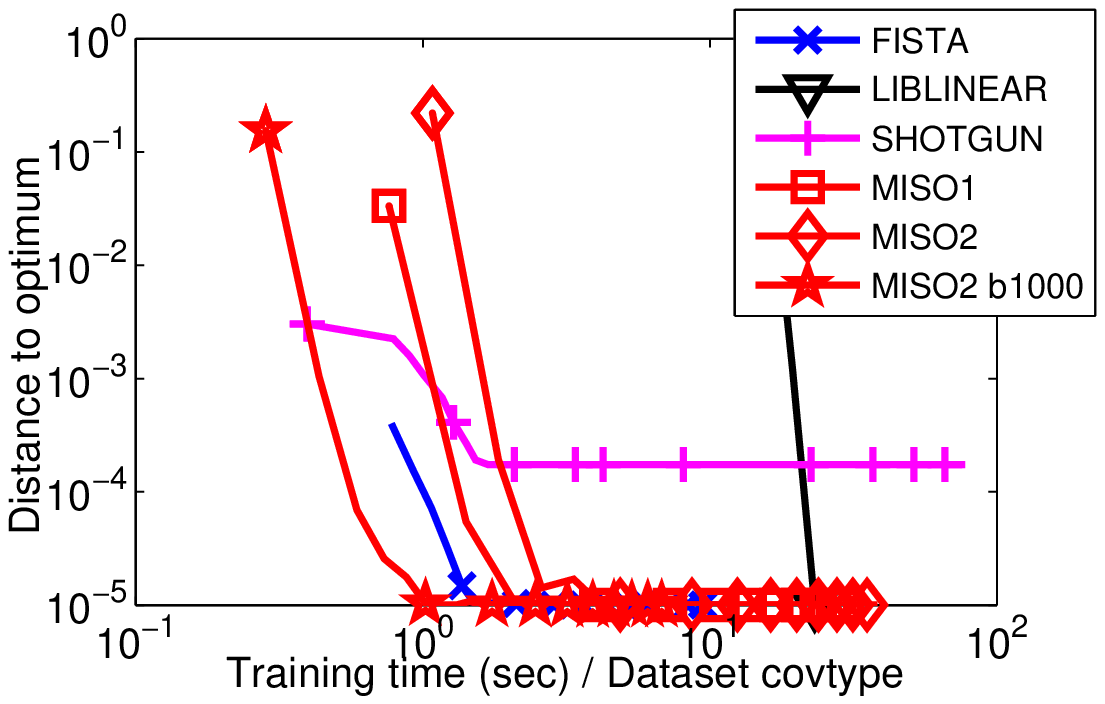}\\
\includegraphics[width=0.35\linewidth]{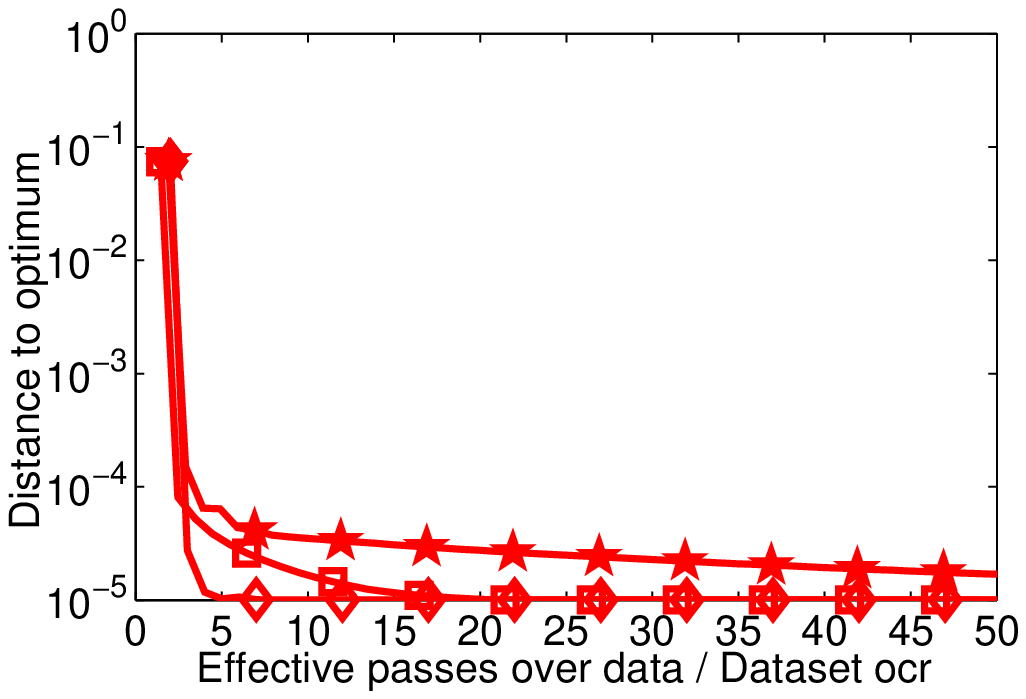}~
\includegraphics[width=0.375\linewidth]{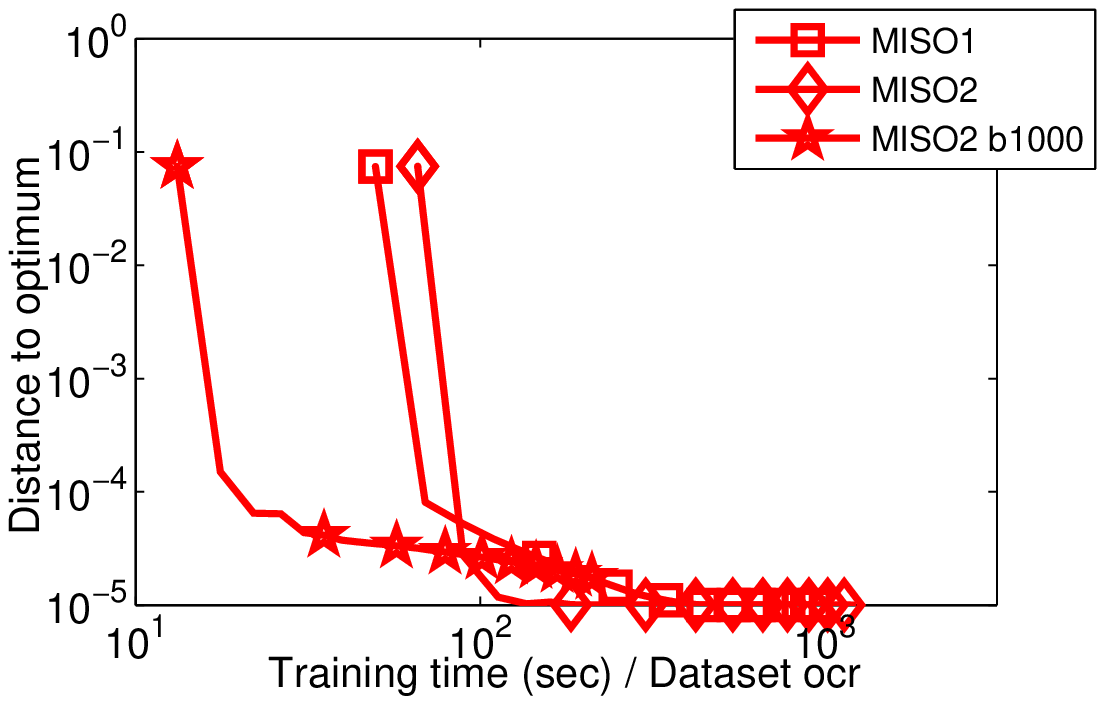}
\myvspace{0.2}
\caption{Benchmarks for $\ell_1$-logistic regression. The regularization parameter $\lambda$ was chosen to obtain a solution with about $3\%$ nonzero coefficients.}
\label{fig:exp3}
\end{figure}
\begin{figure}[hbtp]
\centering
\includegraphics[width=0.35\linewidth]{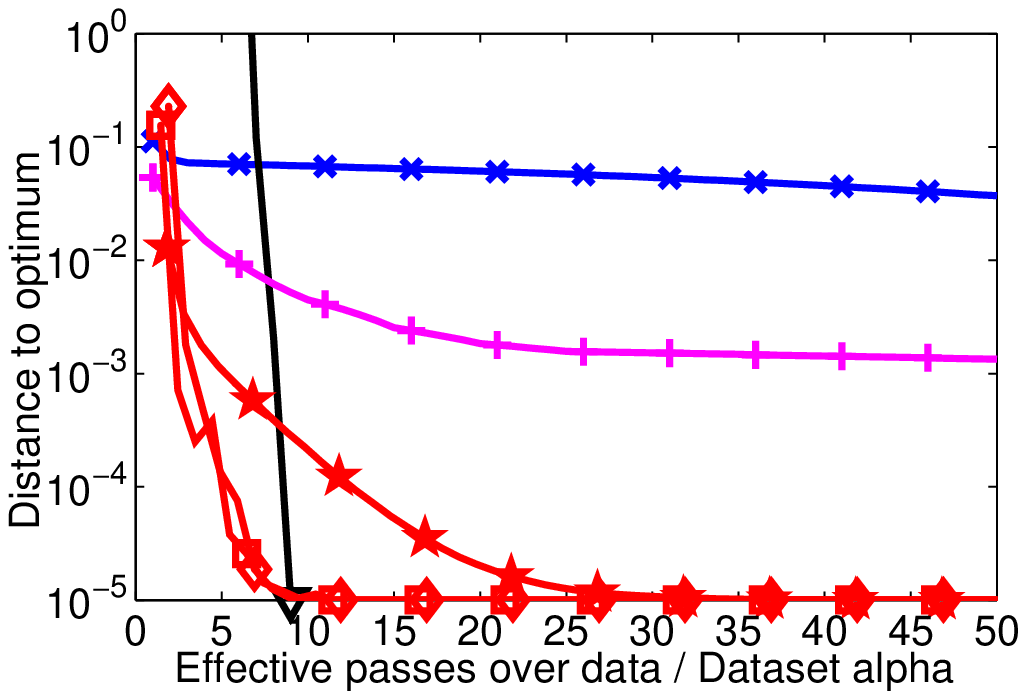}~
\includegraphics[width=0.375\linewidth]{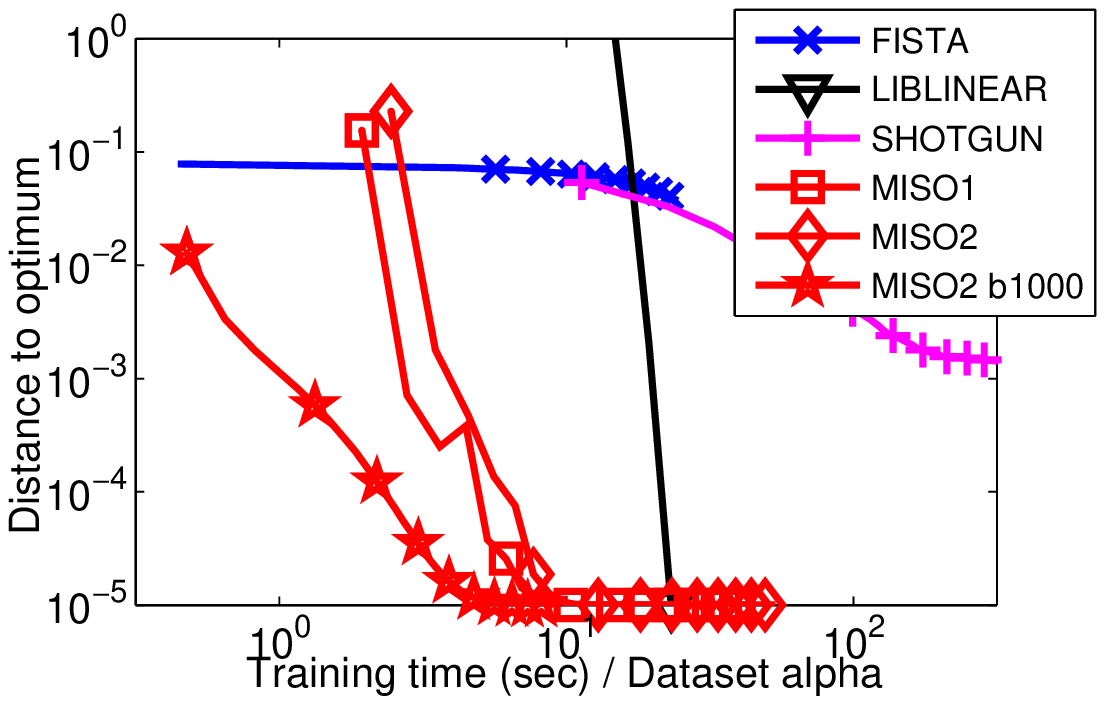}\\
\includegraphics[width=0.35\linewidth]{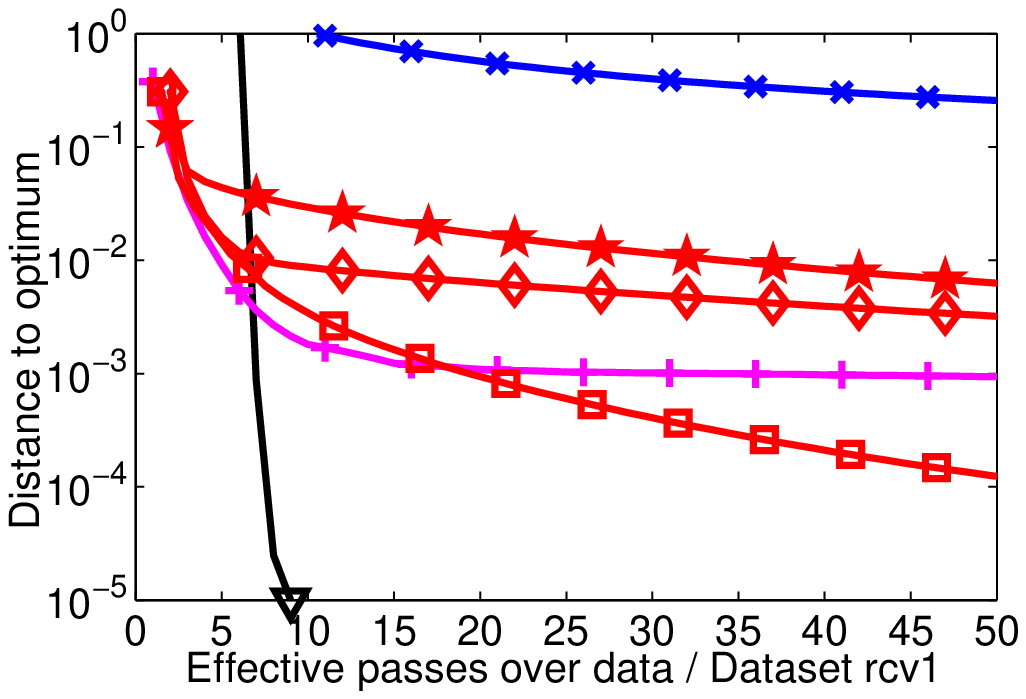}~
\includegraphics[width=0.375\linewidth]{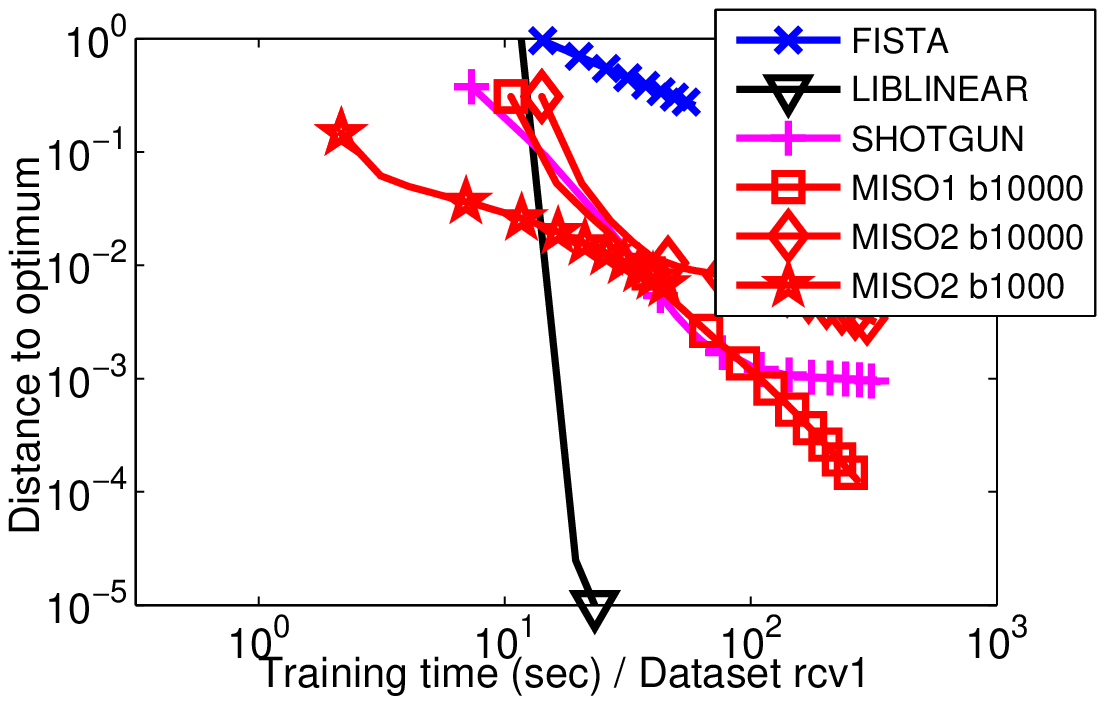}\\
\includegraphics[width=0.35\linewidth]{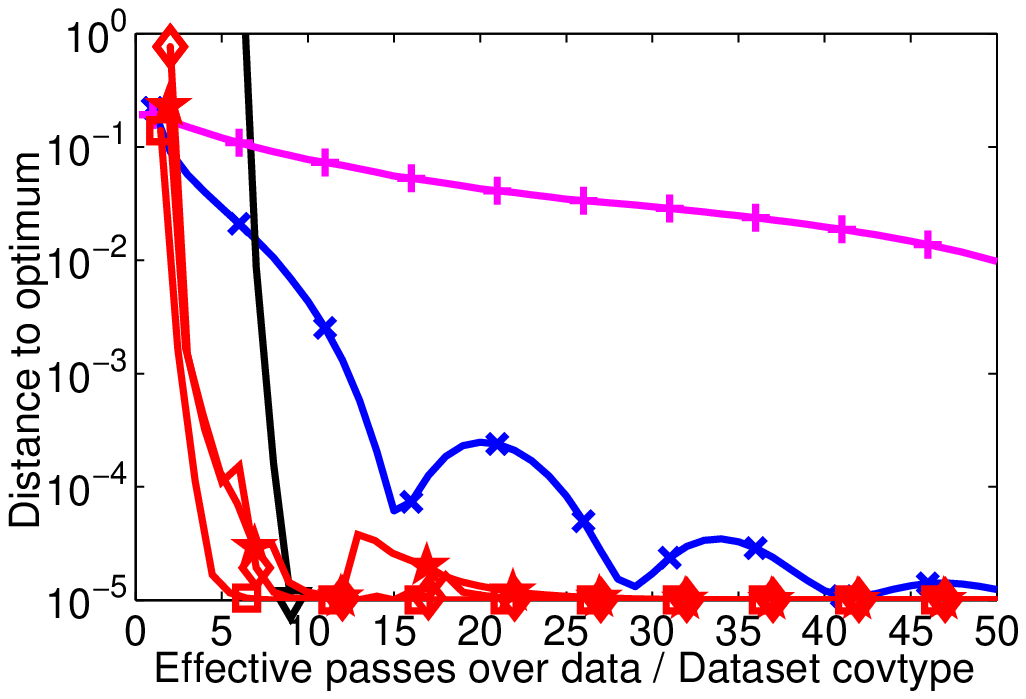}~
\includegraphics[width=0.375\linewidth]{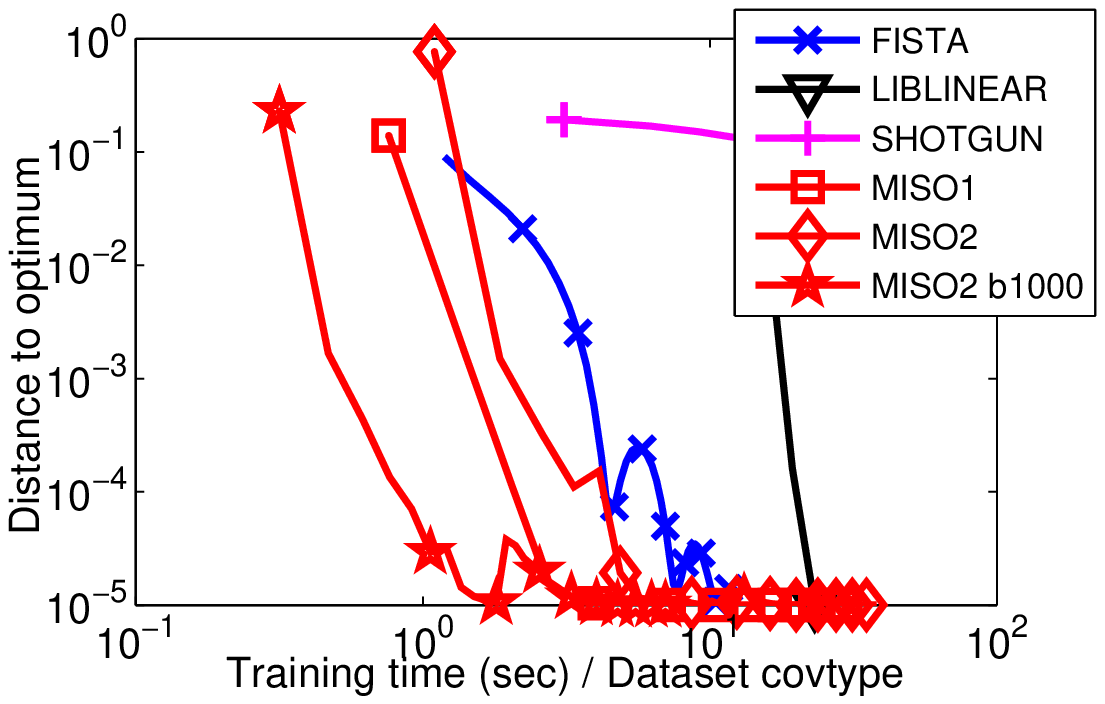}\\
\includegraphics[width=0.35\linewidth]{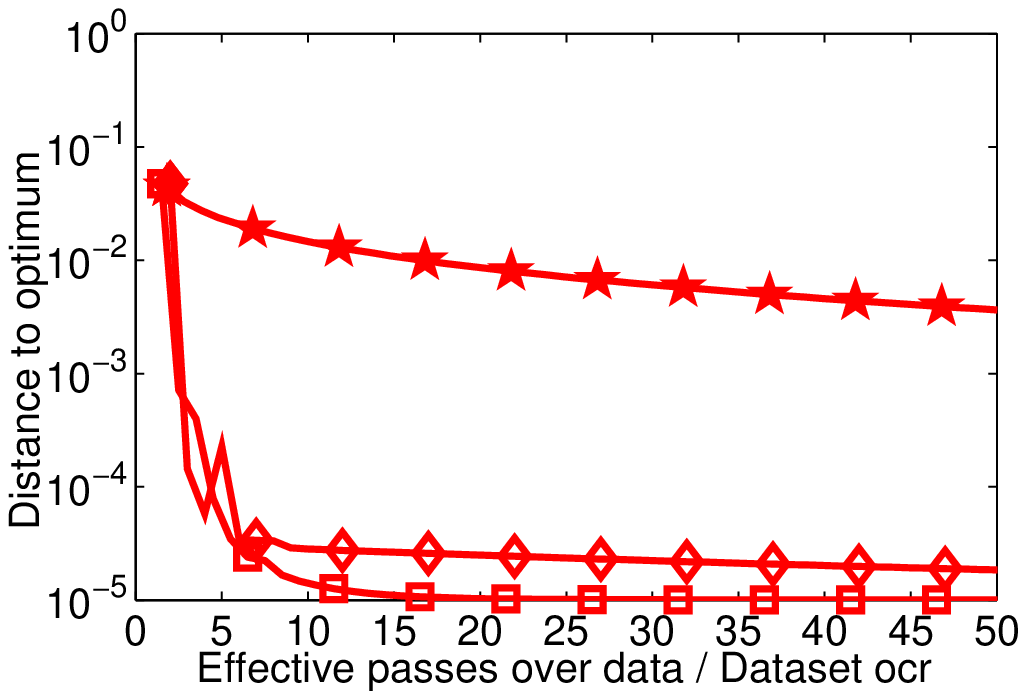}~
\includegraphics[width=0.375\linewidth]{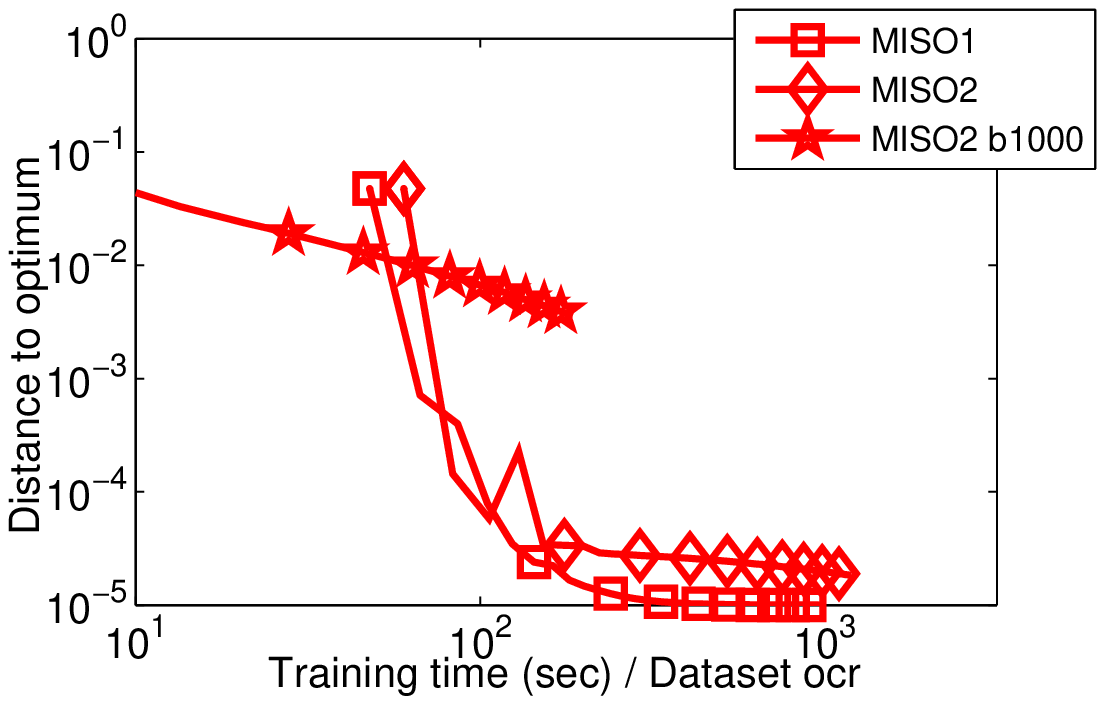}
\myvspace{0.2}
\caption{Benchmarks for $\ell_1$-logistic regression. The regularization parameter $\lambda$ was chosen to obtain a solution with about $50\%$ nonzero coefficients.}
\label{fig:exp4}
\end{figure}

\bigskip 

\bibliographysup{abbrev,main}
\bibliographystylesup{icml2013}

\end{document}